\newcommand{\Mc}{{\cal M}}
\newtheorem{example}{Example}
\newtheorem{definition}{Definition}
\newtheorem{lemma}{Lemma}
\newtheorem{theo}{Theorem}
\newtheorem{fact}{Fact}
\newtheorem{corol}{Corollary}
\newtheorem{notation}{Notation}
\renewcommand{\arg}{{\rm Arg}}
\newcommand{\lr}{{\bf L}_{\mathcal{R}}}
\newcommand{\jesse}[1]{\textcolor{red}{#1}}
\newcommand{\jessestr}[2][]{\textcolor{red}{\sout{#2}}}
\newcommand{\chr}[2][]{\textcolor{blue}{#2}}
\newcommand{\crem}[2][]{\textcolor{blue}{\sout{#2}}}
\newcommand{\tolstrut}{%
  \vrule height\dimexpr\fontcharht\font`\A+.1ex\relax width 0pt\relax
}
\DeclareRobustCommand{\textoverline}[1]{%
  \ensuremath{\overline{\mbox{\tolstrut#1}}}%
}
\renewcommand{\chr}[2][]{#2} 
\renewcommand{\crem}[2][]{} 
\renewcommand{\jesse}[2][]{#2} 
\renewcommand{\jessestr}[2][]{} 
\def\addlegendimage{\csname pgfplots@addlegendimage\endcsname}
\pgfplotsset{
every legend to name picture/.style={west}
}
\newcommand{\begprflst}{\begin{itemize}} %%%% These two commands allow you to make proofs inline (use: inparaitem) or itemized (use: itemize).
\newcommand{\enprflst}{\end{itemize}}
\renewenvironment{proof}[1][\proofname]{\par
  \vspace{-\topsep}% remove the space after the theorem
  \pushQED{\qed}%
  \normalfont
  \topsep0pt \partopsep0pt % no space before
  \trivlist
  \item[\hskip\labelsep
        \itshape
    #1\@addpunct{.}]\ignorespaces
}{%
  \popQED\endtrivlist\@endpefalse
  \addvspace{6pt} % some space after
}
\title{Relations between assumption-based approaches in nonmonotonic logic and formal argumentation\thanks{The research of the authors was supported by a Sofja Kovalevkaja award of the Alexander von Humboldt-Foundation, funded by the German Ministry for Education and Research.}}
\author{Jesse Heyninck \and Christian Stra{\ss}er\\
Institute of Philosophy II, Ruhr Universit\"at Bochum\\
Universit\"atstra{\ss}e 150\\
44 800 Bochum, Germany\\
}
\begin{document}
\maketitle

\begin{abstract}
\begin{quote}
In this paper we make a contribution to the unification of formal models of defeasible reasoning. We present several translations between formal argumentation frameworks and nonmonotonic logics for reasoning with plausible assumptions. More specifically, we translate adaptive logics into assumption-based argumentation and ASPIC$^+$, ASPIC$^+$ into assumption-based argumentation and a fragment of assumption-based argumentation into adaptive logics. Adaptive logics are closely related to Makinson's default assumptions and \jesse{to a significant class of systems within the tradition of preferential semantics} in the vein of KLM and Shoham. Thus, our results also provide close links between formal argumentation and the latter approaches.
% As a consequence of the adaptive logic translation, we also prove the equivalence between argumentation-based approaches to nonmonotonic reasoning and Mackinson's default assumptions and a large class of KLM preferential semantics.
\end{quote}
\end{abstract}

% \subsection{Workplan Jesse}
% \begin{itemize}
% \item $[\quad]$ KLM semantics?
% \item $[\quad]$ say something about where logical form matters in AL
% \end{itemize}
% Notation:
% \begin{tabular}{l | l}
% $\mathcal{A}$ & Total Set of Arguments\\
% $\mathcal{B},\mathcal{C},\mathcal{D},\ldots$ & Subsets of $\mathcal{A}$\\
% $a,b,c,d,\ldots$ & Arguments in a given AF \\
% $A,B,C,D$ &WFF's\\
% $p,q,r,s,\ldots$ & atoms\\
% $\Delta,\Theta,\Xi$ & sets of formula's\\
% $\Delta^{\lnot}$ & $=\{\lnot A\mid A\in \Delta\}$
% \end{tabular}

\section{Introduction}
There is a a plenitude of logical approaches to the modelling of defeasible reasoning known as nonmonotonic logics (in short, NMLs). These approaches often use different methods, representational formats or key ideas, making it sometimes difficult to compare them, e.g.\ with respect to the consequence relations they give rise to. Such comparisons are important to systematise the field of NMLs and to gain insights into which forms of defeasible reasoning are expressible in which formal frameworks. An important tool for such comparisons are translations between systems of NML. If one system (or a fragment thereof) is translatable into another system we immediately know that the latter system is at least as expressive as the former. Moreover, this may lead to forms of cross-fertilisation, since meta-theoretic properties become transferable between the translated systems. 

In this contribution we will investigate several such translations. Given the richness of the domain of NMLs, we approach the topic from a specific angle. Our focus will be on structured argumentation, on the one hand, and NMLs that model defeasible inferences in terms of strict inference rules and defeasible assumptions, on the other hand. As a side product, the translation will also cover a significant subclass of NMLs in the KLM paradigm based on preferential semantics \cite{shoham1987,kraus1990}.

At least since Dung introduced abstract argumentation \cite{Dung1995}, formal argumentation has been an important sub-domain of NML. While in abstract argumentation arguments are not phrased in a formal language and the underlying inferences are not explicated, several systems of structured or instantiated formal argumentation have been developed which overcome this limitation (cf.\ \cite{besnard2014introduction} for a partial overview). In this paper we will focus on two of the most prominent accounts: assumption-based argumentation (in short, ABA) \cite{Bondarenko1997,dung2009,Toni2014} and ASPIC$^+$ \cite{Prakken2010,modgil2014}. 

One of the key differences between several formal approaches to defeasible reasoning concerns the question of how to model defeasible inferences. Let $A_1, \ldots, A_n \leadsto B$ denote the defeasible inference from $A_1, \ldots, A_n$ to $B$. The question is whether such an inference should be phrased in terms of a strict inference rule or a defeasible one. A strict inference rule allows for no exceptions: if its premises $A_1, \ldots, A_n$ are true, the consequent $B$ is true as well. In contrast, defeasible rules allow for exceptions, that is, under specific circumstances it may hold that all premises $A_1, \ldots, A_n$ of the rule hold while the consequent $B$ doesn't. Clearly, in the approach with strict rules defeasibility has to enter in a different way. One way is by means of explicitly stated defeasible assumptions ${\rm As}_1, \ldots, {\rm As}_m$, i.e., specific premises which are assumed to hold by default and which can serve as antecedents of strict rules. An inference is retracted in case there is a demonstration that one of the defeasible assumptions ${\rm As}_1, \ldots, {\rm As}_m$ doesn't hold. 

ABA follows the approach based on strict rules and defeasible assumptions. In ASPIC$^+$ both approaches can be represented. Not surprisingly, ABA has been shown to be translatable to ASPIC$^+$ \cite{Prakken2010}. In this paper we will show the other (perhaps more surprising) direction, namely that  ASPIC$^+$ (without priorities) can be translated into ABA and thus that both frameworks are equi-expressive. 

There are several nonmonotonic systems that model defeasible inference by means of strict rules. Among them are adaptive logics (in short, ALs) \cite{universal}, Makinsons' default assumptions and forms of circumscription. Makinson's default assumptions --and in view of the translation in \cite{Vandeputte2013} also ALs-- are a generalisations of approaches based on maximal consistent subsets \cite{rescher1970}. In view of \cite{Amgoud&Besnard:logicallimitsofabstractargumentationframeworks} we know that there are close connections between approaches based on maximal consistent subsets and structured argumentation. In this paper the ties will be strengthened. We show that ALs can be translated into ABA and ASPIC$^+$ and \jesse{present a translation in the opposite direction for a subclass of ABA and ASPIC$^+$.}\jessestr{, for a subclass of ABA and ASPIC$^+$, \chr[2016-03-30]{we also present a translation in the opposite direction}.}% \footnote{We will lift these restrictions in a follow-up paper for a generalisation of ALs presented in \cite{strasser:ALDR}.} 

We will proceed as follows: in Sections \ref{sec:adaptive-logics}--\ref{sec:aspic+} we introduce the basic systems (ALs, preferential semantics, default assumptions, ABA, and ASPIC$^+$). In Sections \ref{sec:transl-adapt-logic}--\ref{sec:transl-assumpt-based} we provide translations as indicated in Figure \ref{fig:roadmap}.
\begin{figure}
  \centering
\tikzset{
    %Define standard arrow tip
    >=stealth',
    %Define style for boxes
    punkt/.style={
           rectangle,
           rounded corners,
           draw=black, thick,
           text width=6.5em,
           minimum height=2em,
           text centered},
    % Define arrow style
    pil/.style={
           ->,
           thick,
           shorten <=2pt,
           shorten >=2pt,}
}

\begin{tikzpicture}[node distance=1cm, auto]
  \node[punkt] (ABA) {ABA (\S\ref{sec:assumpt-based-argum})};
  \node[below=.8cm of ABA] (dummy1) {};
  \node[punkt, below=.8cm of dummy1] (ASPIC) {ASPIC$^+$ (\S\ref{sec:aspic+})}
  edge[pil,<-,bend left=40] node[auto] {\small\tabular{c}Prakken\\(2010)\endtabular} (ABA.220)
  edge[pil] node[auto] {\S\ref{sec:transl-aspic+-assump}} (ABA.south);
  \node[punkt,right=.5cm of dummy1] (ALs) {ALs (\S\ref{sec:adaptive-logics})}
  edge[pil,bend left=20] node[auto] {\S\ref{sec:transl-adapt-logic}} (ABA.-25)
  edge[pil,<-,bend right=20] node[auto] {\S\ref{sec:transl-assumpt-based}} (ABA.10);
  % edge[pil] node[auto] {\S\ref{sec:transl-adapt-logic-1}} (ASPIC.east);
  \node[right=.5cm of ABA] (dummy2) {};
  \node[right=.5cm of ASPIC] (dummy3) {};
  \node[punkt, right=.5cm of dummy2] (KLM) {KLM (\S\ref{sec:KLM:Mak})}
  edge[pil,<->] node[auto] {\S\ref{sec:KLM:Mak}} (ALs.10);
  \node[punkt, right=.5cm of dummy3] (DACR) {DACR (\S\ref{sec:KLM:Mak})}
  edge[pil,<->]  (ALs.east);
  \node[above=.05cm of DACR] {\small ~~~~~~~~~~~~ \tabular{c}Van de Putte\\(2013)\endtabular};
\end{tikzpicture}

%%% Local Variables:
%%% mode: latex
%%% TeX-master: "KR"
%%% End:
  \caption{Roadmap}
  \label{fig:roadmap}
\end{figure}

% Uncertainty in reasoning can have many sources. 
% One way to make distinctions is by using the distinction between defeasible and plausible reasoning. Plausible reasoning is risky because the premises that are used as justification of the inference are uncertain. Defeasible reasoning, on the other hand, makes risky inferences from certain premises.

% \jesse{distinction between abstract and instantiated perspectives on argumentation?}

% The connection between maximally consistent subsets and variations of complete semantics has been shown in \cite{Amgoud&Besnard:logicallimitsofabstractargumentationframeworks} and \cite{Cayrol:ontherelationbetweenargumentationandnonmonotoniccoherencebasedentailment}. On the other hand, \cite{makinson2005bridges} showed the relation between reasoning with plausible assumptions and maximally consistent subsets. This paper can thus been seen as closing the circle. 

% \subsection{Motivation}

%%% Local Variables:
%%% mode: latex
%%% TeX-master: "KR"
%%% End:

\section{Adaptive Logics}
\label{sec:adaptive-logics}
ALs are a general framework for the formal explication of defeasible reasoning. It has been applied to a multitude of defeasible reasoning forms \jesse{(mainly related to questions from philosophical logic)}, such as nonmonotonic forms of reasoning with inconsistent information, causal discovery, inductive generalisations, abductive hypothesis generation, normative reasoning, etc. (see \cite[p.86]{strasser:ALDR} \chr[2016-03-31]{for an overview}). 
 
The driving idea behind ALs is to apply defeasible inference rules under explicit normality assumptions. More specifically, given a compact Tarksi logic $\mathbf{L}$ (the \emph{core} or \emph{lower limit logic}) in a formal language $\mathcal{L}$ and with the derivability relation $\vdash_{\mathbf{L}}$, a set of \emph{abnormalities} $\Omega \subseteq \mathcal{L}$ is fixed. Now, whenever the core logic gives rise to $\Gamma \vdash_{\mathbf{L}} A \vee {\sf ab}$ where ${\sf ab} \in \Omega$, $A$ can be derived in the adaptive logic (based on $\mathbf{L}$ and $\Omega$) on the (defeasible) assumption that ${\sf ab}$ is false.\footnote{The disjunction $\vee$ is supposed to be classical. In fact, in the standard format of ALs which we consider here, the core logic is supposed to be supraclassical. Whenever non-classical logics are used as core logics, classical negation $\neg$ and classical disjunction $\vee$ are superimposed.}

\chr[2016-03-31]{In ALs this basic idea of modeling defeasible inferences is implemented in Hilbert-style proofs. We will first explain the proof theory of ALs.\footnote{Due to spatial restrictions we will focus on the main ideas but explain some aspects of the proof theory (such as adaptive strategies) merely in a semi-formal way. For a more thorough explanation the interested reader is referred to \cite{universal,strasser:ALDR}.} Then we give alternative characterizations of the adaptive consequence relations that are central to prove the adequacy of our translations in subsequent sections.} 

\chr[2016-03-31]{In ALs, usual Hilbert-style proofs are adjusted in two major ways. First,} to keep track of normality assumptions, proof lines in adaptive proofs are equipped with an additional column in which the abnormalities are listed that are assumed to be false. \chr[2016-03-31]{Second,} different \emph{retraction mechanisms} for lines with abnormality assumptions that turn out mistaken are implemented in terms of so-called \emph{adaptive strategies}. We will give some examples below. 

% The proof theory of adaptive logic can be explained in a nutshell as follows. There are three kinds of rules that can be applied: premise introduction and all the rules valid in the \lll can be used without qualification (i.e.\ unconditionally) in the adaptive logic. Additionally, some rules can be applied under the assumption that the abnormalities involved in it are false. Whenever such an assumption turns out to be false, the applications of rules using this assumption are retracted.
\chr[2016-03-31]{To further explain how adaptive proofs work, it is useful to turn to a concrete example. As an illustration,} we take a look at inconsistency-ALs. These are based on paraconsistent core logics such as \textbf{LP} or \textbf{CLuN(s)}\footnote{\textbf{CLuN(s)} is positive classical logic enriched by the law of the excluded middle. For an axiomatization and a semantics see \cite{batens1999inconsistency}.}. These core logics typically do not validate disjunctive syllogism \(A, {\sim} A \vee B \vdash B\) since in case\jessestr{there is a contradiction in} \(A\) \jesse{is involved in a contradiction}, \(B\) would not follow (then \({\sim} A\) would suffice for the disjunction \({\sim} A \vee B\) to be true). Nevertheless, inconsistency-ALs allow for the defeasible application of disjunctive syllogism under the \emph{normality assumption} that there is no contradiction in \(A\). % Each adaptive logic comes with a set of so-called \emph{abnormalities} \(\Omega\) and with a Tarskian compact \emph{core logic} \textbf{L}.
Hence, in inconsistency ALs the abnormalities in $\Omega$ typically have the form of a contradiction $A \wedge {\sim} A$.
% The set \(\Omega\) is characterised in terms of a logical form, such as \(A \wedge \neg A\) in inconsistency-adaptive logics.
% Whenever it is possible to derive \(B \vee {\sf ab}\) where \({\sf ab} \in \Omega\), it is possible in adaptive proofs to defeasibly derive \(B\) under the assumption that \({\sf ab}\) does not hold.
E.g., in paraconsistent core logics it usually holds that \(A, {\sim} A \vee B \vdash B \vee (A \wedge {\sim} A)\) and thus one can defeasibly derive \(B\) under the assumption that there is no contradiction in \(A\). Clearly, sometimes such assumptions turn out to be mistaken in view of the given premises. Obviously, this is the case \chr[2016-03-31]{if} \(A \wedge {\sim} A\) is derivable from the given premises. A more interesting case is given, if \(A \wedge {\sim} A\) is not directly derivable but \chr[2016-03-31]{it is derivable} as a member of a minimal disjunction of abnormalities. We illustrate this in the following example.
\begin{example}\label{ex1}
Suppose our core logic is a standard paraconsistent logic such as $\mathbf{LP}$ or $\mathbf{CLuN(s)}$. Let
$\Gamma=\{{\sim}p,{\sim}q,  p\lor q, p\lor r, q\lor s\}$.
\end{example}

\begin{tabular}{r l l l}
 1  & ${\sim}p$                        & PREM     & $\emptyset$                \\
 2  & ${\sim}q$                        & PREM     & $\emptyset$                \\
 3  & $p\lor r$                     & PREM     & $\emptyset$                \\
 4  & $q\lor s$                     & PREM     & $\emptyset$                \\
 5  & $p \lor q$      & PREM     & $\emptyset$        \\
 6  & $r \lor (p\land {\sim} p)$           & 1,3,${\bf L}$-Inf     & $\emptyset$        \\
 7  & $s \lor (q\land {\sim} q)$            & 2,4,${\bf L}$-Inf     & $\emptyset$        \\
 8  & $r$                        & 6,RC    & $\{p\land {\sim} p\}$       \\
 9  & $s$                        & 7,RC     & $\{q\land {\sim} q\}$       \\ 
 10 & $r \vee s$                 & 8,${\bf L}$-Inf   & $\{p \wedge {\sim} p\}$\\
 11 & $r \vee s$                 & 9,${\bf L}$-Inf   & $\{q \wedge {\sim}q\}$ \\
 12 & $(p\land {\sim} p)\lor (q\land {\sim} q)$ & 1,2,5,${\bf L}$-Inf & $\emptyset$
\end{tabular} 

Each proof line has 4 elements: a line number, a formula, a justification and a set of abnormalities (which are assumed to be false). All inferences of the core  logic $\mathbf{L}$ can be applied (indicated by $\mathbf{L}$-Inf in lines 6, 10, 11 and 12). In lines 8 and 9 defeasible inferences are made as explained above. E.g., since \jesse{at} line 6 $r \vee (p \wedge {\sim}p)$ is derived, at line 8 the abnormality $p \wedge {\sim}p$ is considered false and thus put in the abnormality column. The rule employed for this is called RC (rule conditional): from $(l; A \vee {\sf ab}; \Delta)$ derive $(l'; A; l,RC; \Delta \cup \{{\sf ab}\})$. When further inferences are made calling upon lines with non-empty sets of abnormalities, these abnormalities are carried over (see lines 10 and 11 \chr[2016-03-31]{where the abnormalities of lines 8 and 9 are carried over}). 

The retraction of lines in adaptive proofs is always \chr[2016-03-31]{determined in view of} the minimal disjunctions of abnormalities derived at a given stage of a proof (on the empty set of abnormalities). At line 12 such a minimal disjunction of abnormalities is derived. Clearly, the abnormalities assumed to be false at lines 8--11 are involved in the given disjunction. \chr[2016-03-31]{There are different retraction mechanisms for ALs: so-called adaptive \emph{strategies}.} According to the \emph{reliability} strategy, any line with an abnormality in the assumption that is part of a minimal disjunction of abnormalities is to be retracted. Retraction is implemented by marking lines that are retracted. In this case: \medskip

\begin{tabular}{r r l l l}
$\checkmark$ & 8  & $r$                        & 6,RC    & $\{p\land {\sim} p\}$       \\
$\checkmark$ & 9  & $s$                        & 7,RC     & $\{q\land {\sim} q\}$       \\ 
$\checkmark$ & 10 & $r \vee s$                 & 8,${\bf L}$-Inf   & $\{p \wedge {\sim} p\}$\\
$\checkmark$ & 11 & $r \vee s$                 & 9,${\bf L}$-Inf   & $\{q \wedge {\sim}q\}$ \\
 % & 12 & $(p\land \lnot p)\lor (q\land {\sim} q)$ & 1,2,5,${\bf L}$-Inf & $\emptyset$
\end{tabular} \medskip

There are other, less cautious, strategies. For instance, according to the \emph{minimal abnormality strategy}, $r \vee s$ will not be retracted. The reason is as follows. If we interpret our premises strictly \emph{as normal as possible}, then in view of line 12 it will be the case that either $p \wedge {\sim} p$ holds (and $q \wedge {\sim}q$ doesn't), or $q \wedge {\sim}q$ holds (and $p \wedge {\sim}p$ doesn't). In each case, one of the assumptions of line 10 or 11 is warranted. Due to space limitations, we omit the technical details. Yet another strategy is \emph{normal selections}. According to it a line with the set of abnormalities $\Delta$ is retracted (or marked) once $\bigvee \Delta$ is derived on the empty condition.

These retraction mechanisms provided by adaptive strategies make AL proofs dynamic: sometimes a line may get marked, later unmarked, and yet later marked again. In order to define a \emph{consequence relation} we need a stable notion of derivability. It works as follows: a formula at a line $l$ of a proof is \emph{finally derived} at a stage of the proof if $l$ is not marked and every extension of the proof in which it gets marked can be further extended such that it is unmarked again. The consequence relation of ALs is the defined as follows:

\begin{definition}\label{def:cr:al}
  Let  $\mathbf{L}$ be a compact Tarski logic in the formal language $\mathcal{L}$, let $\Omega \subseteq \mathcal{L}$ be a set of abnormalities, and let ${\sf str} \in \{{\sf r}, {\sf ma}, {\sf ns}\}$ be an adaptive strategy (reliability, minimal abnormality, or normal selections). Where $\Gamma \cup \{A\} \subseteq \mathcal{L}$, $\Gamma \vdash_{{\sf str}}^{\Omega,\mathbf{L}} A$ iff $A$ is finally derivable in an adaptive proof from $\Gamma$. 
\end{definition}

For our translations alternative characterisations of the consequence relations defined in terms of final derivability in Definition \ref{def:cr:al} will be very useful. These characterisations are essentially informed by the set of minimal disjunctions of abnormalities derivable from a given premise set by the core logic $\mathbf{L}$.

\begin{definition}\label{sigmadab}
Where \(\Gamma \subseteq \mathcal{L}\): \(\Sigma_{\mathbf{L}}(\Gamma)\) is the set of all non-empty \(\Delta \subseteq \Omega\) such that \(\Gamma \vdash_{\mathbf{L}} \bigvee \Delta\) and for all non-empty \(\Delta' \subset \Delta\), \(\Gamma \nvdash_{\mathbf{L}} \bigvee \Delta'\).
\end{definition}

A choice set over \(\Sigma_{\mathbf{L}}(\Gamma)\) is a set \(\Theta\) for which \(\Delta \cap \Theta \neq \emptyset\) for all \(\Delta \in \Sigma_{\mathbf{L}}(\Gamma)\). 

\begin{definition}
Where \(\Gamma \subseteq \mathcal{L}\): \(\Phi_{\mathbf{L}}(\Gamma)\) is the set of \(\subset\)-minimal choice sets over \(\Sigma_{\mathbf{L}}(\Gamma)\).
\end{definition}

The following facts will be useful in what follows:
\begin{fact}[\cite{strasser:ALDR}]\label{choicesetfundamental}
1. For all choice sets \(\Theta\) over \(\Sigma_{\mathbf{L}}(\Gamma)\) there is a \(\Theta' \in \Phi_{\mathbf{L}}(\Gamma)\) such that \(\Theta' \subseteq \Theta\).

2. \(\phi\in \Phi_{\mathbf{L}}(\Gamma)\) iff \(\phi\) is a choice set of \(\Sigma_{\mathbf{L}}(\Gamma)\) and for all \(A\in \phi\) there is a \(\Delta_A \in \Sigma_{\mathbf{L}}(\Gamma)\) for which \(\{A\} = \Delta_A \cap \phi\).
\end{fact}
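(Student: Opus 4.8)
The plan is to treat the two parts separately, since they rest on quite different ideas: Part~1 is an order-theoretic existence argument, whereas Part~2 is a purely combinatorial characterisation of minimality for hitting sets.

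For Part~1, the crucial preliminary observation I would make is that every $\Delta \in \Sigma_{\mathbf{L}}(\Gamma)$ is \emph{finite}: by definition $\Gamma \vdash_{\mathbf{L}} \bigvee \Delta$, and $\bigvee \Delta$ is a single formula of $\mathcal{L}$, so the disjunction ranges over only finitely many abnormalities. Given this, I would fix an arbitrary choice set $\Theta$ and apply Zorn's Lemma to the poset of all choice sets contained in $\Theta$, ordered by $\subseteq$, searching for a $\subseteq$-minimal element. The one nontrivial hypothesis to discharge is that every descending chain $\{\Theta_i\}_{i}$ of such choice sets has a lower bound that is again a choice set; the natural candidate is $\Theta_* = \bigcap_i \Theta_i$. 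To see that $\Theta_*$ hits every $\Delta \in \Sigma_{\mathbf{L}}(\Gamma)$, note that $\{\Theta_i \cap \Delta\}_i$ is a chain of nonempty subsets of the \emph{finite} set $\Delta$; a descending chain in the finite power set of $\Delta$ stabilises, so its intersection $\Theta_* \cap \Delta$ equals its least member and is therefore nonempty. Zorn's Lemma then yields a choice set $\Theta' \subseteq \Theta$ that is minimal among choice sets contained in $\Theta$, and a one-line argument upgrades this to $\subseteq$-minimality among \emph{all} choice sets (any strictly smaller choice set would again be contained in $\Theta$), giving $\Theta' \in \Phi_{\mathbf{L}}(\Gamma)$.

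For Part~2, I would prove both implications directly. For the left-to-right direction, assume $\phi \in \Phi_{\mathbf{L}}(\Gamma)$ and fix $A \in \phi$; by minimality $\phi \setminus \{A\}$ is not a choice set, so some $\Delta_A \in \Sigma_{\mathbf{L}}(\Gamma)$ satisfies $(\phi \setminus \{A\}) \cap \Delta_A = \emptyset$, while $\phi \cap \Delta_A \neq \emptyset$ since $\phi$ is a choice set; combining these forces $\phi \cap \Delta_A = \{A\}$, as required. For the right-to-left direction, assume $\phi$ is a choice set with the stated property and suppose towards a contradiction that some choice set $\psi \subsetneq \phi$ exists; picking $A \in \phi \setminus \psi$ with its witness $\Delta_A$ satisfying $\Delta_A \cap \phi = \{A\}$, we get $\Delta_A \cap \psi \subseteq \Delta_A \cap \phi = \{A\}$, and since $A \notin \psi$ this intersection is empty, contradicting that $\psi$ is a choice set. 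Hence $\phi$ is minimal.

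I expect the only genuine obstacle to lie in Part~1: one must resist assuming finiteness of $\Omega$ or of $\Sigma_{\mathbf{L}}(\Gamma)$ (both of which may be infinite, so choice sets may be infinite and a naive greedy-removal argument fails), and instead pin down exactly where finiteness really holds, namely at the level of the individual disjunctions $\Delta$, which is what licenses the chain-intersection step underpinning Zorn's Lemma. Part~2, by contrast, is routine hitting-set bookkeeping and requires no finiteness assumption whatsoever.
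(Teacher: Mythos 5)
Your proof is correct. The paper itself gives no argument for this statement --- it is imported as a Fact from the cited reference \cite{strasser:ALDR} --- so there is no in-paper proof to compare against; your write-up supplies the standard argument. You correctly identify the one load-bearing observation for Part~1, namely that each $\Delta \in \Sigma_{\mathbf{L}}(\Gamma)$ is finite because $\bigvee\Delta$ must be a single formula (while $\Omega$, $\Sigma_{\mathbf{L}}(\Gamma)$ and the choice sets themselves may all be infinite), and this is exactly what makes the chain-intersection step in the Zorn argument go through: for a chain $\{\Theta_i\}$ the induced chain $\{\Theta_i \cap \Delta\}$ lives in the finite power set of $\Delta$, hence stabilises at a nonempty least member. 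The upgrade from ``minimal among choice sets inside $\Theta$'' to ``minimal among all choice sets'' and the two directions of the hitting-set characterisation in Part~2 are all handled correctly.
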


We now give representation theorems for all three adaptive strategies, a given core logic $\mathbf{L}$ and a given set of abnormalities $\Omega$.

\begin{theo}[\cite{universal}]
\label{ma}
\(\Gamma \vdash^{\Omega,{\bf L}}_{\sf ma} A\) iff for all \(\Theta \in \Phi_{\mathbf{L}}(\Gamma)\) there is a \(\Delta\subseteq\Omega\setminus \Theta\) such that \(\Gamma \vdash_{\bf L} A\lor \bigvee\Delta\).
\end{theo}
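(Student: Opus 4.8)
The plan is to reduce the claim, which is phrased via the dynamic notion of final derivability (Definition \ref{def:cr:al}), to a \emph{static} condition on the minimal choice sets $\Phi_{\mathbf{L}}(\Gamma)$, and then read off the right-hand side directly. Two observations about adaptive proof lines drive everything. First, a \emph{soundness} property: whenever a line carrying formula $B$ on condition $\Theta$ occurs in a proof from $\Gamma$, we have $\Gamma \vdash_{\mathbf{L}} B \lor \bigvee \Theta$; this is immediate by induction on the proof, since PREM and $\mathbf{L}$-Inf only apply $\mathbf{L}$-valid steps and RC merely moves a disjunct ${\sf ab}$ from the formula into the condition. Second, a \emph{converse}: if $\Gamma \vdash_{\mathbf{L}} B \lor \bigvee \Theta$ with finite $\Theta \subseteq \Omega$, any proof can be extended by deriving $B \lor \bigvee \Theta$ through $\mathbf{L}$-Inf on the empty condition and then peeling off the members of $\Theta$ by successive applications of RC, ending in $B$ on condition $\Theta$. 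Thus ``$A$ is available on some condition $\Theta$ disjoint from $\varphi$'' is interchangeable with ``there is $\Theta \subseteq \Omega \setminus \varphi$ with $\Gamma \vdash_{\mathbf{L}} A \lor \bigvee \Theta$'', which is exactly the right-hand side of the theorem quantified over $\varphi \in \Phi_{\mathbf{L}}(\Gamma)$.

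With this dictionary, the core lemma I would establish is that final ${\sf ma}$-derivability coincides with the static condition just described. I would work from the ${\sf ma}$-marking rule, under which a line carrying $A$ on condition $\Delta$ is unmarked at a stage $s$ exactly when (a) some stage-$s$ minimal choice set $\varphi \in \Phi_s(\Gamma)$ is disjoint from $\Delta$, and (b) for \emph{every} $\varphi \in \Phi_s(\Gamma)$ there is a line carrying $A$ on a condition disjoint from $\varphi$. The stability required by final derivability is governed by the convergence of the stage-relative data $\Sigma_s(\Gamma), \Phi_s(\Gamma)$ to $\Sigma_{\mathbf{L}}(\Gamma), \Phi_{\mathbf{L}}(\Gamma)$; here compactness of $\mathbf{L}$ ensures that each (necessarily finite) $\Delta \in \Sigma_{\mathbf{L}}(\Gamma)$ is derivable at some finite stage, so that every $\delta \in \Sigma_s(\Gamma)$ contains a genuine minimal subset in $\Sigma_{\mathbf{L}}(\Gamma)$, and Fact \ref{choicesetfundamental} controls how choice sets behave under this growth.

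For ($\Leftarrow$) I would argue constructively: assuming the condition, fix for each $\varphi \in \Phi_{\mathbf{L}}(\Gamma)$ a witness $\Theta_\varphi \subseteq \Omega \setminus \varphi$ with $\Gamma \vdash_{\mathbf{L}} A \lor \bigvee \Theta_\varphi$, and use the converse above to insert the lines deriving $A$ on condition $\Theta_\varphi$. I then show $A$ is finally derivable by verifying that in any extension in which $A$ is momentarily marked, the offending stage choice set can be neutralised by deriving finitely many further minimal disjunctions of abnormalities, after which clause (b) is met for every genuine $\varphi \in \Phi_{\mathbf{L}}(\Gamma)$ (Fact \ref{choicesetfundamental}.1 is used to pass from arbitrary choice sets to minimal ones). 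For ($\Rightarrow$) I would argue contrapositively: if the condition fails for some $\Theta_0 \in \Phi_{\mathbf{L}}(\Gamma)$, then by the soundness property \emph{no} line can ever carry $A$ on a condition $\Theta$ disjoint from $\Theta_0$, since such a $\Theta$ would be a subset of $\Omega \setminus \Theta_0$ with $\Gamma \vdash_{\mathbf{L}} A \lor \bigvee \Theta$, contrary to assumption. Extending any proof until $\Theta_0$ appears in $\Phi_s(\Gamma)$ then forces $A$ to be marked through clause (b), and it can never be unmarked in any further extension, so $A$ is not finally derivable.

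The main obstacle I anticipate is not either implication in isolation but the handling of the proof dynamics in the limit: clauses (a)--(b) are relative to the \emph{stage-dependent} sets $\Phi_s(\Gamma)$, whereas the theorem speaks of the \emph{actual} $\Phi_{\mathbf{L}}(\Gamma)$, and these can diverge at any finite stage (a not-yet-refuted disjunction may admit a stage choice set smaller than any genuine one, or may hide a genuine minimal choice set). The delicate point is therefore to show that a single $\Theta_0 \in \Phi_{\mathbf{L}}(\Gamma)$ can be made to appear \emph{stably} in $\Phi_s(\Gamma)$; this is precisely where compactness of $\mathbf{L}$ together with the witness characterisation of minimal choice sets in Fact \ref{choicesetfundamental}.2 (each $a \in \Theta_0$ is the unique element of $\Theta_0$ in some $\Delta_a \in \Sigma_{\mathbf{L}}(\Gamma)$) are essential, since deriving the relevant $\Delta_a$ pins $\Theta_0$ down as a minimal choice set. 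As a cross-check I would keep in reserve a purely semantic route that bypasses the proof dynamics: invoke the soundness and completeness of the ${\sf ma}$-strategy with respect to the minimally abnormal $\mathbf{L}$-models (from \cite{universal}) and verify that the abnormality parts of those models realise exactly the stated condition.
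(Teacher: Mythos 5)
You should first note that the paper does not prove Theorem \ref{ma} at all: it is imported from \cite{universal} as a known representation theorem, so there is no in-paper argument to measure your attempt against. Your overall architecture is the standard one for this result: the two ``dictionary'' observations (soundness of conditional lines, i.e.\ a line carrying $B$ on condition $\Theta$ yields $\Gamma \vdash_{\mathbf{L}} B \lor \bigvee\Theta$, and conversely the insertability of a line carrying $B$ on condition $\Theta$ whenever $\Gamma \vdash_{\mathbf{L}} B \lor \bigvee\Theta$) are correct, and reducing final ${\sf ma}$-derivability to a static condition on $\Phi_{\mathbf{L}}(\Gamma)$ is exactly the right target.

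There is, however, a genuine gap in the $(\Rightarrow)$ direction, and a related finiteness problem throughout. First, both $\Phi_{\mathbf{L}}(\Gamma)$ and an individual $\Theta_0 \in \Phi_{\mathbf{L}}(\Gamma)$ may be infinite, so ``insert a witness line for each $\varphi \in \Phi_{\mathbf{L}}(\Gamma)$'' and ``extend the proof until $\Theta_0$ appears in $\Phi_s(\Gamma)$'' cannot be taken literally; every argument must be localized to the finitely many conditions occurring at a finite stage (your appeal to Fact \ref{choicesetfundamental}.2 is the right tool for this, but the localization is not carried out). Second, and more seriously: to refute final derivability you must exhibit an extension in which the $A$-line is marked \emph{and remains marked in every further extension} --- extensions you do not control. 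Your construction produces a blocking stage choice set $\varphi_s \subseteq \Theta_0$ by deriving the witnesses $\Delta_a$ for the finitely many $a \in \Theta_0$ occurring in conditions of current $A$-lines; but a further extension can add a new $A$-line whose condition meets $\Theta_0$ only in elements whose witnesses have not been derived, after which no member of $\Phi_{s'}(\Gamma)$ need block all $A$-lines and the line can become unmarked again. Showing that the marking can be made permanent (rather than merely achievable) is precisely the hard core of the proof in \cite{universal}, and it is handled there via the semantic detour through minimally abnormal models together with the non-trivial correspondence between $\Phi_{\mathbf{L}}(\Gamma)$ and the sets $Ab(M)$; the ``reserve'' semantic route you mention at the end is therefore not a cross-check but the actual load-bearing argument, and it itself requires that correspondence to be established rather than assumed.
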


\begin{theo}[\cite{universal}]
\label{rel}
\(\Gamma\vdash^{\Omega,{\bf L}}_{\sf r} A\) iff there is a \(\Delta\subseteq\Omega\setminus \bigcup \Sigma_{\mathbf{L}}(\Gamma)\) such that \(\Gamma\vdash_{\bf L} A\lor \bigvee\Delta\).
\end{theo}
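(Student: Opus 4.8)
The plan is to argue directly from the proof-theoretic definition of final derivability, splitting the biconditional into its two directions and supporting both with two auxiliary lemmas. The first (call it the \emph{derivability lemma}) characterizes which formula--condition pairs can appear on a line of \emph{some} stage of an adaptive proof from $\Gamma$, ignoring marking: a line carrying formula $A$ on condition $\Delta$ can be produced in some extension of any given stage iff $\Gamma \vdash_{\mathbf{L}} A \lor \bigvee \Delta$. This is just soundness and completeness for the generic rules RU (unconditional $\mathbf{L}$-inference, which unions the conditions of its premises) and RC. The forward direction is an induction on proof length using monotonicity and the Tarski properties of $\mathbf{L}$; the backward direction derives $A \lor \bigvee \Delta$ on the empty condition by RU and then strips off the abnormalities one at a time by $|\Delta|$ applications of RC, using RU to rearrange the disjunction into the shape RC requires. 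I would invoke this lemma throughout.

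The second auxiliary observation concerns the \emph{stability} of the reliability marking. Write $U(\Gamma)=\bigcup \Sigma_{\mathbf{L}}(\Gamma)$ for the set of unreliable abnormalities and, for a stage $s$, write $U_s$ for the abnormalities occurring in some $\mathsf{Dab}$-formula that is derived on the empty condition and minimal \emph{at stage $s$}; a line with condition $\Delta$ is marked at $s$ exactly when $\Delta \cap U_s \neq \emptyset$. The delicate point is that $U_s$ need not equal $U(\Gamma)$ at a finite stage, since a stage-minimal $\mathsf{Dab}$-formula may cease to be minimal once a smaller disjunction of abnormalities is derived. I would prove: (i) if $\mathsf{ab} \in U(\Gamma)$ then some genuine $\Delta^{\ast} \in \Sigma_{\mathbf{L}}(\Gamma)$ contains $\mathsf{ab}$, its $\mathsf{Dab}(\Delta^{\ast})$ is derivable on the empty condition by the derivability lemma, and, $\Delta^{\ast}$ being truly minimal, no proper subset is $\mathbf{L}$-derivable, so $\mathsf{ab}$ remains in $U_{s'}$ in \emph{every} further extension $s'$; and (ii) if $\mathsf{ab} \notin U(\Gamma)$ yet appears in some stage-minimal $\mathsf{Dab}(\Delta')$, then $\Gamma \vdash_{\mathbf{L}} \bigvee \Delta'$ forces a genuine minimal $\Delta'' \subseteq \Delta'$ with $\Delta'' \in \Sigma_{\mathbf{L}}(\Gamma)$; since $\mathsf{ab} \notin \Delta''$ we get $\Delta'' \subsetneq \Delta'$, so deriving $\mathsf{Dab}(\Delta'')$ demotes $\mathsf{Dab}(\Delta')$ and can remove $\mathsf{ab}$ from the stage-unreliable set.

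With these in hand the two directions are short. For $(\Leftarrow)$: given $\Delta \subseteq \Omega \setminus U(\Gamma)$ with $\Gamma \vdash_{\mathbf{L}} A \lor \bigvee \Delta$, the derivability lemma produces a line with $A$ on $\Delta$; since every $\mathsf{ab} \in \Delta$ is reliable, any marking of this line is caused only by spurious stage-minimal $\mathsf{Dab}$-formulas, and by (ii) finitely many RU/RC extensions deriving the genuinely smaller disjunctions unmark it, so the line is finally derivable and $\Gamma \vdash^{\Omega,\mathbf{L}}_{\sf r} A$. For $(\Rightarrow)$: if $A$ is finally derivable it occupies an unmarkable line with some condition $\Delta$, and the derivability lemma gives $\Gamma \vdash_{\mathbf{L}} A \lor \bigvee \Delta$; were some $\mathsf{ab} \in \Delta$ in $U(\Gamma)$, part (i) would let us extend the proof so that $\mathsf{ab} \in U_{s'}$ permanently, marking the line in all further extensions and contradicting final derivability, whence $\Delta \cap U(\Gamma) = \emptyset$, i.e. $\Delta \subseteq \Omega \setminus U(\Gamma)$.

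The step I expect to be the main obstacle is the stability analysis in the second observation: reconciling the stage-relative set $U_s$, which can overstate the unreliable abnormalities, with the limit set $U(\Gamma)$, and converting this into the exact quantifier alternation demanded by final derivability (``not marked, and every marking extension admits an unmarking extension''). Once that bookkeeping is pinned down, the two directions are essentially the two clauses of the final-derivability definition read off against conditions $\Delta$ satisfying $\Delta \cap U(\Gamma) = \emptyset$.
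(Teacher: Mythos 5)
The paper does not prove this statement: Theorem \ref{rel} is imported from Batens' \emph{A universal logic approach to adaptive logics} \cite{universal} and used here only as a black box (it is the form of the reliability consequence relation that the translation results in Sections \ref{sec:transl-adapt-logic} and \ref{sec:transl-assumpt-based} actually manipulate). So there is no in-paper argument to compare yours against; what you have written is a reconstruction of the standard soundness-and-completeness proof from the adaptive-logic literature, and it is essentially correct. Your derivability lemma is the usual characterisation of what can occur on a line irrespective of marking, and your split of the stability analysis into (i) genuinely unreliable abnormalities being permanently stage-unreliable once their minimal $\mathsf{Dab}$-formula is derived, and (ii) spurious stage-unreliability being removable by deriving the genuinely minimal sub-disjunctions, is exactly the right decomposition for converting the ``unmarked, and every marking extension admits an unmarking extension'' quantifier pattern into the condition $\Delta \cap \bigcup\Sigma_{\mathbf{L}}(\Gamma) = \emptyset$.

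Two small points to make explicit when writing this out. First, in step (ii) demoting a single stage-minimal $\mathsf{Dab}(\Delta')$ does not by itself remove $\mathsf{ab}$ from $U_{s'}$; you must demote \emph{every} stage-minimal $\mathsf{Dab}$-formula containing some member of the line's condition, which is fine because a finite stage contains only finitely many $\mathsf{Dab}$-formulas --- your phrase ``finitely many RU/RC extensions'' covers this, but it is the place a referee would poke. Second, part (i) silently uses the soundness half of your derivability lemma with empty condition (anything derived on $\emptyset$ is $\mathbf{L}$-derivable from $\Gamma$) to conclude that no proper subdisjunction of a genuinely minimal $\Delta^{\ast}$ can ever appear and demote it; worth stating, since that is what makes the marking permanent rather than merely present at one stage.
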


\begin{theo}[\cite{universal}]
\label{ns}
\(\Gamma\vdash^{\Omega,{\bf L}}_{\sf ns} A\) iff there is a \(\Theta \in \Phi_{\mathbf{L}}(\Gamma)\) and a \(\Delta\subseteq\Omega\setminus \Theta\) such that \(\Gamma\vdash_{\bf L} A\lor \bigvee\Delta\).
\end{theo}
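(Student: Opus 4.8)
The plan is to reduce the theorem to a purely combinatorial statement about minimal disjunctions of abnormalities and then discharge that statement using Fact~\ref{choicesetfundamental}. First I would isolate the proof-theoretic heart of the matter for the normal selections strategy: a formula $A$ is finally derivable under ${\sf ns}$ from $\Gamma$ if and only if there is some $\Delta \subseteq \Omega$ with $\Gamma \vdash_{\bf L} A \lor \bigvee \Delta$ and $\Gamma \nvdash_{\bf L} \bigvee \Delta$. Both implications here rest on two features of adaptive proofs. On the one hand, the derivability lemma for the conditional and $\mathbf{L}$-inference rules says that $A$ can be written on a line carrying the condition $\Delta$ exactly when $\Gamma \vdash_{\bf L} A \lor \bigvee \Delta$. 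On the other hand, the ${\sf ns}$ marking rule marks a line with condition $\Delta$ precisely once $\bigvee \Delta$ is derived on the empty condition, i.e.\ precisely when $\Gamma \vdash_{\bf L} \bigvee \Delta$. Since such a derivation, once available, persists in every extension of the proof, the marking behaviour of a line is stable, so a line with condition $\Delta$ is finally unmarked iff $\Gamma \nvdash_{\bf L} \bigvee \Delta$. Combining the two observations yields the displayed reduction.

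It then remains to show that for $\Delta \subseteq \Omega$ we have $\Gamma \nvdash_{\bf L} \bigvee \Delta$ iff there is a $\Theta \in \Phi_{\bf L}(\Gamma)$ with $\Delta \cap \Theta = \emptyset$, equivalently $\Delta \subseteq \Omega \setminus \Theta$. For the right-to-left direction, suppose $\Delta \cap \Theta = \emptyset$ for some choice set $\Theta$ but, for contradiction, $\Gamma \vdash_{\bf L} \bigvee \Delta$. By compactness of $\mathbf{L}$ there is a finite $\Delta_0 \subseteq \Delta$ with $\Gamma \vdash_{\bf L} \bigvee \Delta_0$, hence a $\subseteq$-minimal $\Delta' \subseteq \Delta_0$ with $\Gamma \vdash_{\bf L} \bigvee \Delta'$, which by Definition~\ref{sigmadab} lies in $\Sigma_{\bf L}(\Gamma)$. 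As $\Theta$ is a choice set, $\Delta' \cap \Theta \neq \emptyset$, contradicting $\Delta' \subseteq \Delta$ and $\Delta \cap \Theta = \emptyset$. For the left-to-right direction, assume $\Gamma \nvdash_{\bf L} \bigvee \Delta$. Then no $\Delta' \in \Sigma_{\bf L}(\Gamma)$ can be a subset of $\Delta$, since $\Gamma \vdash_{\bf L} \bigvee \Delta'$ together with $\Delta' \subseteq \Delta$ and the classicality of $\lor$ would give $\Gamma \vdash_{\bf L} \bigvee \Delta$. Hence every element of $\Sigma_{\bf L}(\Gamma)$ meets $\Omega \setminus \Delta$, i.e.\ $\Omega \setminus \Delta$ is itself a choice set over $\Sigma_{\bf L}(\Gamma)$. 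By Fact~\ref{choicesetfundamental}.1 there is then a minimal choice set $\Theta \in \Phi_{\bf L}(\Gamma)$ with $\Theta \subseteq \Omega \setminus \Delta$, so $\Delta \cap \Theta = \emptyset$ as required.

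Putting the two steps together gives the theorem: $\Gamma \vdash^{\Omega,{\bf L}}_{\sf ns} A$ iff there are $\Delta \subseteq \Omega$ and $\Theta \in \Phi_{\bf L}(\Gamma)$ with $\Delta \subseteq \Omega \setminus \Theta$ and $\Gamma \vdash_{\bf L} A \lor \bigvee \Delta$. I expect the main obstacle to lie in the first step rather than in the combinatorics: one must argue carefully that the ${\sf ns}$ marking is genuinely stable, so that the quantifier alternation in the definition of final derivability (every extension that marks a line can be extended so as to unmark it again) really does collapse to the simple non-derivability condition $\Gamma \nvdash_{\bf L} \bigvee \Delta$. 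The choice-set equivalence, by contrast, is a short consequence of compactness together with Fact~\ref{choicesetfundamental}.
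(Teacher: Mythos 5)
The paper does not prove this theorem: it is imported verbatim from Batens' \emph{A universal logic approach to adaptive logics} (the cited \cite{universal}), so there is no in-paper proof to compare against. Your argument is nevertheless correct and is essentially the standard proof from that source: the reduction of final ${\sf ns}$-derivability to ``$\Gamma \vdash_{\bf L} A \lor \bigvee\Delta$ and $\Gamma \nvdash_{\bf L} \bigvee\Delta$ for some $\Delta$'' is exactly how the marking definition for normal selections is exploited there (the key observation being, as you say, that a derivation of $\bigvee\Delta$ on the empty condition persists under proof extension, so the quantifier alternation in final derivability collapses), and your second step correctly converts $\Gamma \nvdash_{\bf L} \bigvee\Delta$ into the existence of a $\Theta \in \Phi_{\bf L}(\Gamma)$ disjoint from $\Delta$ via minimisation into $\Sigma_{\bf L}(\Gamma)$ and Fact~\ref{choicesetfundamental}.1. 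Two small points worth tightening: the conditions $\Delta$ attached to proof lines are finite by construction, so the appeal to compactness in the right-to-left direction of your combinatorial lemma is not needed (though it is harmless and makes the lemma hold for the general statement of the theorem, where $\bigvee\Delta$ only makes sense for finite $\Delta$ anyway); and you should note the degenerate case in which the minimal $\Delta'$ with $\Gamma \vdash_{\bf L} \bigvee\Delta'$ is empty, i.e.\ $\Gamma$ is $\mathbf{L}$-trivial --- Definition~\ref{sigmadab} restricts $\Sigma_{\bf L}(\Gamma)$ to non-empty sets, so this case must be handled (trivially) on the side.
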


\section{Preferential Semantics and Default Assumptions}
\label{sec:KLM:Mak}

The semantics for ALs are a special but rich subclass of the well known preferential semantics as defined in \cite{kraus1990} and \cite{shoham1987}. As in the previous section we assume a core logic $\mathbf{L}$ in a formal language $\mathcal{L}$ and a set of abnormalities $\Omega \subseteq \mathcal{L}$. We also assume that the core logic ${\bf L}$ comes with an adequate model-theoretic semantics and an associated semantic consequence relation $\Vdash_{\mathbf{L}}$. We write $\mathcal{M}(\Gamma)$ for the set of all models of a premise set $\Gamma$. Furthermore, where $M\in \mathcal{M}(\Gamma)$, $Ab(M)=\{A\in \Omega\mid M\models A\}$. A model
$M\in \mathcal{M}(\Gamma)$ is \emph{minimally abnormal} iff there is no $M'\in \mathcal{M}(\Gamma)$ for which $Ab(M') \subset Ab(M)$.
\begin{definition}
\begin{itemize}
\item $\Gamma \Vdash^{\Omega,{\bf L}}_{\sf ma} A$ iff $M\models A$ for every minimally abnormal model of $\Gamma$.
\item $\Gamma \Vdash^{\Omega,{\bf L}}_{\sf r} A$ iff $M\models A$ for every $M\in \mathcal{M}(\Gamma)$ \chr[2016-03-30]{for which all $A \in Ab(M)$} are verified in some minimally abnormal model $M'\in \mathcal{M}(\Gamma)$.
\item \(\Gamma \Vdash^{\Omega,{\bf L}}_{\sf ns} A\) iff there is a minimally abnormal model \(M \in \Mc(\Gamma)\) such that for all \(M' \in \Mc(\Gamma)\) for which \(Ab(M) = Ab(M')\), \(M' \models A\).
\end{itemize}
\end{definition}

ALs in the standard format are sound and complete w.r.t.\ these semantics (proven e.g.\ in  \cite{universal}):
\begin{theo}
Where $\Gamma \cup \{A\} \subseteq \mathcal{L}$ and ${\sf str} \in \{{\sf ma}, {\sf r}, {\sf ns}\}$, $\Gamma \Vdash^{\Omega,{\bf L}}_{\sf str} A$ iff $\Gamma \vdash^{\Omega,{\bf L}}_{\sf str} A$.
\end{theo}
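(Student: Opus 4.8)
The plan is to reduce both sides of each biconditional to conditions phrased purely in terms of $\Sigma_{\mathbf{L}}(\Gamma)$ and $\Phi_{\mathbf{L}}(\Gamma)$, and then read off the equivalences. On the syntactic side this reduction is already available: Theorems \ref{ma}, \ref{rel} and \ref{ns} characterise $\vdash^{\Omega,\mathbf{L}}_{\sf ma}$, $\vdash^{\Omega,\mathbf{L}}_{\sf r}$ and $\vdash^{\Omega,\mathbf{L}}_{\sf ns}$ by derivability of disjunctions $A\vee\bigvee\Delta$ with $\Delta$ avoiding, respectively, a minimal choice set, the union $\bigcup\Sigma_{\mathbf{L}}(\Gamma)$, or some minimal choice set. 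So the work lies entirely in matching these with the semantic clauses, and the whole theorem will follow from two bridging lemmas, both leaning on the compactness and supraclassicality of $\mathbf{L}$.

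The first and central lemma identifies minimal choice sets with abnormality parts of minimally abnormal models: $\Phi_{\mathbf{L}}(\Gamma)=\{Ab(M)\mid M$ a minimally abnormal model of $\Gamma\}$. For the inclusion from right to left, I note that if $M\in\mathcal{M}(\Gamma)$ and $\Delta\in\Sigma_{\mathbf{L}}(\Gamma)$ then $\Gamma\vdash_{\mathbf{L}}\bigvee\Delta$, so by soundness of $\mathbf{L}$, $M\models\bigvee\Delta$ and hence $Ab(M)\cap\Delta\neq\emptyset$; thus every $Ab(M)$ is a choice set over $\Sigma_{\mathbf{L}}(\Gamma)$. If $M$ is moreover minimally abnormal, Fact \ref{choicesetfundamental}(1) gives a $\Theta\in\Phi_{\mathbf{L}}(\Gamma)$ with $\Theta\subseteq Ab(M)$, and the other inclusion (established below) produces a model $M'$ with $Ab(M')=\Theta$; minimal abnormality of $M$ then forces $\Theta=Ab(M)$. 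For the left-to-right inclusion, given $\Theta\in\Phi_{\mathbf{L}}(\Gamma)$ I would show $\Gamma^{\ast}=\Gamma\cup\{\neg A\mid A\in\Omega\setminus\Theta\}$ is $\mathbf{L}$-consistent: otherwise, by compactness and the supraclassical treatment of $\neg$ and $\vee$, $\Gamma\vdash_{\mathbf{L}}\bigvee\Delta'$ for a finite $\Delta'\subseteq\Omega\setminus\Theta$, whence some member of $\Sigma_{\mathbf{L}}(\Gamma)$ is contained in $\Delta'$ and so disjoint from $\Theta$, contradicting that $\Theta$ is a choice set. Any model $M$ of $\Gamma^{\ast}$ satisfies $Ab(M)\subseteq\Theta$, hence $Ab(M)=\Theta$ since a choice set contained in a minimal choice set equals it, and $M$ is minimally abnormal because a strictly smaller abnormality part would again be a choice set properly below $\Theta$.

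The second lemma is the semantic reading of the derivability condition: for any $\Theta\subseteq\Omega$, there is a $\Delta\subseteq\Omega\setminus\Theta$ with $\Gamma\vdash_{\mathbf{L}}A\vee\bigvee\Delta$ iff $M\models A$ for every $M\in\mathcal{M}(\Gamma)$ with $Ab(M)\subseteq\Theta$. From left to right, any such $M$ falsifies every disjunct of $\bigvee\Delta$ (as $\Delta\cap\Theta=\emptyset$), hence satisfies $A$. From right to left, the hypothesis says exactly that $\Gamma\cup\{\neg A'\mid A'\in\Omega\setminus\Theta\}\Vdash_{\mathbf{L}}A$; completeness plus compactness yield a finite $\Delta\subseteq\Omega\setminus\Theta$ with $\Gamma,\{\neg A'\mid A'\in\Delta\}\vdash_{\mathbf{L}}A$, and supraclassicality turns this into $\Gamma\vdash_{\mathbf{L}}A\vee\bigvee\Delta$. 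Combining the two lemmas settles {\sf ma} and {\sf ns} directly: since every $\Theta\in\Phi_{\mathbf{L}}(\Gamma)$ is a minimal choice set, a model has $Ab(M)\subseteq\Theta$ iff $Ab(M)=\Theta$, and by the first lemma the models with $Ab(M)=\Theta$ as $\Theta$ ranges over $\Phi_{\mathbf{L}}(\Gamma)$ are precisely the minimally abnormal models; quantifying over all such $\Theta$ (for {\sf ma}) or asserting existence of one (for {\sf ns}) matches Theorems \ref{ma} and \ref{ns} with their semantic clauses.

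For {\sf r} one extra identity is needed, $\bigcup\Phi_{\mathbf{L}}(\Gamma)=\bigcup\Sigma_{\mathbf{L}}(\Gamma)$, after which Theorem \ref{rel} paired with the second lemma (taking $\Theta=\bigcup\Sigma_{\mathbf{L}}(\Gamma)=\bigcup\Phi_{\mathbf{L}}(\Gamma)$) matches the clause for $\Vdash^{\Omega,\mathbf{L}}_{\sf r}$, whose reliable models are exactly those $M$ with $Ab(M)\subseteq\bigcup\Phi_{\mathbf{L}}(\Gamma)$. The inclusion $\bigcup\Phi_{\mathbf{L}}(\Gamma)\subseteq\bigcup\Sigma_{\mathbf{L}}(\Gamma)$ is immediate from Fact \ref{choicesetfundamental}(2). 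I expect the converse, and more generally the model-existence steps, to be the main obstacle: given $B\in\Delta\in\Sigma_{\mathbf{L}}(\Gamma)$, minimality of $\Delta$ makes $\Gamma\cup\{\neg C\mid C\in\Delta\setminus\{B\}\}$ consistent, so it has a model verifying $B$ and meeting $\Delta$ only in $B$; a Zorn-plus-compactness argument then extracts from it a minimally abnormal model still verifying $B$, giving $B\in\bigcup\Phi_{\mathbf{L}}(\Gamma)$. All of these arguments rest essentially on the compactness of $\mathbf{L}$ and on the classical behaviour of $\neg$ and $\vee$ guaranteed by the standard format, which is where the real care will be required.
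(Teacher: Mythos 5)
The paper does not prove this theorem itself; it imports it from the cited literature (Batens's ``universal logic'' paper), so there is no in-paper proof to compare against. Your reconstruction is essentially the standard argument for that result: reduce the syntactic side via Theorems \ref{ma}--\ref{ns}, prove $\Phi_{\mathbf{L}}(\Gamma)=\{Ab(M)\mid M \text{ minimally abnormal}\}$, and translate ``$\Gamma\vdash_{\mathbf{L}}A\vee\bigvee\Delta$ for some $\Delta\subseteq\Omega\setminus\Theta$'' into ``$A$ holds in all models with $Ab(M)\subseteq\Theta$''. Both bridging lemmas are argued correctly (the consistency of $\Gamma\cup(\Omega\setminus\Theta)^{\neg}$ for $\Theta\in\Phi_{\mathbf{L}}(\Gamma)$, and the compactness/supraclassicality back-and-forth), and the assembly for {\sf ma}, {\sf ns} and {\sf r} is sound.

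The one genuine soft spot is your treatment of $\bigcup\Sigma_{\mathbf{L}}(\Gamma)\subseteq\bigcup\Phi_{\mathbf{L}}(\Gamma)$: the ``Zorn-plus-compactness argument'' that extracts a minimally abnormal model below a given model is the strong reassurance (smoothness) property of adaptive semantics. That is a non-trivial theorem in its own right -- it fails for arbitrary preferential structures and its proof for the standard format is a substantial transfinite construction -- so it cannot be waved at. Fortunately you do not need it: from the $\mathbf{L}$-consistency of $\Gamma\cup(\Delta\setminus\{B\})^{\neg}$ you get a model $M$ with $Ab(M)\cap\Delta=\{B\}$; since $Ab(M)$ is a choice set over $\Sigma_{\mathbf{L}}(\Gamma)$, Fact \ref{choicesetfundamental}.1 already yields some $\phi\in\Phi_{\mathbf{L}}(\Gamma)$ with $\phi\subseteq Ab(M)$, and then $\emptyset\neq\phi\cap\Delta\subseteq\{B\}$ forces $B\in\phi$ -- no minimally abnormal model is needed at this point. (A purely combinatorial route also works: $(\bigcup\Sigma_{\mathbf{L}}(\Gamma)\setminus\Delta)\cup\{B\}$ is a choice set meeting $\Delta$ only in $B$, and any minimal choice set inside it must contain $B$.) With that step repaired, the proof is complete.
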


In \cite{Vandeputte2013}, the connection between ALs and Makinson's Default Assumption Consequence Relations (in short, DACRs) \cite[chapter 2]{makinson2005bridges} was established. In \cite[chapter 2]{makinson2005bridges}, it is also shown that many other non-monotonic consequence relations, such as Reiter's Closed World Assumption, Poole's Background Constraints, etc.\ can be expressed as DACRs. DACRs give formal substance to the idea that, in many situations, non-monotonic reasoning  makes use of a set $\Delta$ of defeasible background assumptions in combination with the strict and explicit premises in $\Gamma$. These background assumptions are used to the extent that they are consistent with $\Gamma$. Accordingly, DACRs make use of the notion of maximal consistent subset:
\begin{definition}
Where $\Gamma\cup \Delta\subseteq \mathcal{L}$, $\Theta \subseteq \Delta$ is a maximal $\Gamma$-consistent subset of $\Delta$ iff:
\begin{itemize}
\item $\Gamma\cup \Theta \not\vdash_{\bf L} A$ for some $A \in \mathcal{L}$ and
\item $\Gamma\cup \Theta' \vdash_{\bf L} A$ for all $A \in \mathcal{L}$ and for every $\Theta\subset \Theta' \subseteq \Delta$.
\end{itemize}
${\sf MCS}(\Gamma,\Delta)$ is the set of all maximal $\Gamma$-consistent subsets of $\Delta$.
\end{definition}

\begin{definition}
Where $\Gamma \cup \Delta \cup \{A\} \subseteq \mathcal{L}$,
$\Gamma\vdash^{{\rm DA}, \mathbf{L}}_{\Delta} A$ iff for every $\Delta'\in {\sf MCS}(\Gamma,\Delta)$, $\Gamma\cup \Delta'\vdash_{\bf L} A$.
\end{definition}

The connection between adaptive logic and DACR's is the following:

\begin{theo}\cite[p.10]{Vandeputte2013}
Where $\Gamma \cup \Delta \cup \{A\} \subseteq \mathcal{L}$ and $\Delta^{\neg} = \{\neg B \mid B \in \Delta\}$, $\Gamma \vdash^{{\rm DA},\mathbf{L}}_{\Delta} A$ iff $\Gamma \vdash_{\sf ma}^{\Delta^{\neg},{\bf L}} A$.
\end{theo}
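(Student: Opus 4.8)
The plan is to route everything through the minimal-abnormality characterisation of Theorem \ref{ma}, instantiated with the abnormality set $\Omega = \Delta^{\neg}$, and to exhibit a bijection between the minimal choice sets in $\Phi_{\mathbf{L}}(\Gamma)$ and the maximal $\Gamma$-consistent subsets in ${\sf MCS}(\Gamma,\Delta)$. The guiding observation is the classical duality between minimal inconsistent subsets and their transversals on one side, and maximal consistent subsets on the other. I would first dispose of the degenerate case in which $\Gamma$ is already trivial: there ${\sf MCS}(\Gamma,\Delta)=\emptyset$, so the right-hand side holds vacuously for every $A$, while on the left $\Gamma\vdash_{\bf L}A$ for every $A$, so both relations collapse to the full set; hence I may assume $\Gamma$ is nontrivial. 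Writing $S^{\neg}=\{\neg B\mid B\in S\}$ for $S\subseteq\Delta$, the first substantive step is to pin down $\Sigma_{\mathbf{L}}(\Gamma)$: its elements are exactly the sets $S^{\neg}$ with $S$ a $\subseteq$-minimal $\Gamma$-inconsistent subset of $\Delta$. This uses that $\neg$ and $\lor$ behave classically in the standard format, so that $\Gamma\vdash_{\bf L}\bigvee_{B\in S}\neg B$ iff $\Gamma\cup S$ is trivial, together with the observation that a nonempty $S^{\neg}$ is $\subseteq$-minimal with $\Gamma\vdash_{\bf L}\bigvee S^{\neg}$ exactly when $S$ is $\subseteq$-minimally $\Gamma$-inconsistent.

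Next I would set up the bijection. For $\phi\subseteq\Omega$ put $\Theta_{\phi}=\{B\in\Delta\mid\neg B\notin\phi\}$, so that $\phi=(\Delta\setminus\Theta_{\phi})^{\neg}$ and $\Omega\setminus\phi=\Theta_{\phi}^{\neg}$. By compactness a subset of $\Delta$ is $\Gamma$-consistent iff it contains no minimal $\Gamma$-inconsistent subset; hence $\phi$ is a choice set over $\Sigma_{\mathbf{L}}(\Gamma)$ iff $\Theta_{\phi}$ meets no minimal inconsistent $S$, i.e.\ iff $\Theta_{\phi}$ is $\Gamma$-consistent. Feeding in Fact \ref{choicesetfundamental}.2 upgrades this to the minimality half of the duality: $\phi\in\Phi_{\mathbf{L}}(\Gamma)$ iff in addition, for every $B$ with $\neg B\in\phi$, there is a minimal inconsistent $S$ with $S\cap(\Delta\setminus\Theta_{\phi})=\{B\}$, i.e.\ iff $\Theta_{\phi}\cup\{B\}$ is $\Gamma$-inconsistent for each such $B$, which is precisely maximality of $\Theta_{\phi}$. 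Thus $\phi\mapsto\Theta_{\phi}$ is a bijection from $\Phi_{\mathbf{L}}(\Gamma)$ onto ${\sf MCS}(\Gamma,\Delta)$.

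Finally I would match the two consequence conditions under this bijection. By Theorem \ref{ma}, $\Gamma\vdash^{\Delta^{\neg},{\bf L}}_{\sf ma}A$ iff for every $\phi\in\Phi_{\mathbf{L}}(\Gamma)$ there is a $\Xi\subseteq\Omega\setminus\phi=\Theta_{\phi}^{\neg}$ with $\Gamma\vdash_{\bf L}A\lor\bigvee\Xi$. Writing $\Xi=S^{\neg}$ with finite $S\subseteq\Theta_{\phi}$ (the case $\Xi=\emptyset$ giving simply $\Gamma\vdash_{\bf L}A$), classical reasoning with the superimposed connectives gives $\Gamma\vdash_{\bf L}A\lor\bigvee_{B\in S}\neg B$ iff $\Gamma\cup S\vdash_{\bf L}A$, and by compactness such a finite $S\subseteq\Theta_{\phi}$ exists iff $\Gamma\cup\Theta_{\phi}\vdash_{\bf L}A$. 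So the minimal-abnormality condition at $\phi$ is equivalent to $\Gamma\cup\Theta_{\phi}\vdash_{\bf L}A$, and since $\phi$ ranges bijectively over ${\sf MCS}(\Gamma,\Delta)$, quantifying universally yields exactly the defining condition of $\Gamma\vdash^{{\rm DA},{\bf L}}_{\Delta}A$, completing the equivalence.

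I expect the main obstacle to be the minimality half of the duality, namely showing that $\subseteq$-minimal choice sets correspond to maximal consistent complements, rather than merely that arbitrary choice sets correspond to consistent complements; Fact \ref{choicesetfundamental}.2 is precisely the tool that makes this bookkeeping clean, so the work is in phrasing its hypothesis in terms of $\Theta_{\phi}$. A secondary point of care is the repeated appeal to the classical behaviour of the superimposed $\neg$ and $\lor$ for the equivalence $\Gamma\vdash_{\bf L}A\lor\bigvee_{B\in S}\neg B$ iff $\Gamma\cup S\vdash_{\bf L}A$ (needed in both the deduction-style and disjunctive-syllogism directions) and the use of compactness both to pass between $\Gamma\cup\Theta_{\phi}\vdash_{\bf L}A$ and a finite witness $S$ and to justify the characterisation of consistency via minimal inconsistent subsets.
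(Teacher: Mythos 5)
Your argument is correct, but note that the paper itself gives no proof of this theorem: it is imported verbatim from Van de Putte (2013), so there is no in-paper proof to compare against. What you have written is a sound self-contained derivation, and it is very much in the spirit of the techniques the paper deploys for its own results: the duality you establish between $\Phi_{\mathbf{L}}(\Gamma)$ (for $\Omega=\Delta^{\neg}$) and ${\sf MCS}(\Gamma,\Delta)$ is the exact analogue of Lemmas \ref{ABAconflictfree} and \ref{lem:AL:ABA:sem:eq:2}, which identify the stable/preferred/naive extensions of ${\bf ABF}^{\Omega}_{\mathbf{L}}(\Gamma)$ with the sets $Ab_{\Omega}\setminus\phi^{\neg}$ for $\phi\in\Phi_{\mathbf{L}}(\Gamma)$, and your final step (trading $\Gamma\vdash_{\bf L}A\vee\bigvee\Xi$ for $\Gamma\cup\Theta_{\phi}\vdash_{\bf L}A$ via compactness and the classical behaviour of $\neg$ and $\vee$) is precisely the manoeuvre of Fact \ref{fact:supra:L} as used in the proof of Theorem \ref{mainresultABA}. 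Two small points of hygiene: where you say $\Theta_{\phi}$ ``meets'' no minimal inconsistent $S$ you mean ``includes'' no such $S$ (the subsequent ``i.e.'' makes the intent clear, but the wording should be fixed), and you should state explicitly that members of $\Sigma_{\mathbf{L}}(\Gamma)$ are finite (so that $\bigvee S^{\neg}$ is a formula), which is where compactness first enters. With those cosmetic repairs the proof stands.
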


% \begin{corol}
% Where $\Gamma \cup \{A\} \cup \Delta \subseteq \mathcal{L}$,
% ${\bf ABF}_{\bf L}^{\Omega}(\Gamma) \vdash^{\cap} A$ iff $\Gamma\vdash^{DA}_{\Delta} A$
% \end{corol}

%%% Local Variables:
%%% mode: latex
%%% TeX-master: "KR"
%%% End:

% \begin{corol}
% Where $\Gamma \cup \{A\} \cup \Delta \subseteq \mathcal{L}$,
% ${\bf ABF}_{\bf L}^{\Omega}(\Gamma) \vdash^{\cap} A$ iff $\Gamma\vdash^{DA}_{\Delta} A$
% \end{corol}

%%% Local Variables:
%%% mode: latex
%%% TeX-master: "KR"
%%% End:
\section{Assumption-Based Argumentation}
\label{sec:assumpt-based-argum}
ABA, thoroughly described in \cite{Bondarenko1997}, is a formal model that allows one to use a set of plausible assumptions ``to extend a given theory'' \cite[p.70]{Bondarenko1997} unless and until there are good arguments for not using such an assumption. 
 
Inferences are implemented in ABA by means of a deductive system consisting of a language and rules formulated over this language:
\begin{definition}[Deductive System]\label{deductivesystems}
A \emph{deductive system} is a pair $(\mathcal{L},\mathcal{R})$ such that
\begin{itemize}
\item $\mathcal{L}$ is a formal language (consisting of countably many sentences).
\item $\mathcal{R}$ is a set of inference rules of the form $A_1, \ldots, A_n \rightarrow A$ and $\; \rightarrow A$, where $A,A_1\ldots,A_n \in \mathcal{L}$
\end{itemize}
\end{definition}
\begin{definition}
An \emph{$\mathcal{R}$-deduction} from a theory $\Gamma$ is a sequence $B_1,\ldots ,B_m$, where $m>0$ such that for all $i=1,\ldots,m$: $B_i\in \Gamma$ or there exists a $A_1, \ldots, A_n \rightarrow B_i \in \mathcal{R}$ such that $A_1,\ldots,A_n\in \{B_1,\ldots, B_{i-1}\}$.
\end{definition}
\begin{definition}
Where $\Gamma \cup \{A\} \subseteq \mathcal{L}$, $\Gamma\vdash_{\mathcal{R}} A$ holds if there is an $\mathcal{R}$-deduction from $\Gamma$ whose last element is $A$.
\end{definition}

We now introduce defeasible assumptions and a contrariness operator to express argumentative attacks. Given a rule system, an assumption-based framework is defined as follows:
\begin{definition}[Assumption-based framework]
An \emph{assumption-based framework} is a tuple ${\bf ABF}=( (\mathcal{L},\mathcal{R}),\Gamma, Ab, {\textoverline{\quad}})$ where:
\begin{itemize}
\item $(\mathcal{L},\mathcal{R})$ is a deductive system
\item $\Gamma\subseteq\mathcal{L}$
\item $\emptyset\neq Ab \subseteq \mathcal{L}$ is the set of candidate assumptions.
\item $\textoverline{\quad}:Ab\rightarrow \mathcal{L}$ is a contrariness operator.\footnote{Note that $\overline{\phantom{A}}$ does \emph{not} denote the set theoretic complement.}
\end{itemize}
\end{definition}

In most structured accounts of argumentation attacks are defined between arguments which are deductions in a given deductive or defeasible system (e.g., in ASPIC$^{+}$, Defeasible Logic Programming \cite{garcia2004defeasible}) or sequents $\Gamma \vdash_{\bf L} A$ where ${\bf L}$ is an underlying core logic (\cite{arieli2015sequent,besnard2001logic}).\footnote{The former are sometimes referred to as \emph{rule-based}  and the latter as \emph{logic-based} systems of argumentation.} In contrast, ABA operates at a higher level of abstraction, since attacks are defined directly on the level of sets of assumptions instead of on the level of $\mathcal{R}$-deductions.\footnote{Some formulations of ABA define attacks on the level of individual arguments. However, since attacks are only possible `on' assumptions, these formulations are equivalent (cf.\ also \cite{Toni2014}).} ABA can thus be viewed as operating on the level of equivalence classes consisting of arguments generated using the same assumptions.  

\begin{definition}[Attacks]
Given an assumption-based framework ${\bf ABF}=( (\mathcal{L},\mathcal{R}),\Gamma, Ab, {\textoverline{\quad}})$:
\begin{itemize}
\item a set of assumptions $\Delta\subseteq Ab$ \emph{attacks} an assumption $A\in Ab$ iff $\Gamma\cup \Delta \vdash_{\mathcal{R}} \textoverline{A}$.
\item a set of assumptions $\Delta\subseteq Ab$ \emph{attacks} a set of assumptions $\Delta'\subseteq Ab$ iff $\Gamma\cup \Delta\vdash_{\mathcal{R}} \textoverline{A}$ for some $A\in\Delta'$.% \footnote{Note that in this definition it is not required that $\Delta\neq\emptyset$.}
\end{itemize}
\end{definition} 
 
Consequences of a given assumption-based framework are determined with the use of argumentation semantics. On the basis of argumentative attacks, semantics determine sets of assumptions that are acceptable given different criteria of acceptability, such as the requirement that a given set of assumption should not attack itself, or it should be able to defend itself against attacks by other sets of assumptions. Argumentation semantics have been phrased for abstract frameworks in \cite{Dung1995} and have been generalised to the level of ABA in e.g.\ \cite{Bondarenko1997}.
\begin{definition}[Argumentation semantics]
Where $\Delta \subseteq Ab$:
\begin{itemize}
\item $\Delta$ is \emph{closed} iff $\Delta=\{A\in Ab\mid \Gamma\cup\Delta\vdash_{\mathcal{R}} A\}$. 
\item $\Delta$ is \emph{conflict-free} iff for every $A\in Ab, \Delta\cup \Gamma\not\vdash_{\mathcal{R}} A$ or $\Delta\cup \Gamma\not\vdash_{\mathcal{R}} \overline{A}$.
\item A closed set $\Delta$ is \emph{naive} iff it is maximally (w.r.t.\ set inclusion) conflict-free.
\item A closed set of assumptions $\Delta\subseteq Ab$ is \emph{admissible} iff it is conflict-free and for each closed set of assumptions $\Delta'\subseteq Ab$, if $\Delta'$ attacks $\Delta$, then $\Delta$ attacks $\Delta'$. 
\item A set $\Delta$ is \emph{preferred} iff it is maximally (w.r.t.\ set inclusion) admissible.
\item $\Delta$ is \emph{stable} iff it is closed, conflict-free and attacks every $a\in Ab\setminus \Delta$.
\end{itemize}
We write ${\sf niv}({\bf ABF}), {\sf prf}({\bf ABF})$ resp.\ ${\sf stb}({\bf ABF})$ for the set of naive, preferred resp.\ stable sets of assumptions in ${\bf ABF}$. 
\end{definition}

\begin{example} 
Let $Ab=\{q,\lnot p \lor \lnot q\}$, $\Gamma=\{p\}$, let the rule system $\mathcal{R}$ characterize classical logic and $\overline{A}=\neg A$ (where $\neg$ is classical negation). Then there are two preferred sets: $\{\lnot p\lor \lnot q\}, \{q\}$. To see this note that e.g.\ $\Gamma\cup\{\lnot p\lor \lnot q\}\vdash_{\mathcal{R}} \lnot q$ and $\Gamma\cup\{q\}\vdash_{\mathcal{R}} \lnot (\lnot p\lor \lnot q)$.
\end{example}

We are now in a position to define various consequence relations for ABA:

\begin{definition}
Given an assumption-based framework ${\bf ABF}=( (\mathcal{L},\mathcal{R}),\Gamma, Ab, {\textoverline{\quad}})$ and ${\sf sem} \in \{{\sf niv}, {\sf prf}, {\sf stb}\}$:
\begin{itemize}
\item ${\bf ABF}\vdash^{\cup}_{{\sf sem}} A$ iff $\Gamma\cup \Delta \vdash_{\mathcal{R}} A$ for some $\Delta \in {\sf sem}({\bf ABF})$.
\item ${\bf ABF}\vdash^{\cap}_{{\sf sem}} A$ iff $\Gamma\cup \Delta \vdash_{\mathcal{R}} A$ for every $\Delta \in {\sf sem}({\bf ABF})$.
\item ${\bf ABF}\vdash^{\Cap}_{{\sf sem}} A$ iff $\Gamma \cup \bigcap \{\Delta\mid \Delta \in {\sf sem}\} \vdash_{\mathcal{R}} A$. 
\end{itemize}
\end{definition}

%%% Local Variables:
%%% mode: latex
%%% TeX-master: "KR.tex"
%%% End:

\section{ASPIC$^+$}\label{sec:aspic+}
In ASPIC$^+$, as in ABA, inferences made on the basis of a strict knowledge base can be extended with additional inferences based on plausible assumptions. However, whereas in ABA attacks and extensions where defined directly on the level of these assumptions, in ASPIC$^+$, arguments are specific deductions. % in an underlying rule system, consisting of both strict and defeasible rules.
% encoded by referring to specific conclusions and all the inference steps and/or sub-arguments involved in deriving this specific conclusion.
More precisely, arguments are constructed from a knowledge base using an argumentation system.
An argumentation system is a generalisation of a deductive system (Def.\ \ref{deductivesystems}) that allows for a distinction between strict (i.e.\ deductive or safe) and defeasible rules.\footnote{In the ASPIC$^+$ framework of \cite{Prakken2010}, there is also the possibility to add a preference ordering over the premises and/or defeasible rules. Similar generalisations exist for ALs and approaches based on maximal consistent subsets and their generalisations such as Makinsons' default assumptions. We will present investigations into translations for systems with priorities at a future occasion. In our presentation, we also disregard a special type of premise called `issue' in the context of ASPIC$^+$\jesse{. Issues are} premises that are never acceptable in the sense that they always require further backup by additional arguments. 
 % Since the reconstruction of {\bf AL} in ASPIC$^+$ did not require this, we do not use preferences here. However, in future work, we intend to give a reconstruction of lexicographic and colexicographic adaptive logic in ASPIC$^+$ with preferences. We already have results in this direction.
}
\begin{definition}[Defeasible Theory]
Given a formal language $\mathcal{L}$, a \emph{defeasible theory} ${\sf R} = (\mathcal{L}, \mathcal{S},\mathcal{D})$ consists of (where $A_1, \ldots, A_n, B \in \mathcal{L}$):
\begin{itemize}
\item a set of strict rules $\mathcal{S}$ of the form $A_1,\ldots,A_n\rightarrow B$  
\item a set of defeasible rules $\mathcal{D}$ of the form $A_1,\ldots,A_n\Rightarrow B$.
\end{itemize}

We also assume there is a naming function $N:\mathcal{S}\cup\mathcal{D}\rightarrow\mathcal{L}$ s.t.\ every rule $r\in \mathcal{S}\cup\mathcal{D}$ gets assigned a unique name.
$A_1,\ldots,A_n$ are called the antecedents and $B$ is called the consequent of $A_1,\ldots,A_n\rightarrow B$
resp.\ $A_1,\ldots,A_n\Rightarrow B$.
\end{definition}

\begin{definition}[Argumentation System]
  Given a defeasible theory $\mathsf{R}$, an \emph{argumentation system} is a tuple $AS=( {\sf R} ,\overline{\phantom{A}})$ where $\overline{\phantom{A}}$ is a contrariness function from $\mathcal{L}$ to $2^{\mathcal{L}}$.
\end{definition}

Arguments are built by using defeasible and/or strict rules to derive conclusions from a knowledge base. A knowledge base consists of % \footnote{In \cite{Prakken2010}, a knowledge base has two extra components: ordinary premises and issues. Ordinary premises have an associated preference ordering and issues are premises whose arguments are never acceptable in the sense that they always require further back up by additional arguments. Due to spatial limitations, we consider frameworks with just necessary premises and assumptions suffice to reconstruct adaptive logics.}
strict and plausible premises. $\mathcal{K}_{n}$ is the set of all (necessary) axioms, i.e.\ premises that are considered to be outside the reach of argumentative attacks. % They are  certainties in the argumentation framework.
$\mathcal{K}_{a}$ has an analogous function to the defeasible assumptions in ABA: they are deemed \emph{plausible} in that they are assumed to be true unless and until a counterargument is encountered. % This implies they can be attacked and consequently that they are taken to be false only if they are attacked and cannot be defended from this attack.

\begin{definition}[Knowledge Base]
  A \emph{Knowledge Base} % in an argumentation system $AS=({\sf R} ,\overline{\phantom{A}})$
  is a set $\mathcal{K}$, where  $\mathcal{K} = \mathcal{K}_{n} \cup \mathcal{K}_{a}$ and $\mathcal{K}_{n} \cap \mathcal{K}_{a} = \emptyset$.
\end{definition}

\begin{definition}[Arguments]
  Let $AS=( {\sf R} ,\overline{\phantom{A}})$ be an argumentation system and $\mathcal{K} = \mathcal{K}_{a}\cup \mathcal{K}_{n}$ a knowledge base. An \emph{argument} $a$ is one of the following:
\begin{itemize}
\item a \emph{premise argument} $\langle A \rangle$ if $A\in\mathcal{K}$
\item a \emph{strict rule-argument} $\langle a_1,\ldots a_n\mapsto B\rangle$ if $a_1,\ldots a_n$ (with $n\geqslant 0$) are arguments such that there exists a strict rule $\mathrm{conc}(a_1),\ldots \mathrm{conc}(a_n)\rightarrow B\in\mathcal{S}$.
\item a \emph{defeasible rule-argument} $\langle a_1,\ldots a_n \Rrightarrow B\rangle$ if $a_1,\ldots a_n$ (with $n\geqslant 0$) are arguments such that there exists a defeasible rule $\mathrm{conc}(a_1),\ldots \mathrm{conc}(a_n)\Rightarrow B$.
\end{itemize}
\end{definition}
We will use $\arg(AS,\mathcal{K})$ to denote the set of all arguments that can be built from a knowledge base $\mathcal{K}$ using an argumentation system $AS$.

\begin{example}
Let $\mathcal{S}=\{\lnot q\rightarrow \lnot p\}$, $\mathcal{D}=\{ \lnot p \Rightarrow s\}$, $\mathcal{K}_n=\{ \lnot s\}$, and $\mathcal{K}_a=\{\lnot q, \lnot p, q\}$. We have, e.g., the following arguments:

\begin{tabular}{l l l}
$a_1=\langle \lnot q\rangle$ & $a_4=\langle a_3 \Rrightarrow s\rangle$ & $a_7=\langle \lnot s\rangle$  \\
$a_2=\langle \lnot p\rangle$ & $a_5=\langle a_2 \Rrightarrow s\rangle$\\
$a_3=\langle a_1\mapsto \lnot p\rangle$ & $a_6=\langle q\rangle$\\
\end{tabular} 
\end{example}

\begin{definition} Where $a$ is an argument $a=\langle B \rangle$, $a=\langle a_1,\ldots a_n\mapsto B\rangle$ or $a=\langle a_1,\ldots a_n\Rrightarrow B\rangle$, we define:
\begin{itemize}
\item $\mathrm{conc}(a)=B$
\item $\mathrm{sub}(a)=\mathrm{sub}(a_1)\cup\ldots \cup \mathrm{sub}(a_n)\cup \{a\}$
%\item where $a$ is a strict-rule argument:
%\begin{itemize}
%\item $\mathrm{strictrules}(a)=\mathrm{strictrules}(a_1)\cup\ldots \cup \mathrm{strictrules}(a_n)\cup\{\mathrm{conc}(a_1),\ldots \mathrm{conc}(a_n)\rightarrow B\}$
%\item $\mathrm{defrules}(a)=\mathrm{defrules}(a_1)\cup\ldots \cup \mathrm{defrules}(a_n)$
%\end{itemize}
%\item where $a$ is a defeasible-rule argument:
%\begin{itemize}
%\item $\mathrm{strictrules}(a)=\mathrm{strictrules}(a_1)\cup\ldots \cup \mathrm{strictrules}(a_n)$
%\item $\mathrm{defrules(a)}=\mathrm{defrules}(a_1)\cup\ldots \cup \mathrm{defrules}(a_n)\cup \{\mathrm{conc}(a_1),\ldots \mathrm{conc}(a_n)\Rightarrow B\}$
%\end{itemize}
\item where $a$ is a premise argument: $\mathrm{prem}(a)=\{A\}$
\item where $a$ is not a premise argument: $\mathrm{prem}(a)=\{\mathrm{prem}(a')\mid a'\in {\rm sub}(a)\}$.
\end{itemize}
\end{definition}

%\begin{example}[Ex.\ \ref{ex1}, contd]
%Recall that $a_4=\langle \lnot p \lor s, a_3\mapsto s\rangle$.\\
%$\mathrm{conc}(a_4)=s$ \\
%$\mathrm{sub}(a_4)=\{\langle \lnot q\rangle, \langle a_1\mapsto \lnot p\rangle, \langle a_3\mapsto s\rangle\}$\\
%$\mathrm{strictrules}(a_4)=\mathcal{S}$\\
%$\mathrm{defrules}(a_4)=\emptyset$\\
%$\mathrm{prem}(a_4)=\{\lnot q\}$
%\end{example}

%\begin{definition}
%An argument $a$ is:
%\begin{itemize}
%\item \emph{Strict} if $\mathrm{defrules(a)}=\emptyset$
%\item \emph{Defeasible} if $\mathrm{defrules(a)}\neq\emptyset$
%\item \emph{Firm} if $\mathrm{prem}(a)\subseteq \mathcal{K}_{n}$
%\item \emph{Plausible} if $\mathrm{prem}(a)\not\subseteq\mathcal{K}_{n}$
%\end{itemize}
%\end{definition}

The distinction between strict and defeasible rule-arguments allows us to define a variety of attack forms:
\begin{definition}[Attacks]\label{def:ASPIC:att}
Where $a,b \in {\rm Arg}(AS, \mathcal{K})$, $a$ attacks $b$ (in signs, $a \rightsquigarrow b$) iff 
\begin{itemize}
\item $\mathrm{conc}(a) \in \overline{B}$ for some $B \in \mathrm{prem}(b) \cap \mathcal{K}_{a}$ (\emph{Undermining}).
\item $\mathrm{conc}(a) \in \overline{B'}$ for some $b'\in {\rm sub}(b)$ such that ${\rm conc}(b')=B'$ and $b'$ is of the form $\langle b'_1,\ldots,b'_n\Rrightarrow B'\rangle$ (\emph{Rebut}).
\item $\mathrm{conc}(a)= \overline{b'}$ for some $b'\in {\rm sub}(b)$ such that $b'$ is a defeasible argument (\emph{Undercut}).
\end{itemize}
\end{definition}

\begin{example}[Ex.\ \ref{ex1}, contd] \jesse{Where $\overline{A}=\{B\mid B\equiv \lnot A\}$ for every $A\in\mathcal{L}$, we have:}
$a_1\rightsquigarrow a_6$, $a_6\rightsquigarrow a_1$, $a_6\rightsquigarrow a_3$, $a_6\rightsquigarrow a_4$, $a_7\rightsquigarrow a_4$, $a_7\rightsquigarrow a_5$.
\end{example}

\begin{definition}[Structured Argumentation Framework]
A \emph{structured argumentation framework} ${\bf AT} = ({\rm Arg}(AS,\mathcal{K}),\rightsquigarrow)$ is a pair where ${\rm Arg}(AS,\mathcal{K})$ is the set of arguments built from $\mathcal{K}$ using the argumentation system $AS$ and $\rightsquigarrow$ is an attack relation over ${\rm Arg}(AS,\mathcal{K})$. 
\end{definition}

Given a structured argumentation framework, we can again make use of Dung's argumentation semantics to define different notions of acceptable sets of arguments.

\begin{definition}[Argumentation Semantics] Given a structured argumentation framework ${\bf AT}=(Arg(AS,\mathcal{K}),\leadsto)$, where $\mathcal{B}\subseteq Arg(AS,\mathcal{K})$,
\begin{itemize}
\item $\mathcal{B}$ is \emph{conflict-free} iff there is no $a,b\in\mathcal{B}$ such that $a\rightsquigarrow b$
\item $\mathcal{B}$ is \emph{naive} iff it is maximally conflict-free.
\item $\mathcal{B}$ \emph{defends} $a\in\mathcal{A}$ iff for every $c\in\mathcal{A}$ for which $c\rightsquigarrow a$, there is a $b\in\mathcal{B}$ such that $b\rightsquigarrow c$.
\item $\mathcal{B}$ is \emph{admissible} iff it is conflict-free and it defends every argument $a\in\mathcal{B}$
\item $\mathcal{B}$ is \emph{preferred} iff it is maximally (w.r.t.\ set inclusion) admissible.
\item $\mathcal{B}$ is \emph{stable} iff it is conflict-free and for every $a\in Arg(AS,\mathcal{K})\setminus \mathcal{B}$, $\mathcal{B}\leadsto a$.
\end{itemize}
We write ${\sf niv}({\bf AT}),{\sf prf}({\bf AT})$ resp.\ ${\sf stb}({\bf AT})$ for the set of naive, preferred resp.\ stable sets of arguments in ${\bf AT}$. 
\end{definition}

\begin{definition} Where ${\bf AT} = (Arg(AS,\mathcal{K}), \leadsto)$ is a structured argumentation framework and ${\sf sem} \in \{{\sf niv}, {\sf prf}, {\sf stb}\}$,
\begin{itemize}
\item ${\bf AT}\vdash^{\cup}_{\sf sem} A$ iff there is an $a\in \mathcal{B}$ with ${\rm conc}(a)=A$ for some $\mathcal{B} \in {\sf sem}({\bf AT})$.
\item ${\bf AT}\vdash^{\cap}_{\sf sem} A$ iff for every $\mathcal{B} \in {\sf sem}({\bf AT})$ there is an $a\in \mathcal{B}$ with ${\rm conc}(a)=A$.
\item ${\bf AT}\vdash^{\Cap}_{{\sf sem}} A$ iff there is an $a\in \mathcal{B}$ with ${\rm conc}(a)=A$ for every $\mathcal{B} \in {\sf sem}({\bf AT})$.
\end{itemize}
\end{definition}

%%% Local Variables:
%%% mode: latex
%%% TeX-master: "KR"
%%% End:
\section{Translating Adaptive Logic to Assumption-Based Argumentation}
\label{sec:transl-adapt-logic}
The idea of the translation from ALs to ABA is the following. We translate the lower limit logic {\bf L} of the given AL into a deductive system, plausible assumptions are negations of abnormalities, and the contrariness operator is classical negation.
Recall that the lower limit logic is a supraclassical Tarski logic. Hence, there are classical negation $\neg$ and classical disjunction $\vee$ in the underlying language of $\mathbf{L}$. In the remainder of this section we will use $\neg$ and $\vee$ denoting these classical connectives.

We now go through the technical details of our translation.
\begin{definition} 
Let ${\bf AL}$ be an AL with the lower limit logic {\bf L} in a formal language $\mathcal{L}$ and the consequence relation $\vdash_{\bf L}$, the set of abnormalities $\Omega \subseteq \mathcal{L}$ and a strategy ${\sf str}$ (reliability, minimal abnormality, or normal selections). Let ${\bf L}$ be characterised by the rules \chr[2016-03-30]{in} $\mathsf{R}$ and the axiom schemes in $\mathsf{A}$. We the define the assumption based framework ${\bf ABF}_{\bf L}^{\Omega}(\Gamma)$ for the premise set $\Gamma \subseteq \mathcal{L}$ as the tuple ${\bf ABF}_{\bf L}^{\Omega}(\Gamma)=( (\mathcal{L}, \mathcal{R}(\mathbf{L})),\Gamma, Ab_{\Omega}, \overline{\phantom{A}})$ where:
\begin{itemize}
\item $\mathcal{R}(\mathbf{L})$ contains all instances of rules in $\mathsf{R}$ and a rule $\rightarrow A$ for all instances $A$ of axiom schemes in $\mathsf{A}$;\footnote{If no axiomatisation of $\mathbf{L}$ is given, we can proceed more brute force and set $\mathcal{R}=\{{A_1,\ldots,A_n} \rightarrow {A}\mid \{A_1,\ldots, A_n\} \vdash_{\bf L} A\}$.}
\item $Ab_{\Omega}=\{\neg A\mid A\in \Omega\}$
\item $\overline{\phantom{A}}: Ab_{\Omega} \rightarrow \mathcal{L}$, where $\overline{\neg A} =A$
\end{itemize}
\end{definition}

Below we show the following representational theorem:
\begin{theo}\label{mainresultABA}
  Where $\Gamma \cup \{A\} \subseteq \mathcal{L}$ and ${\sf sem} \in \{ {\sf niv},{\sf prf},{\sf stb}\}$,
  \begin{enumerate}
  \item ${\bf ABF}_{\bf L}^{\Omega}(\Gamma) \vdash^{\cup}_{\sf sem} A$ iff $\Gamma\vdash^{\Omega,{\bf L}}_{\sf ns} A$
  \item ${\bf ABF}_{\bf L}^{\Omega}(\Gamma) \vdash^{\cap}_{\sf sem} A$ iff $\Gamma\vdash^{\Omega,{\bf L}}_{\sf ma} A$
  \item ${\bf ABF}_{\bf L}^{\Omega}(\Gamma) \vdash^{\Cap}_{\sf sem} A$ iff $\Gamma\vdash^{\Omega,{\bf L}}_{\sf r} A$.
  \end{enumerate}
\end{theo}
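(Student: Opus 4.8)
The plan is to show that, under this translation, all three argumentation semantics coincide and that their extensions are in bijection with the minimal choice sets $\Phi_{\mathbf{L}}(\Gamma)$; the three equivalences then fall out of the representation Theorems \ref{ns}, \ref{ma} and \ref{rel}. Throughout I identify an assumption set $\Delta\subseteq Ab_{\Omega}$ with the set $S_{\Delta}=\{C\in\Omega\mid \neg C\in\Delta\}$ of abnormalities it assumes false, and with the complement $\Theta_{\Delta}=\Omega\setminus S_{\Delta}$.

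The first ingredient is a bridge lemma. Since $\mathcal{R}(\mathbf{L})$ captures $\vdash_{\mathbf{L}}$, we have $\Gamma\cup\Delta\vdash_{\mathcal{R}(\mathbf{L})}A$ iff $\Gamma\cup\Delta\vdash_{\mathbf{L}}A$. As $\mathbf{L}$ is supraclassical with superimposed classical $\neg$ and $\vee$ (so reasoning by cases and disjunctive syllogism are available) and compact, it follows that $\Gamma\cup\Delta\vdash_{\mathbf{L}}A$ iff $\Gamma\vdash_{\mathbf{L}}A\vee\bigvee S'$ for some finite $S'\subseteq S_{\Delta}$. In particular $\Delta$ attacks an assumption $\neg B$ iff $\Gamma\vdash_{\mathbf{L}}B\vee\bigvee S'$ for some finite $S'\subseteq S_{\Delta}$.

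Second, I establish the extension correspondence. Using the bridge lemma, a set $\Delta$ is conflict-free iff $\Gamma\nvdash_{\mathbf{L}}\bigvee S''$ for every finite nonempty $S''\subseteq S_{\Delta}$, i.e.\ iff no member of $\Sigma_{\mathbf{L}}(\Gamma)$ is contained in $S_{\Delta}$, which is exactly the condition that $\Theta_{\Delta}$ is a choice set over $\Sigma_{\mathbf{L}}(\Gamma)$. Hence the conflict-free sets correspond to choice sets and, by order-reversal of $\Delta\mapsto\Theta_{\Delta}$, the maximally conflict-free (naive) sets correspond to the $\subseteq$-minimal choice sets $\Phi_{\mathbf{L}}(\Gamma)$; Fact \ref{choicesetfundamental}(1) guarantees every choice set contains a minimal one, so the map is onto. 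For $\Theta\in\Phi_{\mathbf{L}}(\Gamma)$ the associated $\Delta_{\Theta}=\{\neg C\mid C\in\Omega\setminus\Theta\}$ is moreover closed and stable: by Fact \ref{choicesetfundamental}(2), for each $B\in\Theta$ there is a $\Delta_{B}\in\Sigma_{\mathbf{L}}(\Gamma)$ with $\Delta_{B}\cap\Theta=\{B\}$, so $\Gamma\vdash_{\mathbf{L}}B\vee\bigvee(\Delta_{B}\setminus\{B\})$ with $\Delta_{B}\setminus\{B\}\subseteq S_{\Delta_{\Theta}}$, whence $\Delta_{\Theta}$ attacks every $\neg B\notin\Delta_{\Theta}$ (stability) and contains every assumption it entails (closedness). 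Conversely every stable set is maximally conflict-free, and every preferred (hence admissible, hence conflict-free) set $\Delta$ extends via Fact \ref{choicesetfundamental}(1) to some stable $\Delta_{\Theta}\supseteq\Delta$, which being admissible forces $\Delta=\Delta_{\Theta}$ by maximality; together these show ${\sf niv}={\sf prf}={\sf stb}$ and that each equals $\{\Delta_{\Theta}\mid\Theta\in\Phi_{\mathbf{L}}(\Gamma)\}$.

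Finally, the three claims are read off. For (1), ${\bf ABF}_{\bf L}^{\Omega}(\Gamma)\vdash^{\cup}_{\sf sem}A$ says $\Gamma\cup\Delta_{\Theta}\vdash_{\mathbf{L}}A$ for some $\Theta\in\Phi_{\mathbf{L}}(\Gamma)$, which by the bridge lemma is precisely the condition of Theorem \ref{ns} for $\Gamma\vdash^{\Omega,{\bf L}}_{\sf ns}A$. For (2), $\vdash^{\cap}_{\sf sem}$ requires $\Gamma\cup\Delta_{\Theta}\vdash_{\mathbf{L}}A$ for every $\Theta\in\Phi_{\mathbf{L}}(\Gamma)$, matching Theorem \ref{ma}. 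For (3), one computes $\bigcap_{\Theta\in\Phi_{\mathbf{L}}(\Gamma)}\Delta_{\Theta}=\{\neg C\mid C\in\Omega\setminus\bigcup\Phi_{\mathbf{L}}(\Gamma)\}$, and since $\bigcup\Phi_{\mathbf{L}}(\Gamma)=\bigcup\Sigma_{\mathbf{L}}(\Gamma)$ the bridge lemma turns $\vdash^{\Cap}_{\sf sem}$ into the condition of Theorem \ref{rel}. I expect the extension correspondence---in particular the coincidence ${\sf niv}={\sf prf}={\sf stb}$ together with the correct treatment of closedness---to be the main obstacle; a secondary delicate point is the identity $\bigcup\Phi_{\mathbf{L}}(\Gamma)=\bigcup\Sigma_{\mathbf{L}}(\Gamma)$ used for reliability, which follows from applying Fact \ref{choicesetfundamental}(2) to the duality between $\Sigma_{\mathbf{L}}(\Gamma)$ and $\Phi_{\mathbf{L}}(\Gamma)$.
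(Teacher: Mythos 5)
Your proposal is correct and follows essentially the same route as the paper: a bridge between $\vdash_{\mathcal{R}(\mathbf{L})}$ and $\vdash_{\mathbf{L}}$ (the paper's Facts \ref{fact:ABA:AL:L:R} and \ref{fact:supra:L}), the identification of the naive/preferred/stable extensions with the sets $Ab_{\Omega}\setminus\phi^{\neg}$ for $\phi\in\Phi_{\mathbf{L}}(\Gamma)$ (the paper's Lemmas \ref{ABAconflictfree}--\ref{lem:AL:ABA:sem:eq}, including the same uses of Fact \ref{choicesetfundamental} and of $\bigcup\Phi_{\mathbf{L}}(\Gamma)=\bigcup\Sigma_{\mathbf{L}}(\Gamma)$), and then reading off the three claims from Theorems \ref{ns}, \ref{ma} and \ref{rel}. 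No substantive gap; you merely spell out some steps (e.g.\ why a preferred set must equal some $\Delta_{\Theta}$) that the paper leaves implicit.
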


To avoid clutter we introduce some notational convention:
\begin{notation}
Where $\Delta \subseteq \Omega$, $\Delta^{\neg} = \{\neg A\mid A\in \Delta\}$ and $\overline{\Delta^{\neg}} = \Delta$.
\end{notation}

% \begin{example}[Ex. \ref{ex1} contd]
% Take $Ab_{\Omega}=\{ \lnot(\sim A \land  A)\mid A\in \mathcal{L}_{\bf CLuN}\}$ and $\mathcal{R}$ an adequate rule system for ${\bf CLuN}$.
% Where $\Gamma=\{p,q,\sim p\lor \lnot q, p\lor r, q\lor s\}$. There are two preferred extensions: $Ab_{\Omega}\setminus \{\lnot (\sim p\land p)\}$ and $Ab_{\Omega}\setminus \{\lnot (\sim q\land q))$. To see this observe that e.g.\  $\Gamma\cup \lnot (q\land \sim q)\vdash (\sim p\land p)$.
% \end{example}

The following fact follows immediately in view of the compactness and the transitivity of $\mathbf{L}$.
\begin{fact} \label{fact:ABA:AL:L:R}
Where $\Gamma \cup \{A\}\subseteq\mathcal{L}$, $\Gamma\vdash_{\mathcal{R}(\mathbf{L})} A$ iff $\Gamma\vdash_{\bf L} A$. 
\end{fact}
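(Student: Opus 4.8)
The plan is to prove the two directions separately, exploiting the fact that an $\mathcal{R}(\mathbf{L})$-deduction is nothing other than a Hilbert-style proof in the calculus $(\mathsf{R},\mathsf{A})$ that characterises $\mathbf{L}$: premises, axiom instances (licensed by the rules $\rightarrow C$), and rule applications (licensed by the rules $A_1,\ldots,A_n \rightarrow C$) are exactly the three ways a line of an $\mathcal{R}(\mathbf{L})$-deduction can arise. The equivalence is essentially built into the definition of $\mathcal{R}(\mathbf{L})$, and the two logical properties cited do the remaining work: transitivity for one direction, compactness for the other.

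For the left-to-right direction (soundness of the deductive system with respect to $\vdash_{\bf L}$), suppose $\Gamma \vdash_{\mathcal{R}(\mathbf{L})} A$ is witnessed by an $\mathcal{R}(\mathbf{L})$-deduction $B_1,\ldots,B_m$ with $B_m = A$. I would show by induction on $i$ that $\Gamma \vdash_{\bf L} B_i$ for every $i \leq m$. If $B_i \in \Gamma$, then $\Gamma \vdash_{\bf L} B_i$ by reflexivity. Otherwise $B_i$ is obtained from a rule $A_1,\ldots,A_n \rightarrow B_i \in \mathcal{R}(\mathbf{L})$ with $A_1,\ldots,A_n \in \{B_1,\ldots,B_{i-1}\}$; by the definition of $\mathcal{R}(\mathbf{L})$ this rule is either an instance of a rule in $\mathsf{R}$ or (when $n=0$) an instance of an axiom scheme in $\mathsf{A}$, so in either case $\{A_1,\ldots,A_n\} \vdash_{\bf L} B_i$. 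By the induction hypothesis $\Gamma \vdash_{\bf L} A_j$ for each $j \leq n$, and the transitivity (cut) of $\mathbf{L}$ then yields $\Gamma \vdash_{\bf L} B_i$. Taking $i = m$ gives $\Gamma \vdash_{\bf L} A$.

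For the right-to-left direction, suppose $\Gamma \vdash_{\bf L} A$. By compactness there is a finite $\{A_1,\ldots,A_n\} \subseteq \Gamma$ with $\{A_1,\ldots,A_n\} \vdash_{\bf L} A$. Since $\mathbf{L}$ is characterised by $\mathsf{R}$ and $\mathsf{A}$, there is a finite Hilbert-style derivation of $A$ from $\{A_1,\ldots,A_n\}$ in which each line is a premise $A_j$, an instance of an axiom scheme in $\mathsf{A}$, or follows from earlier lines by a rule in $\mathsf{R}$. This sequence is, verbatim, an $\mathcal{R}(\mathbf{L})$-deduction from $\Gamma$, since premises lie in $\Gamma$, axiom instances are licensed by the rules $\rightarrow C$ in $\mathcal{R}(\mathbf{L})$, and rule applications are licensed by the corresponding rules of $\mathcal{R}(\mathbf{L})$. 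Hence $\Gamma \vdash_{\mathcal{R}(\mathbf{L})} A$. (If one adopts instead the brute-force rule set of the footnote, the same finite subset yields a single rule $A_1,\ldots,A_n \rightarrow A \in \mathcal{R}(\mathbf{L})$, and $A_1,\ldots,A_n,A$ is the desired deduction.)

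The only obstacle is bookkeeping rather than mathematical depth. In the soundness direction one must apply transitivity correctly when a rule carries several antecedents, cutting in all of $A_1,\ldots,A_n$ at once, which is precisely the cumulative form of cut available in a Tarski logic. In the completeness direction one must be explicit that compactness is what permits the possibly infinite $\Gamma$ to be replaced by a finite premise set that a finite deduction can actually enumerate. Both steps are routine, which is why the fact is stated as following immediately from compactness and transitivity.
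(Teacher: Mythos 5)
Your proof is correct and follows exactly the route the paper intends: the paper gives no explicit proof, stating only that the fact ``follows immediately in view of the compactness and the transitivity of $\mathbf{L}$'', and your two directions (induction with cut for soundness, compactness plus the identification of Hilbert derivations with $\mathcal{R}(\mathbf{L})$-deductions for completeness) are precisely the elaboration of that remark.
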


In view of this fact, we will indiscriminately use $\vdash$ as $\vdash_{\mathcal{R}(\mathbf{L})}$ and $\vdash_{\bf L}$. %
Note that in view of the supraclassicality of $\mathbf{L}$ we have:
\begin{fact}\label{fact:supra:L}
  $\Gamma \cup \Delta^{\neg} \vdash A$ iff $\Gamma \vdash \bigvee \overline{\Delta\jesse{^{\lnot}}} \vee A$.
\end{fact}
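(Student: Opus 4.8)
The plan is to prove Fact~\ref{fact:supra:L} by exploiting only the supraclassicality of $\mathbf{L}$, together with the standard classical equivalences governing the interaction of material implication, negation and disjunction. Since $\mathbf{L}$ is supraclassical, it contains classical negation $\neg$ and classical disjunction $\vee$, and --- crucially --- validates classical propositional reasoning over these connectives; in particular the deduction-style manipulations I rely on below are all $\mathbf{L}$-valid because they are classically valid. I would state the claim in the finite form first, writing $\Delta = \{B_1, \ldots, B_n\}$ so that $\Delta^{\neg} = \{\neg B_1, \ldots, \neg B_n\}$ and $\bigvee \overline{\Delta^{\neg}} = B_1 \vee \cdots \vee B_n$ (using the notational convention that $\overline{\Delta^{\neg}} = \Delta$), and then remark that the general case follows by compactness of $\mathbf{L}$, since any $\mathbf{L}$-derivation uses only finitely many premises from $\Delta^{\neg}$.

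First I would establish the left-to-right direction. Assume $\Gamma \cup \Delta^{\neg} \vdash A$, i.e.\ $\Gamma, \neg B_1, \ldots, \neg B_n \vdash A$. Applying the deduction theorem (valid since $\mathbf{L}$ is supraclassical, so material implication behaves classically) $n$ times, I obtain $\Gamma \vdash \neg B_1 \to (\cdots \to (\neg B_n \to A)\cdots)$. By the classical tautology $\neg B \to C \equiv B \vee C$, iterated over the nested implications, this is $\mathbf{L}$-equivalent to $\Gamma \vdash B_1 \vee \cdots \vee B_n \vee A$, which is exactly $\Gamma \vdash \bigvee \overline{\Delta^{\neg}} \vee A$. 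The right-to-left direction runs the same chain of equivalences in reverse: from $\Gamma \vdash \bigvee \overline{\Delta^{\neg}} \vee A$ I rewrite the conclusion classically as the nested implication $\neg B_1 \to (\cdots \to (\neg B_n \to A)\cdots)$, and then repeatedly discharge the antecedents $\neg B_1, \ldots, \neg B_n$ by modus ponens (again an $\mathbf{L}$-valid rule) after adjoining them to the premise set, yielding $\Gamma \cup \Delta^{\neg} \vdash A$.

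I do not expect any genuine obstacle here: the statement is essentially a packaging of the classical equivalence between discharging negated assumptions and adding the corresponding disjuncts to the conclusion, and every inferential step is licensed precisely because $\mathbf{L}$ is supraclassical. The only points requiring a modicum of care are (i) handling the edge case $\Delta = \emptyset$, where $\bigvee \overline{\Delta^{\neg}}$ is the empty disjunction and the biconditional degenerates to the triviality $\Gamma \vdash A$ iff $\Gamma \vdash A$, and (ii) making explicit that the reduction from the general (possibly infinite) $\Delta$ to the finite case is justified by compactness --- exactly the property already invoked in Fact~\ref{fact:ABA:AL:L:R}. Since all of this is routine classical bookkeeping, I would present the argument tersely, citing supraclassicality for the use of the deduction theorem and the $\neg B \to C \equiv B \vee C$ rewriting, and compactness for the lift to arbitrary $\Delta$.
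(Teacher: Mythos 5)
Your proof is correct and matches the paper's treatment: the paper states this Fact without any explicit proof, introducing it with ``in view of the supraclassicality of $\mathbf{L}$,'' and your deduction-theorem/$\neg B \to C \equiv B \vee C$/compactness argument is precisely the routine classical bookkeeping being gestured at. The only pedantic caveat is that the deduction theorem is not a formal consequence of supraclassicality of an arbitrary Tarski consequence relation, but it does hold for the (possibly superimposed) classical $\neg$ and $\vee$ required by the standard format of ALs, so your appeal to it is legitimate in this setting.
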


We now established that every instantiation of an AL is indeed an assumption-based framework. We prove that the three consequence relations of ALs correspond to intuitive ways of calculating consequences in ABA. The crucial result to prove this is the fact that every preferred extension in some assumption-based framework ${\bf ABF}_{\bf L}^{\Omega}(\Gamma)$ is exactly the set of negations of abnormalities excluding some choice set over the derivable abnormalities. \chr[2016-03-30]{This is shown in the following lemmas.} 

\begin{lemma}\label{ABAconflictfree}
Where $\phi \in \Phi_{\mathbf{L}}(\Gamma)$, $Ab_{\Omega} \setminus \phi^{\neg}$ is stable in ${\bf ABF}_{\bf L}^{\Omega}(\Gamma)$.
\end{lemma}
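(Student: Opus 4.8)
The plan is to write $\Delta^{*} = Ab_{\Omega}\setminus\phi^{\neg}=(\Omega\setminus\phi)^{\neg}$ (using that $\phi\subseteq\Omega$ and that $A\mapsto\neg A$ is injective) and to verify the three defining conditions of stability — closedness, conflict-freeness, and the requirement that $\Delta^{*}$ attack every assumption outside it — by reducing each to a statement about which abnormalities are $\mathbf{L}$-derivable from $\Gamma\cup\Delta^{*}$. The single engine driving everything is a characterisation that I would establish first: for every $B\in\Omega$, $\Gamma\cup\Delta^{*}\vdash B$ iff $B\in\phi$.

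For the right-to-left direction I would invoke part 2 of Fact \ref{choicesetfundamental}: since $\phi\in\Phi_{\mathbf{L}}(\Gamma)$, each $B\in\phi$ comes with some $\Delta_{B}\in\Sigma_{\mathbf{L}}(\Gamma)$ such that $\Delta_{B}\cap\phi=\{B\}$, whence $\Delta_{B}\setminus\{B\}\subseteq\Omega\setminus\phi$. From $\Gamma\vdash\bigvee\Delta_{B}$, i.e.\ $\Gamma\vdash B\vee\bigvee(\Delta_{B}\setminus\{B\})$, Fact \ref{fact:supra:L} yields $\Gamma\cup(\Delta_{B}\setminus\{B\})^{\neg}\vdash B$, and since $(\Delta_{B}\setminus\{B\})^{\neg}\subseteq\Delta^{*}$ monotonicity gives $\Gamma\cup\Delta^{*}\vdash B$. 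For the left-to-right direction I would argue contrapositively: if $\Gamma\cup\Delta^{*}\vdash B$ with $B\in\Omega\setminus\phi$, then by compactness (Fact \ref{fact:ABA:AL:L:R}) and Fact \ref{fact:supra:L} there is a finite $\Delta_{0}\subseteq\Omega\setminus\phi$ with $\Gamma\vdash\bigvee(\{B\}\cup\Delta_{0})$; extracting a $\subseteq$-minimal nonempty derivable subset of $\{B\}\cup\Delta_{0}$ produces some $\Delta'\in\Sigma_{\mathbf{L}}(\Gamma)$ with $\Delta'\subseteq\Omega\setminus\phi$, so $\Delta'\cap\phi=\emptyset$, contradicting the choice-set property of $\phi$.

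With the characterisation in hand the three conditions fall out quickly. The attack condition is immediate: for $\neg B\in\phi^{\neg}$ we have $B\in\phi$, hence $\Gamma\cup\Delta^{*}\vdash B=\overline{\neg B}$, so $\Delta^{*}$ attacks $\neg B$. For closedness and conflict-freeness I would first note that $\Gamma\cup\Delta^{*}$ is classically consistent: were it trivial it would derive every abnormality, contradicting the characterisation as soon as $\Omega\setminus\phi\neq\emptyset$; and the remaining case $\phi=\Omega$ forces $\Delta^{*}=\emptyset$, where triviality of $\Gamma\cup\Delta^{*}=\Gamma$ is excluded by the standing assumption that $\Gamma$ is non-trivial. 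Consistency then gives conflict-freeness directly (no $B$ has both $B$ and $\neg B$ derivable from $\Gamma\cup\Delta^{*}$), and it gives closedness as well: the inclusion $\Delta^{*}\subseteq\{A\in Ab_{\Omega}\mid\Gamma\cup\Delta^{*}\vdash A\}$ is trivial, and conversely if $\Gamma\cup\Delta^{*}\vdash\neg B$ while $B\in\phi$, the characterisation would also give $\Gamma\cup\Delta^{*}\vdash B$, violating consistency, so $B\in\Omega\setminus\phi$ and $\neg B\in\Delta^{*}$.

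The main obstacle, and the step I would treat most carefully, is the left-to-right direction of the characterisation, where compactness, the minimality built into $\Sigma_{\mathbf{L}}(\Gamma)$, and the choice-set property of $\phi$ must be orchestrated to rule out deriving abnormalities outside $\phi$. A secondary subtlety worth flagging is the degenerate case of a classically trivial $\Gamma$: there $\phi=\Omega$ and $\Delta^{*}=\emptyset$ fails conflict-freeness, so the lemma (and hence the representation theorem it feeds) tacitly relies on $\Gamma$ being non-trivial.
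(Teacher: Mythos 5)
Your proof is correct and follows essentially the same route as the paper's: the two halves of your central characterisation ($\Gamma\cup\Delta^{*}\vdash B$ iff $B\in\phi$) are precisely the paper's conflict-freeness argument (compactness, $\subset$-minimality, the choice-set property of $\phi$) and its attack argument (Fact~\ref{choicesetfundamental}.2 plus Fact~\ref{fact:supra:L}), with closedness then extracted from these two just as the paper indicates. Your remark that the lemma tacitly requires $\Gamma$ to be $\mathbf{L}$-non-trivial is accurate --- the paper's proof silently relies on this when it extracts a non-empty minimal $\Theta\in\Sigma_{\mathbf{L}}(\Gamma)$ --- and making that assumption explicit, along with the closure argument, is a minor but genuine tightening.
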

\begin{proof}
  We first show that $\Delta^{\neg} = Ab_{\Omega} \setminus \phi^{\neg}$ is conflict-free. Assume for a contradiction that it is not and hence that there is a $B \in \Omega$ for which $\Gamma \cup \Delta^{\neg} \vdash B, \neg B$. Hence, by the compactness of $\mathbf{L}$ and Fact \ref{fact:supra:L}, $\Gamma \vdash \bigvee \Theta$ for some finite $\Theta \subseteq \Delta$. Let $\Theta$ be $\subset$-minimal \chr[2016-03-30]{with this property. Hence,} $\Theta \in \Sigma_{\mathbf{L}}(\Gamma)$. However, then $\phi \cap \Theta \neq \emptyset$, a contradiction. 

  We now show that $\Delta^{\neg}$ is stable. For this, let $\neg B \in Ab_{\Omega} \setminus \Delta\jesse{^{\lnot}}$. Hence, $B \in \phi$. With Fact \ref{choicesetfundamental}.2, there is a $\Theta \in \Sigma_{\mathbf{L}}(\Gamma)$ such that $\{B\} = \phi \cap \Theta$. Since $\Gamma \vdash \bigvee \Theta$, by Fact \ref{fact:supra:L} also $\Gamma \cup (\Theta^{\neg} \setminus \{\neg B\}) \vdash B$. By the monotonicity of $\mathbf{L}$, $\Gamma \cup \Delta^{\neg} \vdash B$ which means that $\Delta$ attacks $B$. 

Since $\Delta^{\neg}$ is conflict-free and attacks every \chr[2016-03-31]{$A \in Ab_{\Omega} \setminus \Delta^\neg$}, it is easy to see that $\Delta^{\neg}$ is closed and stable.
% That $\Delta^{\neg}$ is closed follows directly by it being conflict-free and stable.
\end{proof}

\begin{example}[Ex. \ref{ex1} contd]
Take $Ab_{\Omega}=\{ \lnot( A \land {\sim} A)\mid A\in \mathcal{L}_{\bf CLuN}\}$ and $\mathcal{R}$ an adequate rule system for ${\bf CLuN}$.
Where $\Gamma=\{{\sim} p,{\sim} q, p\lor  q, p\lor r, q\lor s\}$. There are two stable extensions: $Ab_{\Omega}\setminus \{\lnot ( p\land {\sim}p)\}$ and $Ab_{\Omega}\setminus \{\lnot ( q\land {\sim} q))\}$. To see this observe that e.g.\  $\Gamma\cup \{\lnot ( q\land {\sim} q)\} \vdash_{\bf CLuN}  p\land {\sim}p$.
\end{example}

\chr[2016-03-31]{
  \begin{lemma} \label{lem:AL:ABA:sem:eq:2}
   If $\Delta^{\neg} \subseteq Ab_{\Omega}$ is conflict-free in \({\bf ABF}_{\bf L}^{\Omega}(\Gamma)\) then there is a \(\phi \in \Phi_{\bf L}(\Gamma)\) for which \(\Delta \subseteq \Omega \setminus \phi\).
  \end{lemma}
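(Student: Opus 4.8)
The plan is to prove the contrapositive-flavored containment by exhibiting a suitable choice set and then invoking minimality. Concretely, I would first reduce the goal to a single combinatorial claim: it suffices to show that \(\Omega \setminus \Delta\) is a choice set over \(\Sigma_{\bf L}(\Gamma)\). Indeed, once that is established, Fact~\ref{choicesetfundamental}.1 hands me a \(\subset\)-minimal choice set \(\phi \in \Phi_{\bf L}(\Gamma)\) with \(\phi \subseteq \Omega \setminus \Delta\); this immediately gives \(\phi \cap \Delta = \emptyset\), i.e.\ \(\Delta \subseteq \Omega \setminus \phi\), which is exactly the conclusion sought. So the whole argument funnels into verifying the choice-set property.

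The choice-set property amounts to showing that every \(\Theta \in \Sigma_{\bf L}(\Gamma)\) meets \(\Omega \setminus \Delta\), equivalently that no minimal disjunction of abnormalities \(\Theta\) is contained in \(\Delta\). I would argue this by contradiction. Suppose \(\Theta \subseteq \Delta\) for some \(\Theta \in \Sigma_{\bf L}(\Gamma)\). Since \(\Theta\) is non-empty, pick \(B \in \Theta\); note \(B \in \Omega\), so \(\neg B \in Ab_\Omega\) is a candidate assumption, and since \(B \in \Delta\) we have \(\neg B \in \Delta^{\neg}\), whence trivially \(\Gamma \cup \Delta^{\neg} \vdash \neg B\). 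For the other half of the conflict, I use that \(\Theta \in \Sigma_{\bf L}(\Gamma)\) gives \(\Gamma \vdash \bigvee \Theta\), and that \(\Theta \setminus \{B\} \subseteq \Delta\); hence \(\bigvee \Theta \vdash B \vee \bigvee \Delta\) by classical disjunctive reasoning in the supraclassical core, so \(\Gamma \vdash B \vee \bigvee \Delta\). Applying Fact~\ref{fact:supra:L} (with \(\overline{\Delta^{\neg}} = \Delta\) and \(A = B\)) then yields \(\Gamma \cup \Delta^{\neg} \vdash B\). Thus both \(B\) and \(\neg B = \overline{\neg B}\) are derivable from \(\Gamma \cup \Delta^{\neg}\), contradicting the conflict-freeness of \(\Delta^{\neg}\) at the candidate assumption \(\neg B\).

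This contradiction establishes that \(\Theta \not\subseteq \Delta\) for every \(\Theta \in \Sigma_{\bf L}(\Gamma)\), so \(\Omega \setminus \Delta\) is indeed a choice set, and the reduction in the first paragraph closes the proof. I expect the main obstacle to be the central derivation in the second paragraph: recognizing that a minimal disjunction of abnormalities sitting entirely inside \(\Delta\) must collapse into an explicit contradiction, and translating this correctly between \(\mathcal{R}({\bf L})\)-derivability under the assumptions \(\Delta^{\neg}\) and plain \({\bf L}\)-derivability of disjunctions via Fact~\ref{fact:supra:L}. The degenerate case \(\Sigma_{\bf L}(\Gamma) = \emptyset\) (where \(\phi = \emptyset\) works vacuously) should be noted but needs no separate treatment, since the choice-set argument covers it.
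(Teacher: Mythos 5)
Your proposal is correct and follows essentially the same route as the paper's proof: both arguments reduce the claim via Fact~\ref{choicesetfundamental} to whether \(\Omega \setminus \Delta\) is a choice set over \(\Sigma_{\bf L}(\Gamma)\), and both derive a conflict at some \(\neg B \in Ab_{\Omega}\) from a minimal disjunction \(\Theta \in \Sigma_{\bf L}(\Gamma)\) with \(\Theta \subseteq \Delta\) using the supraclassicality of \(\mathbf{L}\) (Fact~\ref{fact:supra:L}). Your write-up is in fact slightly more careful than the paper's in keeping the \(\Delta\) versus \(\Delta^{\neg}\) bookkeeping straight.
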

  \begin{proof}
    Suppose \(\Delta \not\subseteq \Omega \setminus \phi\) for all \(\phi \in \Phi_{\bf L}(\Gamma)\) and \(\Delta \subseteq \Omega\). By Fact \ref{choicesetfundamental}, \(\Omega \setminus \Delta\) is not a choice set of \(\Sigma_{\bf L}(\Gamma)\). Thus, there is a \(\Theta \in \Sigma_{\bf L}(\Gamma)\) for which \(\Theta \subseteq \Delta\). Since \(\Gamma \vdash \bigvee\Theta\), also \(\Gamma \cup (\Theta \setminus \lbrace A \rbrace) \vdash \neg A\) for any \(A \in \Theta\). Thus, \(\Gamma \cup \Delta\) is not \({\bf L}\)-consistent since \(\Gamma \cup \Delta \vdash A, \neg A\) by monotonicity. By Fact \ref{fact:ABA:AL:L:R}, \(\Gamma \cup \Delta \vdash_{\mathcal{R}(\mathbf{L})} A, \neg A\) and thus, \(\Delta\) is not conflict-free in \({\bf ABF}_{\bf L}^{\Omega}(\Gamma)\).
  \end{proof}
}

With Lemmas \ref{ABAconflictfree} \chr[2016-03-31]{and \ref{lem:AL:ABA:sem:eq:2}} we immediately get:
\begin{lemma}\label{lem:AL:ABA:sem:eq} 
Where $\Gamma \subseteq \mathcal{L}$,
$\chr[2016-03-31]{\{ Ab_{\Omega} \setminus \phi^{\neg} \mid \phi \in \Phi_{\bf L}(\Gamma) \}} = {\sf stb}(\mathbf{ABF}_{\mathbf{L}}^{\Omega}(\Gamma)) = {\sf prf}(\mathbf{ABF}_{\mathbf{L}}^{\Omega}(\Gamma)) = {\sf niv}(\mathbf{ABF}_{\mathbf{L}}^{\Omega}(\Gamma))$
\end{lemma}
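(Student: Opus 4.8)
The plan is to combine the two preceding lemmas with the standard inclusions between the three argumentation semantics. Let me write $\mathcal{E}$ for the set $\{ Ab_{\Omega} \setminus \phi^{\neg} \mid \phi \in \Phi_{\bf L}(\Gamma) \}$ appearing on the left. The first, easy step is the inclusion $\mathcal{E} \subseteq {\sf stb}(\mathbf{ABF}_{\mathbf{L}}^{\Omega}(\Gamma))$, which is exactly what Lemma \ref{ABAconflictfree} delivers: every $Ab_{\Omega} \setminus \phi^{\neg}$ with $\phi \in \Phi_{\bf L}(\Gamma)$ is stable. Next I would invoke the standard facts of Dung-style argumentation, transported to the ABA setting of the excerpt: a stable set, being closed, conflict-free and attacking everything outside itself, is in particular admissible and cannot be properly extended while remaining admissible, so it is preferred; likewise it cannot be properly extended while remaining conflict-free, so it is naive. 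This gives ${\sf stb}(\mathbf{ABF}_{\mathbf{L}}^{\Omega}(\Gamma)) \subseteq {\sf prf}(\mathbf{ABF}_{\mathbf{L}}^{\Omega}(\Gamma))$ and ${\sf stb}(\mathbf{ABF}_{\mathbf{L}}^{\Omega}(\Gamma)) \subseteq {\sf niv}(\mathbf{ABF}_{\mathbf{L}}^{\Omega}(\Gamma))$.

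The substance of the argument is the two reverse inclusions ${\sf prf} \subseteq \mathcal{E}$ and ${\sf niv} \subseteq \mathcal{E}$, both obtained from Lemma \ref{lem:AL:ABA:sem:eq:2}. Since $Ab = Ab_{\Omega}$, any candidate extension is a subset of $Ab_{\Omega}$ and hence of the form $\Delta^{\neg}$ for some $\Delta \subseteq \Omega$. If $\Delta^{\neg}$ is conflict-free, Lemma \ref{lem:AL:ABA:sem:eq:2} yields a $\phi \in \Phi_{\bf L}(\Gamma)$ with $\Delta \subseteq \Omega \setminus \phi$, and since the map $A \mapsto \neg A$ is injective on $\Omega$ we have $(\Omega \setminus \phi)^{\neg} = Ab_{\Omega} \setminus \phi^{\neg}$, so $\Delta^{\neg} \subseteq Ab_{\Omega}\setminus\phi^{\neg} \in \mathcal{E}$. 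Now I use that the containing member of $\mathcal{E}$ is stable (Lemma \ref{ABAconflictfree}), hence both closed/conflict-free and admissible. For a naive set $\Delta^{\neg}$, maximality among conflict-free sets forces equality with the conflict-free set $Ab_{\Omega}\setminus\phi^{\neg}$ containing it; for a preferred set $\Delta^{\neg}$, maximality among admissible sets forces equality with the admissible set $Ab_{\Omega}\setminus\phi^{\neg}$ containing it. In both cases $\Delta^{\neg} \in \mathcal{E}$.

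Assembling the pieces closes two cycles of inclusions: $\mathcal{E} \subseteq {\sf stb} \subseteq {\sf prf} \subseteq \mathcal{E}$ and $\mathcal{E} \subseteq {\sf stb} \subseteq {\sf niv} \subseteq \mathcal{E}$, whence all four sets coincide. I expect the one point that needs care, rather than routine invocation, is precisely the maximality argument in the reverse inclusions: it is essential that the element of $\mathcal{E}$ containing a given preferred (resp.\ naive) extension is itself admissible (resp.\ closed and conflict-free), which is guaranteed by Lemma \ref{ABAconflictfree}, so that the ``maximal'' in the definition of \emph{preferred} and \emph{naive} can be used to upgrade containment to equality. The rest is bookkeeping with the identity $(\Omega\setminus\phi)^{\neg} = Ab_{\Omega}\setminus\phi^{\neg}$ and the textbook implications stable $\Rightarrow$ preferred and stable $\Rightarrow$ naive.
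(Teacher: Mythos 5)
Your proof is correct and is exactly the argument the paper intends: the paper derives this lemma ``immediately'' from Lemmas \ref{ABAconflictfree} and \ref{lem:AL:ABA:sem:eq:2}, and your write-up simply makes explicit the inclusion cycles $\mathcal{E}\subseteq{\sf stb}\subseteq{\sf prf}\subseteq\mathcal{E}$ and $\mathcal{E}\subseteq{\sf stb}\subseteq{\sf niv}\subseteq\mathcal{E}$ together with the maximality step that the paper leaves implicit.
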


We are now in a position to prove our main result in this section: \medskip

\begin{proof}[Proof of Theorem \ref{mainresultABA}]
In view of Lemma \ref{lem:AL:ABA:sem:eq} it is enough to show the theorem for ${\sf sem}= {\sf stb}$.

Ad 3. ${\bf ABF}_{\bf L}^{\Omega}(\Gamma) \vdash^{\Cap}_{\sf stb} A$ iff $\Gamma \cup \bigcap\{ \Delta \mid \Delta \in {\sf stb}(\mathbf{ABF}_{\mathbf{L}}^{\Omega}(\Gamma))\} \vdash A$. By Lemma \ref{ABAconflictfree}, this is the case iff $\Gamma \cup \bigcap \{(\Omega \setminus \phi)^{\neg} \mid \phi \in \Phi_{\mathbf{L}}(\Gamma)\} \vdash A$. Since $\bigcup \Phi_{\mathbf{L}}(\Gamma) = \bigcup \Sigma_{\mathbf{L}}(\Gamma)$ (which is easy to see and left to the reader), this is equivalent to $\Gamma \cup (\Omega \setminus \bigcup \Sigma_{\mathbf{L}}(\Gamma))^{\neg} \vdash A$. By compactness, monotonicity and Fact \ref{fact:supra:L}, this is equivalent to $\Gamma \vdash A \vee \bigvee \Delta$ for some finite $\Delta \subseteq \Omega \setminus \bigcup \Sigma_{\mathbf{L}}(\Gamma)$. By Theorem \ref{rel} this is equivalent to $\Gamma \vdash^{\Omega,{\bf L}}_{\sf r} A$.

% Ad 1. ${\bf ABF}_{\bf L}^{\Omega}(\Gamma) \vdash^{\cup}_{\sf stb} A$ iff for every $\Delta \in {\sf sem}(\mathbf{ABF}_{\mathbf{L}}^{\Omega}(\Gamma))$, $\Gamma \cup \Delta \vdash A$. By Lemma \ref{ABAconflictfree}.2, this is the case iff for all $\phi \in \Phi_{\mathbf{L}}(\Gamma)$, $\Gamma \cup (\Omega \setminus \phi)^{\neg} \vdash A$. By the Fact \ref{fact:supra:L}, the compactness and the monotonicity of $\mathbf{L}$, this holds iff $\Gamma \vdash \bigvee \Delta \vee A$ for some finite $\Delta \subseteq \Omega \setminus \phi$. This, by Fact \ref{ns}, means that $\Gamma\vdash^{\Omega,{\bf L}}_{ns} A$.

Ad 1.\ and 2.\ Analogous.
\end{proof}

\subsection{Translating Adaptive Logic to ASPIC$^+$}
In \cite{Prakken2010} we have a translation from ABA to ASPIC$^+$. Although this translation requires several assumptions that $\mathbf{ABF}_{\mathbf{L}}^{\Omega}(\Gamma)$ does not satisfy, it turns out that it is easy to prove that any $\mathbf{ABF}_{\mathbf{L}}^{\Omega}(\Gamma)$ can easily be translated to an assumption-based framework that does satisfy these assumptions. 

The underlying idea is basically the same as that for translating AL into ABA: the plausible knowledge base consists of the negated abnormalities, the strict premises of the ASPIC$^+$ framework are the premise set $\Gamma$ and the strict rules of the ASPIC$^+$ framework are the inference rules of the monotonic core logic. Due to spatial restrictions, we are not able to present the full technical details of this translation and the adequacy results here.

%%% Local Variables:
%%% mode: latex
%%% TeX-master: "KR.tex"
%%% End:
%\input{al2aspic}

\section{Translating ASPIC$^+$ to Assumption-Based Argumentation}
\label{sec:transl-aspic+-assump}
\def\concl{{\rm conc}} \def\Rc{\mathcal{R}} \def\phov{\overline{\phantom{A}}}

In this section we translate ASPIC$^+$ to ABA. Since in ABA we have no defeasible rules and less attack types than in ASPIC$^+$ the possibility of this translation is less expected than the translation in the other direction (as provided in \cite{Prakken2010}). In this section we \jesse{thus} offer an answer to the open question stated in \cite{modgil2014} whether such a translation can be given. Our translation works as follows:

\begin{definition}\label{def:AS:ABF}
  Where \({\rm AS} = (\mathsf{R}, \phov)\) is an argumentation system in the formal language $\mathcal{L}$ with a naming function \(N\) for the rules in $\mathsf{R}$ and \(\mathcal{K} = \mathcal{K}_n \cup \mathcal{K}_a\) is a knowledge base, we translate ${\rm AS}$ into an assumption-based framework \({\bf ABF}({\rm AS}) = ((\mathcal{L}', \mathcal{R}), \mathcal{K}_n, Ab, \phov)\) as follows\footnote{For simplicity, we will assume that the contrariness function of the ASPIC$^+$-framework assigns a unique contrary to every $A\in \mathcal{L}$. If this assumption is not satisfied, one has to add $A^c_1\rightarrow -A,\ldots, A^c_n\rightarrow -A$ for every $A^c_i\in \bar{A}$, where $-A\in\mathcal{L}'\setminus\mathcal{L}$ is the contrary of $A$ in ABA, as suggested by \cite[p.109]{Toni2014}.}
\begin{itemize}
\item $\mathcal{L}' \supseteq \mathcal{L}$ is such that $\mathcal{L}' \setminus \mathcal{L}$ contains for each $r$ in $\mathsf{R}$ a unique name $n(r)$ and its contrary $\overline{n(r)}$;\footnote{Formally: $\mathcal{L}' \setminus \mathcal{L} = \{n(r)\mid r \mbox{ in } {\sf R}\} \cup \{\overline{n(r)} \mid r \mbox{ in } {\sf R}\}$ \chr[2016-03-31]{(where $\{n(r)\mid r \mbox{ in } {\sf R}\} \cap \{\overline{n(r)} \mid r \mbox{ in } {\sf R}\} = \emptyset$)}. This warrants that, unlike the names $N(r) \in \mathcal{L}$ used in ${\rm AS}$, the new names $n(r)$ are not antecedents and consequents of rules in $\mathsf{R}$. We use the new names to 'simulate' defeasible rules in ABA.}
\item \(\mathcal{R}\) contains each strict rule from \(\mathsf{R}\) and for each defeasible rule \(r: A_1, \ldots, A_n \Rightarrow A\) it contains\footnote{We suppose that the rules in $\mathcal{R}$ are instances as opposed to schemes. The translation can easily be adjusted to schemes.}
\begin{itemize}
\item the rule \(n(r), A_1, \ldots, A_n \rightarrow A\) % where \(n\) is a naming function for rules such that the co-domain of \(n\) is \(\mathcal{L}' \setminus \mathcal{L}\) (and hence it is not intersecting with the co-domain of \(N\) which is a subset of $\mathcal{L}$)
\item the rule \(\overline{A} \rightarrow \overline{n(r)}\)
\end{itemize}
\item \(Ab = \mathcal{K}_a \cup \lbrace n(r) \mid r\) is a defeasible rule in \(\mathsf{R} \rbrace\)
\end{itemize}
\end{definition}

Below we will show that the translation is adequate in view of the following corollary:

\def\ABAG{\ensuremath{{\bf ABF}({\rm AS})}}
\begin{corol}\label{aspic2aba}
Where ${\bf AT} = ({\rm Arg}({\rm AS}, \mathcal{K}), \leadsto)$ is a structured argumentation framework and ${\sf sem} \in \{{\sf stb}, {\sf prf}\}$,
\begin{enumerate}
\item \(\ABAG \vdash^{\cup}_{\sf sem} A\) iff \({\bf AT} \vdash^{\cup}_{\sf sem} A\)
\item \(\ABAG \vdash^{\cap}_{\sf sem} A\) iff \({\bf AT} \vdash^{\cap}_{\sf sem} A\).
\item \(\ABAG \vdash^{\Cap}_{\sf sem} A\) iff \({\bf AT} \vdash^{\Cap}_{\sf sem} A\).
\end{enumerate}
\end{corol}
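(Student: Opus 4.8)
The plan is to reduce the corollary to a single correspondence between the extensions of \ABAG\ and those of ${\bf AT}$, and then read off the three consequence relations. First I would introduce two maps translating between the two levels of abstraction. Given a set of assumptions $\Delta \subseteq Ab$, let $\mathsf{Arg}(\Delta)$ be the set of all arguments in ${\rm Arg}({\rm AS},\mathcal{K})$ all of whose plausible premises lie in $\Delta \cap \mathcal{K}_a$ and all of whose defeasible rules $r$ satisfy $n(r) \in \Delta$; conversely, given a set of arguments $\mathcal{B}$, let $\mathsf{Asm}(\mathcal{B})$ collect the plausible premises occurring in members of $\mathcal{B}$ together with the names $n(r)$ of the defeasible rules they use. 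The first milestone is a deduction lemma: for every $A \in \mathcal{L}$, $\mathcal{K}_n \cup \Delta \vdash_{\mathcal{R}} A$ iff some $a \in \mathsf{Arg}(\Delta)$ has ${\rm conc}(a)=A$. The forward direction transforms an $\mathcal{R}$-deduction into an ASPIC$^+$ argument: each application of a simulated rule $n(r),A_1,\ldots,A_n \rightarrow A$ requires $n(r)\in\Delta$ and corresponds to an application of the defeasible rule $r$, while strict rules map to strict rule-arguments. Crucially, one checks that the auxiliary symbols $n(r),\overline{n(r)} \in \mathcal{L}'\setminus\mathcal{L}$ never occur as antecedents feeding into a conclusion in $\mathcal{L}$, so they cannot interfere with derivations of formulas in $\mathcal{L}$; the converse is a routine induction on argument structure.

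The core of the proof is the attack correspondence, showing that the three attack forms of ASPIC$^+$ are faithfully and exactly simulated by ABA's single notion of attack on an assumption. Concretely: an undermining attack on a plausible premise $B \in \mathcal{K}_a$ corresponds, via the deduction lemma, to $\mathcal{K}_n \cup \Delta \vdash_{\mathcal{R}} \overline{B}$ and hence to an ABA-attack on the assumption $B$; a rebut on the conclusion $A$ of a defeasible rule $r$ is mediated by the translation rule $\overline{A} \rightarrow \overline{n(r)}$, so that deriving $\overline{A}$ yields $\overline{n(r)}$ and attacks the assumption $n(r)$; and an undercut on $r$ is likewise resolved as an attack on $n(r)$. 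The key lemma I would then establish is the extension correspondence: for ${\sf sem} \in \{{\sf stb},{\sf prf}\}$, $\mathsf{Arg}(\cdot)$ and $\mathsf{Asm}(\cdot)$ restrict to mutually inverse bijections between ${\sf sem}(\ABAG)$ and ${\sf sem}({\bf AT})$, and corresponding pairs $(\Delta,\mathcal{B})$ satisfy $\{{\rm conc}(a)\mid a\in\mathcal{B}\} = \{A\in\mathcal{L}\mid \mathcal{K}_n\cup\Delta\vdash_{\mathcal{R}}A\}$. This requires transferring conflict-freeness, defence/admissibility, $\subseteq$-maximality (for preferred) and the stability condition across the two maps, using the attack correspondence together with the ABA closedness requirement to guarantee that an extension's argument-image contains exactly the arguments it can build.

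With the extension correspondence in hand, the three items follow by unfolding definitions. For item~1, $\ABAG \vdash^{\cup}_{\sf sem} A$ iff some $\Delta \in {\sf sem}(\ABAG)$ derives $A$, iff (deduction lemma) the corresponding $\mathcal{B}=\mathsf{Arg}(\Delta)$ contains an argument concluding $A$, iff ${\bf AT}\vdash^{\cup}_{\sf sem}A$; item~2 is identical with the existential quantifier over extensions replaced by a universal one. For item~3 I would additionally use that $\mathsf{Arg}$ commutes with intersection, i.e.\ $\bigcap_i \mathsf{Arg}(\Delta_i) = \mathsf{Arg}(\bigcap_i \Delta_i)$ (monotonicity of ``being buildable from $\Delta$''), so that the intersection of the ABA extensions maps to the intersection of the ASPIC$^+$ extensions; applying the deduction lemma to $\bigcap\{\Delta\mid \Delta\in{\sf sem}(\ABAG)\}$ then equates $\mathcal{K}_n \cup \bigcap\{\Delta\mid \Delta\in{\sf sem}(\ABAG)\} \vdash_{\mathcal{R}} A$ with the existence of a single argument concluding $A$ that lies in every extension.

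The main obstacle will be the attack correspondence, and within it the faithful treatment of undercut: one must verify that ABA introduces no spurious attacks (that deriving an auxiliary symbol $\overline{n(r)}$ always traces back to a genuine rebut or undercut in ASPIC$^+$) and misses none, and that the closedness condition on ABA assumption-sets aligns with the sub-argument closure of ASPIC$^+$ extensions. The remaining delicate points are matching $\subseteq$-maximality on both sides for the preferred case and checking that $\mathsf{Arg}$ and $\mathsf{Asm}$ are genuinely inverse \emph{on extensions}, rather than merely on arbitrary sets of assumptions and arguments.
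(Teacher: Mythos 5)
Your proposal is correct and follows essentially the same route as the paper: your maps $\mathsf{Arg}(\cdot)$ and $\mathsf{Asm}(\cdot)$ are the paper's ${\rm Arg}_{\Delta}$ and $Ab_{\mathcal{A}}$, your deduction lemma is its Lemma \ref{lem:ab:concl} together with Fact \ref{fact:ab:concl}, the attack correspondence is carried out inside Lemmas \ref{lem:aba:aspic:1} and \ref{lem:aba:aspic:2}, and the extension correspondence is Theorem \ref{thm:aspic2aba}. The only caveat is that the two maps are not literally mutually inverse bijections on extensions --- a preferred argument set $\mathcal{A}$ corresponds to a preferred assumption set $\Delta \supseteq Ab_{\mathcal{A}}$ that may properly extend $Ab_{\mathcal{A}}$ (e.g.\ by names of inapplicable defeasible rules) --- but since ${\rm Arg}_{\Delta}=\mathcal{A}$ and the derivable conclusions are preserved, this does not affect any of the three items.
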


In the following we suppose a given argumentation system ${\rm AS}$ and its translation ${\bf ABF}({\rm AS})$ as in Definition \ref{def:AS:ABF}.

\begin{definition}\label{def:aspic2aba}
Where \(\Delta \subseteq Ab\), \({\rm Arg}_{\Delta} \subseteq {\rm Arg(AS,\mathcal{K})}\) is the set of all arguments \(a\) that use only defeasible assumptions in \(\Delta\), any strict rules, and only defeasible rules \(r\) for which \(n(r) \in \Delta\). 

Where \(\mathcal{A} \subseteq {\rm Arg(AS,\mathcal{K})}\) is a set of arguments, \(Ab_{\mathcal{A}} \subseteq Ab\) is the set of assumptions consisting of (1) defeasible assumptions $A \in \mathcal{K}_{a}$ for which ${\rm prem}(a) = A$ or ${\rm conc}(a) = A$ for some \(a \in \mathcal{A}\) and (2) of \(n(r)\) where \(r\) is a defeasible rule used in some argument in \(\mathcal{A}\). 

Where \(\mathcal{A}\) is a set of arguments in \({\rm Arg}({\rm AS}, \mathcal{K})\), %and \(\Theta = \Delta_{\mathcal{A}}\), 
\(\mathcal{A}^{\star}\) denotes the set ${\rm Arg}_{Ab_{\mathcal{A}}}$.
%\(\mathcal{A}_{\Theta}\).
\end{definition}
We sometimes write \(Ab_{a}\) instead of \(Ab_{\lbrace a \rbrace}\).

\begin{fact} \label{fact:A:A:star}
  \chr[2016-03-31]{Where \(\mathcal{A} \subseteq {\rm Arg(AS,\mathcal{K})}\) is a set of assumptions, $\mathcal{A} \subseteq \mathcal{A}^{\star}$.}
\end{fact}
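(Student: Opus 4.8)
The plan is to unfold the two defining clauses of $\mathcal{A}^{\star} = {\rm Arg}_{Ab_{\mathcal{A}}}$ (Definition \ref{def:aspic2aba}) and to check that an arbitrary $a \in \mathcal{A}$ meets each membership condition for ${\rm Arg}_{Ab_{\mathcal{A}}}$. Recall that $a \in {\rm Arg}_{\Delta}$ holds exactly when $a$ uses only defeasible assumptions lying in $\Delta$, arbitrary strict rules, and only defeasible rules $r$ with $n(r) \in \Delta$. So, fixing $a \in \mathcal{A}$, I would establish three things with $\Delta = Ab_{\mathcal{A}}$: (i) every defeasible assumption used by $a$ is in $Ab_{\mathcal{A}}$; (ii) that no constraint on strict rules has to be verified; and (iii) that for every defeasible rule $r$ used by $a$, the name $n(r)$ is in $Ab_{\mathcal{A}}$.

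For (i), the defeasible assumptions used by $a$ are precisely the elements of ${\rm prem}(a) \cap \mathcal{K}_{a}$, i.e.\ those $A \in \mathcal{K}_{a}$ occurring as the premise of some premise-subargument $\langle A \rangle \in {\rm sub}(a)$. Each such $A$ satisfies $A \in {\rm prem}(a)$ with the witnessing argument $a \in \mathcal{A}$, so clause (1) in the definition of $Ab_{\mathcal{A}}$ places $A \in Ab_{\mathcal{A}}$. For (iii), if a defeasible rule $r$ is applied in the construction of $a$ — that is, some subargument of $a$ has the form $\langle b_1, \ldots, b_n \Rrightarrow B \rangle$ built from $r$ — then $r$ is a defeasible rule used in some argument in $\mathcal{A}$ (namely $a$ itself), so clause (2) yields $n(r) \in Ab_{\mathcal{A}}$. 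Clause (ii) is vacuous because ${\rm Arg}_{\Delta}$ imposes no restriction on strict rules. Hence $a \in {\rm Arg}_{Ab_{\mathcal{A}}} = \mathcal{A}^{\star}$, and since $a \in \mathcal{A}$ was arbitrary, $\mathcal{A} \subseteq \mathcal{A}^{\star}$.

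I do not expect any genuine obstacle here: the statement is an immediate consequence of the definitions, the only point demanding care being the reading of $Ab_{\mathcal{A}}$, whose clauses quantify over members of $\mathcal{A}$ rather than directly over their subarguments. The crux is simply to observe that $a$ itself serves as the required witness in both clauses, so every assumption and every defeasible-rule name occurring anywhere in the construction of $a$ (collected through ${\rm prem}$ and ${\rm sub}$) is indeed gathered into $Ab_{\mathcal{A}}$, which is exactly what membership in ${\rm Arg}_{Ab_{\mathcal{A}}}$ requires.
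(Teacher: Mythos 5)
Your proof is correct and is exactly the justification the paper has in mind: the paper states this as a Fact without proof precisely because, as you show, $Ab_{\mathcal{A}}$ by construction collects every defeasible assumption and every defeasible-rule name occurring in any $a \in \mathcal{A}$, so each such $a$ trivially satisfies the membership conditions for ${\rm Arg}_{Ab_{\mathcal{A}}} = \mathcal{A}^{\star}$. Your reading of the slightly loosely stated clause (1) (taking ``${\rm prem}(a) = A$'' to mean $A \in {\rm prem}(a)$) is the intended one.
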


\begin{lemma}
\label{lem:ab:concl}
Where \(A \neq n(r)\) for any $r$ in $\mathsf{R}$ and \(\Delta \subseteq Ab\), if \(\mathcal{K}_n \cup \Delta \vdash_{\mathcal{R}} A\) then
\begin{enumerate}
\item if $A \in \mathcal{L}$, there is an \(a \in {\rm Arg}_{\Delta}\) such that \(\concl(a) = {A}\), 
\item else (if \(A = \overline{n(r)}\)), there is an \(a \in {\rm Arg}_{\Delta}\) for which \(\concl(a) = \overline{B}\) where $B$ is the consequent of $r$.
\end{enumerate}
\end{lemma}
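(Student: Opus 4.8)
The plan is to fix an $\mathcal{R}$-deduction $B_1, \ldots, B_m$ from $\mathcal{K}_n \cup \Delta$ witnessing $\mathcal{K}_n \cup \Delta \vdash_{\mathcal{R}} A$ (so $B_m = A$) and argue by strong induction on the position $i \le m$, establishing the conclusion simultaneously for parts 1 and 2 at every $B_i$ with $B_i \neq n(r)$; the lemma then follows by taking $i = m$. A single preliminary observation drives the whole argument: inspecting the rules of $\mathcal{R}$ in Definition \ref{def:AS:ABF}, no rule has a name $n(r)$ as its consequent — the strict rules of $\mathsf{R}$ and the simulation rules $n(r), A_1, \ldots, A_n \rightarrow A$ all conclude formulas of $\mathcal{L}$, while $\overline{A} \rightarrow \overline{n(r)}$ concludes a contrary-name $\overline{n(r)}$, which is disjoint from the set of names by the footnote to Definition \ref{def:AS:ABF}. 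Hence whenever a name $n(r)$ occurs in the deduction it must be a premise, and since $n(r) \notin \mathcal{K}_n$ (as $\mathcal{K}_n \subseteq \mathcal{L}$) it must lie in $\Delta$.

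First I would treat the premise case, where $B_i \in \mathcal{K}_n \cup \Delta$. If $B_i \in \mathcal{K}_n$ then the premise argument $\langle B_i \rangle$ lies in ${\rm Arg}_{\Delta}$ and concludes $B_i$; if $B_i \in \Delta$ then, since $B_i \neq n(r)$, necessarily $B_i \in \mathcal{K}_a \cap \Delta$, and again $\langle B_i \rangle \in {\rm Arg}_{\Delta}$ with $\concl(\langle B_i \rangle) = B_i \in \mathcal{L}$, giving part 1.

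Next comes the inductive step, where $B_i$ is the consequent of a rule $C_1, \ldots, C_k \rightarrow B_i \in \mathcal{R}$ whose antecedents all occur among $B_1, \ldots, B_{i-1}$. I would split according to the three shapes of $\mathcal{R}$-rules. For a strict rule of $\mathsf{R}$ all $C_j \in \mathcal{L}$, so the induction hypothesis (part 1) supplies $a_j \in {\rm Arg}_{\Delta}$ with $\concl(a_j) = C_j$, and $\langle a_1, \ldots, a_k \mapsto B_i \rangle$ is the required argument in ${\rm Arg}_{\Delta}$. For a simulation rule $n(r), A_1, \ldots, A_n \rightarrow B_i$, the preliminary observation forces $n(r) \in \Delta$, while the induction hypothesis yields $a_j \in {\rm Arg}_{\Delta}$ concluding each $A_j \in \mathcal{L}$; then $\langle a_1, \ldots, a_n \Rrightarrow B_i \rangle$ uses the defeasible rule $r$ with $n(r) \in \Delta$ and hence lies in ${\rm Arg}_{\Delta}$, establishing part 1. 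Finally, for a rule $\overline{B} \rightarrow \overline{n(r)}$ with $B$ the consequent of $r$, we are in the case $B_i = \overline{n(r)}$; here $\overline{B} \in \mathcal{L}$, so the induction hypothesis gives $a \in {\rm Arg}_{\Delta}$ with $\concl(a) = \overline{B}$, which is exactly part 2.

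The main obstacle I anticipate is not any single case but the bookkeeping that keeps every constructed argument inside ${\rm Arg}_{\Delta}$: one must verify that the assembled sub-arguments never import a defeasible assumption outside $\Delta$ and never invoke a defeasible rule $r'$ with $n(r') \notin \Delta$. The crucial enabling fact is precisely the observation that names occur only as premises, since this is what licenses the inference from ``the simulation rule for $r$ fired'' to ``$n(r) \in \Delta$'', and thereby to the admissibility of the corresponding defeasible rule-argument. Beyond this, the only extra care needed is that contraries of $\mathcal{L}$-formulas remain in $\mathcal{L}$ (guaranteed by the uniqueness assumption on the contrariness function adopted in Definition \ref{def:AS:ABF}), so that the part-1 form of the induction hypothesis always applies to the antecedents encountered.
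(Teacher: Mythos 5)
Your proof is correct and follows essentially the same route as the paper's: induction on the length of the $\mathcal{R}$-deduction with the same three-way case split on the rule shapes from Definition \ref{def:AS:ABF}. The only difference is that you spell out explicitly why $n(r)\in\Delta$ in the simulation-rule case (names never occur as rule consequents, so they must enter as premises from $\Delta$), a point the paper asserts without elaboration.
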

\begin{proof}
This can be shown by an induction on the length of a deduction from \(\mathcal{K}_n \cup \Delta \) to \( A\). Base step: this is trivial since \(A \in \mathcal{K}\). Inductive step. We have three possibilities:
\begin{enumerate}
\item \(A\) is the result of applying a strict rule \(r\) in \(\mathsf{R}\) to \(A_1, \ldots, A_n\), or
\item \(A\) is the result of applying \jesse{the translation of a defeasible rule} %a translated defeasible rule 
\(r = A_1, \ldots, A_n \Rightarrow A\in\mathsf{R}\) to \(A_{1}, \ldots, A_{n}\) and the rule name \(n(r)\), or
\item $A = \overline{n(r)}$ is the result of applying a rule $\overline{B} \rightarrow \overline{n(r)}$ where $B$ is the consequen\jesse{t} of the defeasible rule $r$ in $\mathsf{R}$.
\end{enumerate}
Ad 2. By the induction hypothesis there are arguments \(a_i\) (\(1\le i \le n\)) s.t.\ \(a_i \in {\rm Arg}_{\Delta}\) and \(\concl(a_i) = A_i\). (Note here that \(A_i \notin \mathcal{L}' \setminus \mathcal{L}\).) Clearly, $a = \langle a_1, \ldots, a_n \Rrightarrow A \rangle \in {\rm Arg}_{\Delta}$ since $n(r) \in \Delta$. %
Ad 1. Analogous. %
Ad 3. By the induction hypothesis and since $\overline{B} \in \mathcal{L}$, there is an argument $a \in {\rm Arg}_{\Delta}$ with ${\rm conc}(a) = \overline{B}$.
\end{proof}

The other direction of Lemma \ref{lem:ab:concl}.1 follows immediately in view of Definition \ref{def:aspic2aba}:
\begin{fact} \label{fact:ab:concl}
Where $\mathcal{A} \subseteq {\rm Arg}({\rm AS},\mathcal{K})$, if there is an $a \in \mathcal{A}$ with ${\rm conc}(a) = A$ then $\mathcal{K}_n \cup Ab_{\mathcal{A}} \vdash_{\mathcal{R}} A$.
\end{fact}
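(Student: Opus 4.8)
The plan is to prove Fact \ref{fact:ab:concl}, which is the converse direction of Lemma \ref{lem:ab:concl}.1. The statement says: if there is an argument $a \in \mathcal{A}$ with ${\rm conc}(a) = A$, then $\mathcal{K}_n \cup Ab_{\mathcal{A}} \vdash_{\mathcal{R}} A$. Since the paper calls this an immediate consequence of Definition \ref{def:aspic2aba}, I would give a short structural-induction argument on the construction of the argument $a$, tracking that every ingredient used in building $a$ is available as a premise or rule in the ABA framework ${\bf ABF}({\rm AS})$, with the requisite assumptions collected precisely into $Ab_{\mathcal{A}}$.

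First I would fix $a \in \mathcal{A}$ with ${\rm conc}(a) = A$ and proceed by induction on the structure of $a$ (equivalently, on the number of rules/sub-arguments in ${\rm sub}(a)$). For the base case, $a = \langle A \rangle$ is a premise argument, so $A \in \mathcal{K} = \mathcal{K}_n \cup \mathcal{K}_a$. If $A \in \mathcal{K}_n$ then $A \in \mathcal{K}_n$ is immediately $\mathcal{R}$-derivable; if $A \in \mathcal{K}_a$ then by clause (1) of the definition of $Ab_{\mathcal{A}}$ we have $A \in Ab_{\mathcal{A}}$, so again $\mathcal{K}_n \cup Ab_{\mathcal{A}} \vdash_{\mathcal{R}} A$ trivially.

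For the inductive step I would distinguish the two ways $a$ can be a rule-argument. If $a = \langle a_1, \ldots, a_n \mapsto A \rangle$ is a strict rule-argument, then there is a strict rule ${\rm conc}(a_1), \ldots, {\rm conc}(a_n) \to A$ in $\mathsf{R}$, which by Definition \ref{def:AS:ABF} is also a rule in $\mathcal{R}$; each $a_i \in {\rm sub}(a) \subseteq \mathcal{A}^{\star}$, and crucially $Ab_{a_i} \subseteq Ab_{\mathcal{A}}$ since every assumption and defeasible rule used in $a_i$ is used in $a$ and hence counted in $Ab_{\mathcal{A}}$. By the induction hypothesis $\mathcal{K}_n \cup Ab_{\mathcal{A}} \vdash_{\mathcal{R}} {\rm conc}(a_i)$ for each $i$, and applying the strict rule yields $\mathcal{K}_n \cup Ab_{\mathcal{A}} \vdash_{\mathcal{R}} A$. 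If instead $a = \langle a_1, \ldots, a_n \Rrightarrow A \rangle$ is a defeasible rule-argument coming from $r = {\rm conc}(a_1), \ldots, {\rm conc}(a_n) \Rightarrow A$, then by clause (2) of the definition of $Ab_{\mathcal{A}}$ we have $n(r) \in Ab_{\mathcal{A}}$, and by Definition \ref{def:AS:ABF} the rule $n(r), {\rm conc}(a_1), \ldots, {\rm conc}(a_n) \to A$ is in $\mathcal{R}$. Again the induction hypothesis supplies each ${\rm conc}(a_i)$ from $\mathcal{K}_n \cup Ab_{\mathcal{A}}$, and combining these with $n(r) \in Ab_{\mathcal{A}}$ and the translated rule gives $\mathcal{K}_n \cup Ab_{\mathcal{A}} \vdash_{\mathcal{R}} A$.

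The only point requiring care — and the closest thing to an obstacle — is the bookkeeping that the assumption set $Ab_{\mathcal{A}}$ is large enough to supply what each sub-argument needs, i.e.\ that $Ab_{a_i} \subseteq Ab_{\mathcal{A}}$ for every $a_i \in {\rm sub}(a)$. This is immediate from the fact that the sub-arguments of $a$ use a subset of the defeasible assumptions and defeasible rules that $a$ itself uses, so both clauses of the definition of $Ab_{\mathcal{A}}$ accumulate at least these ingredients. Given this monotonicity of $Ab_{(\cdot)}$ under the sub-argument relation, the induction closes cleanly and the statement follows, which is why it is justified to present it as a fact following immediately from Definition \ref{def:aspic2aba}.
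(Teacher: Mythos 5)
Your proof is correct and is exactly the routine unpacking that the paper has in mind: the paper states this Fact without proof, asserting it ``follows immediately in view of Definition \ref{def:aspic2aba}'', and your structural induction (premise arguments landing in $\mathcal{K}_n$ or clause (1) of $Ab_{\mathcal{A}}$, strict rules carried over verbatim, defeasible rules simulated via $n(r)\in Ab_{\mathcal{A}}$ and the translated rule) is precisely that argument, including the needed observation that every assumption and rule name used by a sub-argument of $a\in\mathcal{A}$ is already collected in $Ab_{\mathcal{A}}$.
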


\begin{lemma}
\label{lem:Ac:cl}
Where \(\mathcal{A} \subseteq {\rm Arg}({\rm AS}, \mathcal{K})\), if \(\mathcal{A}\) is admissible then \(\mathcal{A}^{\star}\) is admissible.
\end{lemma}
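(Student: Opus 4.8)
The plan is to reduce the whole statement to a single \emph{attack-transfer} observation: every argument that attacks some member of $\mathcal{A}^{\star}$ already attacks some member of $\mathcal{A}$. Granting this, both clauses of admissibility for $\mathcal{A}^{\star}$ (conflict-freeness and defence of every member) follow by combining the admissibility of $\mathcal{A}$ with the inclusion $\mathcal{A} \subseteq \mathcal{A}^{\star}$ from Fact \ref{fact:A:A:star}.

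First I would prove the key claim: if $a \in \mathcal{A}^{\star}$ and $c \rightsquigarrow a$, then $c \rightsquigarrow \tilde{a}$ for some $\tilde{a} \in \mathcal{A}$. Since $\mathcal{A}^{\star} = {\rm Arg}_{Ab_{\mathcal{A}}}$, every defeasible ingredient of $a$ --- each defeasible premise $B \in {\rm prem}(a) \cap \mathcal{K}_a$ and the name $n(r)$ of every defeasible rule $r$ applied in $a$ --- lies in $Ab_{\mathcal{A}}$ by Definition \ref{def:aspic2aba}. I would then run through the three attack types of Definition \ref{def:ASPIC:att}. For a \emph{rebut} on a defeasible sub-argument of $a$ that applies a rule $r$ with (fixed) consequent $B'$, clause (2) in the definition of $Ab_{\mathcal{A}}$ guarantees that $r$ is also used in some $\tilde{a} \in \mathcal{A}$; that $\tilde{a}$ therefore carries a defeasible sub-argument with the same conclusion $B'$, so $c$ rebuts $\tilde{a}$. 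The \emph{undercut} case is identical, since an undercut depends only on the name of the attacked defeasible rule, which again occurs in some $\tilde{a} \in \mathcal{A}$. For an \emph{undermining} attack at $B \in {\rm prem}(a) \cap \mathcal{K}_a$, clause (1) yields some $\tilde{a} \in \mathcal{A}$ with $B \in {\rm prem}(\tilde{a})$ (whence $c$ undermines $\tilde{a}$) or ${\rm conc}(\tilde{a}) = B$.

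The defence clause is then immediate: given $a \in \mathcal{A}^{\star}$ and $c \rightsquigarrow a$, the claim supplies $\tilde{a} \in \mathcal{A}$ with $c \rightsquigarrow \tilde{a}$; admissibility of $\mathcal{A}$ defends $\tilde{a}$, producing $b \in \mathcal{A}$ with $b \rightsquigarrow c$, and $b \in \mathcal{A} \subseteq \mathcal{A}^{\star}$. For conflict-freeness I would argue by contradiction: suppose $a \rightsquigarrow a'$ with $a, a' \in \mathcal{A}^{\star}$. Applying the claim to the attacker $a$ of $a' \in \mathcal{A}^{\star}$ gives $\tilde{a} \in \mathcal{A}$ with $a \rightsquigarrow \tilde{a}$; admissibility of $\mathcal{A}$ then yields $b \in \mathcal{A}$ with $b \rightsquigarrow a$. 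But $a \in \mathcal{A}^{\star}$, so applying the claim once more to the attacker $b$ gives $\hat{a} \in \mathcal{A}$ with $b \rightsquigarrow \hat{a}$ --- contradicting the conflict-freeness of $\mathcal{A}$, since $b, \hat{a} \in \mathcal{A}$.

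The main obstacle is the undermining case of the attack-transfer claim, specifically the sub-case where $B \in \mathcal{K}_a$ enters $Ab_{\mathcal{A}}$ only as a \emph{conclusion} ${\rm conc}(\tilde{a}) = B$ rather than as a premise. There one must still exhibit an attack of $c$ on an argument of $\mathcal{A}$: if $\tilde{a}$ concludes $B$ by a defeasible rule this is a rebut, and if $\tilde{a}$ is the premise argument $\langle B \rangle$ it is again an undermining, so the only delicate point is to rule out (or absorb) a purely strict derivation of an assumption. Everything else is routine bookkeeping with Definitions \ref{def:AS:ABF} and \ref{def:aspic2aba}.
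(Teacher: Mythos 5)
Your proof follows the paper's own argument almost exactly: the paper's proof rests on the same attack-transfer claim (every attack on a member of $\mathcal{A}^{\star}$ induces an attack on some member of $\mathcal{A}$), derives the defence clause from Fact \ref{fact:A:A:star} just as you do, and obtains conflict-freeness by transferring the counter-attack of $\mathcal{A}$ on an attacker lying in $\mathcal{A}^{\star}$ back into a self-attack inside $\mathcal{A}$, which is precisely your double application of the transfer claim. The one sub-case you rightly flag as delicate --- an assumption $B \in \mathcal{K}_a$ entering $Ab_{\mathcal{A}}$ only as the \emph{conclusion} of an argument in $\mathcal{A}$, possibly derived purely strictly --- is likewise left implicit in the paper, which dismisses undermining as ``analogous'' to rebuttal.
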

\begin{proof}
Suppose there are \(a\) and \(b \in \mathcal{A}^{\star}\) s.t.\ \(a\) attacks \(b\). For each attack form it is easy to see that then there is a \(b'' \in \mathcal{A}\) s.t.\ \(a\) attacks \(b''\). Take, for instance, rebuttal. Then \(\concl(a) = \overline{B}'\) where \(B' = \concl(b')\) for some \(b'\in {\rm sub}(b)\). Hence, there is a defeasible rule \(r\) which is applied in \(b'\) to produce \(B'\). By the definition of ${\rm Arg}\jesse{_{Ab_\mathcal{A}}}$%_{\Delta}$, 
there is an argument \(b'' \in \mathcal{A}\) s.t.\ \(r\) is applied to produce \(\concl(b'') = B'\). For the other attack types (undercuts and undermines) this is shown in an analogous way. 
Now, since \(\mathcal{A}\) is admissible, there is a \(c \in \mathcal{A}\) s.t.\ \(c\) attacks \(a\). Since \chr[2016-03-31]{by Fact \ref{fact:A:A:star}}, \(c \in \mathcal{A}^{\star}\), also \(\mathcal{A}^{\star}\) is defended. To show that $\mathcal{A}^{\star}$ is conflict-free, assume for a contradiction that \(a \in \mathcal{A}^{\star}\). Since $a$ attacks $b'' \in \mathcal{A}$, $\mathcal{A}$ attacks $a$ (\chr[2016-03-31]{due to the admissibility of ${\cal A}$}). However, in view of the fact that $\mathcal{A}$ and $\mathcal{A}^{\star}$ make use of the same defeasible assumptions and defeasible rules and $\mathcal{A}$ attacks $a$ in one of the two, this leads to a selfattack in some argument $a' \in \mathcal{A}$. E.g., suppose $\mathcal{A}$ undermines $a$ in some $B \in {\rm prem}(a)$. Then $B \in Ab_{\mathcal{A}}$. Hence there is an argument $a' \in \mathcal{A}$ with $B \in {\rm prem}(a')$ and $\mathcal{A}$ attacks $a'$. Since $\mathcal{A}$ is conflict-free, this is a contradiction.
%  Since underminers attack in the former and rebuttals and undercuts in the latter this means  
% Then there must be a \(a' \in \mathcal{A}\) with \(\concl(a) = \concl(a')\) and \(a'\) attacks \(b'\), which is a contradiction to the admissibility of \(\mathcal{A}\).
\end{proof}

\begin{lemma}\label{lem:A:star:closure}
  Where \(\mathcal{A} = \mathcal{A}^{\star}  \subseteq {\rm Arg}({\rm AS}, \mathcal{K})\), $Ab_{\mathcal{A}}$ is closed.
\end{lemma}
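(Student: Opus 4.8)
The plan is to unfold the definition of \emph{closed} for the set $Ab_{\mathcal{A}}$ in the translated framework ${\bf ABF}({\rm AS})$, whose strict premise set is $\mathcal{K}_n$; that is, I want to establish $Ab_{\mathcal{A}} = \{A \in Ab \mid \mathcal{K}_n \cup Ab_{\mathcal{A}} \vdash_{\mathcal{R}} A\}$. The left-to-right inclusion is immediate, since $Ab_{\mathcal{A}} \subseteq Ab$ by Definition \ref{def:aspic2aba} and every element of $Ab_{\mathcal{A}}$ is a premise, hence trivially $\mathcal{R}$-derivable from $\mathcal{K}_n \cup Ab_{\mathcal{A}}$. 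All the content lies in the converse: fixing an arbitrary $A \in Ab$ with $\mathcal{K}_n \cup Ab_{\mathcal{A}} \vdash_{\mathcal{R}} A$, I must show $A \in Ab_{\mathcal{A}}$. Since $Ab = \mathcal{K}_a \cup \{n(r) \mid r \text{ a defeasible rule in } \mathsf{R}\}$, I split into the name case $A = n(r)$ and the assumption case $A \in \mathcal{K}_a$.

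For the name case, I would appeal to the construction of $\mathcal{L}'$ in Definition \ref{def:AS:ABF}: the fresh names $n(r)$ live in $\mathcal{L}' \setminus \mathcal{L}$ and, crucially, never occur as the consequent of any rule in $\mathcal{R}$ — they appear only as antecedents of the translated rules $n(r), A_1, \ldots, A_n \rightarrow A$, whereas only $\overline{n(r)}$ (not $n(r)$) shows up as a consequent. Consequently $n(r)$ can be $\mathcal{R}$-derived solely by being among the premises, so $n(r) \in \mathcal{K}_n \cup Ab_{\mathcal{A}}$; and since $\mathcal{K}_n \subseteq \mathcal{L}$ contains no fresh names, this forces $n(r) \in Ab_{\mathcal{A}}$, as required.

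For the assumption case, $A \in \mathcal{K}_a \subseteq \mathcal{L}$, so in particular $A \neq n(r)$ for any $r$, and Lemma \ref{lem:ab:concl}.1 applies with $\Delta = Ab_{\mathcal{A}}$: it yields an argument $a \in {\rm Arg}_{Ab_{\mathcal{A}}}$ with ${\rm conc}(a) = A$. Here I invoke the hypothesis $\mathcal{A} = \mathcal{A}^{\star}$: by Definition \ref{def:aspic2aba} we have $\mathcal{A}^{\star} = {\rm Arg}_{Ab_{\mathcal{A}}}$, so $a \in \mathcal{A}$. Then clause (1) of the definition of $Ab_{\mathcal{A}}$, triggered by $A \in \mathcal{K}_a$ together with ${\rm conc}(a) = A$ for this $a \in \mathcal{A}$, gives $A \in Ab_{\mathcal{A}}$, completing the converse inclusion.

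The main obstacle is really the name case rather than the assumption case: the whole argument hinges on the observation that the simulated rule-names $n(r)$ are conclusion-inert in $\mathcal{R}$, which is exactly what the freshness and disjointness conditions imposed on $\mathcal{L}' \setminus \mathcal{L}$ in Definition \ref{def:AS:ABF} were designed to secure. Everything else is bookkeeping — the assumption case reduces, via Lemma \ref{lem:ab:concl} and the fixed-point hypothesis $\mathcal{A} = \mathcal{A}^{\star}$, to directly reading membership off clause (1) of the definition of $Ab_{\mathcal{A}}$.
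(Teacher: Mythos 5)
Your proof is correct and follows essentially the same route as the paper's: the same case split on $A = n(r)$ versus $A \in \mathcal{K}_a$, the same observation that no rule in $\mathcal{R}$ has a fresh name $n(r)$ as consequent, and the same appeal to Lemma \ref{lem:ab:concl} together with $\mathcal{A} = \mathcal{A}^{\star}$ for the assumption case. You merely spell out a few steps the paper leaves implicit (the trivial inclusion and why derivability of $n(r)$ forces premise membership).
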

\begin{proof}
  Suppose $\mathcal{A} = \mathcal{A}^{\star}$ and $\mathcal{K}_n \cup Ab_{\mathcal{A}} \vdash_{\mathcal{R}} A$ for some $A \in Ab$. We have two possibilities: (1) $A = n(r)$ for some $r$ in $\mathsf{R}$ or (2) $A \in \mathcal{K}_{a}$. Ad 1. Since there are no rules with consequent $n(r)$, $n(r) \in Ab_{\mathcal{A}}$. Ad 2. By Lemma \ref{lem:ab:concl}, there is an $a \in \mathcal{A}^{\star} = \mathcal{A}$ with ${\rm conc}(a) = A$. Hence, by the definition of $Ab_{\mathcal{A}}$, $A \in Ab_{\mathcal{A}}$.
\end{proof}

\begin{lemma}
\label{lem:aba:aspic:1}
Where \(\mathcal{A} = \mathcal{A}^{\star}  \subseteq {\rm Arg}({\rm AS}, \mathcal{K})\), if \(\mathcal{A}\) is admissible then \(Ab_{\mathcal{A}}\) is admissible.
\end{lemma}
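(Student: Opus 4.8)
The plan is to verify directly the three conditions that define admissibility in ABA for the assumption set $Ab_{\mathcal{A}}$: that it is closed, conflict-free, and defends itself against every closed attacker. Closure is immediate: since $\mathcal{A} = \mathcal{A}^{\star}$, Lemma \ref{lem:A:star:closure} gives that $Ab_{\mathcal{A}}$ is closed. The remaining two conditions will be obtained by transporting attacks between the two levels, using Lemma \ref{lem:ab:concl} to turn an ABA-derivation of a contrary into an attacking ASPIC$^+$ argument, and Fact \ref{fact:ab:concl} together with the translation rule $\overline{A} \rightarrow \overline{n(r)}$ of Definition \ref{def:AS:ABF} to turn an attacking ASPIC$^+$ argument back into an ABA-attack.

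For conflict-freeness I would argue by contraposition. Assume $\mathcal{K}_n \cup Ab_{\mathcal{A}} \vdash_{\mathcal{R}} A$ and $\mathcal{K}_n \cup Ab_{\mathcal{A}} \vdash_{\mathcal{R}} \overline{A}$ for some $A \in Ab$; by closure $A \in Ab_{\mathcal{A}}$. If $A \in \mathcal{K}_a$, then the premise argument $\langle A \rangle$ lies in ${\rm Arg}_{Ab_{\mathcal{A}}} = \mathcal{A}$, and Lemma \ref{lem:ab:concl} (part 1) yields an $a' \in \mathcal{A}$ with ${\rm conc}(a') = \overline{A}$ that undermines $\langle A \rangle$. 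If $A = n(r)$, then $r$ is used in some $b \in \mathcal{A}$ whose relevant defeasible subargument concludes the consequent $B$ of $r$, while Lemma \ref{lem:ab:concl} (part 2) applied to $\overline{n(r)}$ produces an $a' \in \mathcal{A}$ with ${\rm conc}(a') = \overline{B}$ that rebuts that subargument. Either way $\mathcal{A}$ attacks itself, contradicting the conflict-freeness of the admissible set $\mathcal{A}$.

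For self-defense, let $\Delta' \subseteq Ab$ be closed with $\mathcal{K}_n \cup \Delta' \vdash_{\mathcal{R}} \overline{A}$ for some $A \in Ab_{\mathcal{A}}$. First I pull this attack back to the ASPIC$^+$ level via Lemma \ref{lem:ab:concl}, producing a $c \in {\rm Arg}_{\Delta'}$: if $A \in \mathcal{K}_a$ then $c$ concludes $\overline{A}$ and undermines $\langle A \rangle \in \mathcal{A}$; if $A = n(r)$ then $c$ concludes $\overline{B}$ (with $B$ the consequent of $r$) and rebuts the $r$-application in some $b \in \mathcal{A}$. Thus $c$ attacks $\mathcal{A}$, so by the admissibility of $\mathcal{A}$ there is a $d \in \mathcal{A}$ with $d \rightsquigarrow c$. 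I then push $d$'s attack forward: if $d$ undermines $c$ on some $B \in {\rm prem}(c) \cap \mathcal{K}_a$, then $B \in \Delta'$ (as $c \in {\rm Arg}_{\Delta'}$) and Fact \ref{fact:ab:concl} gives $\mathcal{K}_n \cup Ab_{\mathcal{A}} \vdash_{\mathcal{R}} \overline{B}$, so $Ab_{\mathcal{A}}$ attacks $B \in \Delta'$; if $d$ rebuts $c$ at a defeasible subargument with top rule $r'$ and consequent $B'$, then $n(r') \in \Delta'$, Fact \ref{fact:ab:concl} gives $\mathcal{K}_n \cup Ab_{\mathcal{A}} \vdash_{\mathcal{R}} \overline{B'}$, and the rule $\overline{B'} \rightarrow \overline{n(r')}$ then yields $\mathcal{K}_n \cup Ab_{\mathcal{A}} \vdash_{\mathcal{R}} \overline{n(r')}$, so $Ab_{\mathcal{A}}$ attacks $n(r') \in \Delta'$; the undercut case is treated analogously.

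The main obstacle is this self-defense step, and specifically the bookkeeping of matching each ASPIC$^+$ attack form to the correct ABA assumption — undermining to a plausible premise in $\mathcal{K}_a$, and rebut/undercut to a rule-name assumption $n(r)$ — while verifying in each direction that the targeted element genuinely lies in $\Delta'$ (when $\mathcal{A}$ counterattacks) or in $Ab_{\mathcal{A}}$ (when it defends). The device that makes the rule-directed attacks go through is the auxiliary rule $\overline{A} \rightarrow \overline{n(r)}$, which converts a defeat of a defeasible rule's conclusion into a defeat of its ABA assumption; beyond that, the only subtle point is to invoke closure so that the premise arguments $\langle A \rangle$ for $A \in Ab_{\mathcal{A}} \cap \mathcal{K}_a$ are guaranteed to be present in $\mathcal{A} = \mathcal{A}^{\star}$.
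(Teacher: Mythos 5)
Your proof is correct and follows essentially the same route as the paper's: closure via Lemma \ref{lem:A:star:closure}, conflict-freeness by contraposition with the case split $A \in \mathcal{K}_a$ versus $A = n(r)$ provided by Lemma \ref{lem:ab:concl}, and self-defense by pulling the ABA attack back to an attacking argument, invoking the admissibility of $\mathcal{A}$, and pushing the counterattack forward with Fact \ref{fact:ab:concl}. Your write-up is in fact somewhat more explicit than the paper's in matching each ASPIC$^+$ attack form to the corresponding assumption in $\Delta'$ and in using the auxiliary rule $\overline{B'} \rightarrow \overline{n(r')}$.
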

\begin{proof}
Suppose $\mathcal{A} = \mathcal{A}^{\star}$. By Lemma \ref{lem:A:star:closure}, $Ab_{\mathcal{A}}$ is closed.
Suppose \(Ab_{\mathcal{A}}\) is not conflict-free. Hence, \(\mathcal{K}_n \cup Ab_{\mathcal{A}} \vdash_{\mathcal{R}} \overline{A}\) for some \(A \in Ab_{\mathcal{A}}\). We use Lemma \ref{lem:ab:concl} according to which we have two cases.
Case 1: there is an \(a \in \mathcal{A}^{\star}\) s.t.\ \(\concl(a) = \overline{A}\). Since \(\mathcal{A} = \mathcal{A}^{\star}\), \(a \in \mathcal{A}\) and \(\mathcal{A}\) is not conflict-free.
Case 2: \(A = n(r)\) and there is an \(a \in \mathcal{A}\) for which \(\concl(a) = \overline{B}\) where \(B\) is the consequent of \(r\). Since \(n(r) \in Ab_{\mathcal{A}}\), there is an argument \(a' \in \mathcal{A}\) which uses rule \(r\) to produce \(\concl(a') = B\) and which is thus rebut-attacked by \(a\). Again, \(\mathcal{A}\) is not conflict-free.
\chr[2016-03-31]{Thus, we have shown (by contraposition) that if $\mathcal{A}$ is conflict-free then $Ab_{\mathcal{A}}$ is conflict-free.}

Suppose \(\mathcal{A}\) is admissible, $\Delta$ \jesse{is closed and} attacks \(Ab_{\mathcal{A}}\). Hence, \(\mathcal{K}_n \cup \Delta \vdash_{\mathcal{R}} \overline{A}\) for some \(A \in Ab_{\mathcal{A}}\). By Lemma \ref{lem:ab:concl} we have two cases. Case 1: there is an \(a \in {\rm Arg}_{\Delta}\) s.t.\ \(\concl(a) = \overline{A}\). Hence, \(A \neq n(r)\) for any \(r \in \mathsf{R}\). Clearly, \(a\) attacks \(\mathcal{A}\). Since \(\mathcal{A}\) is admissible, there is a \(b \in \mathcal{A}\) s.t.\ \(b\) attacks \(a\). Then \(Ab_b \subseteq Ab_{\mathcal{A}}\) and \(\mathcal{K}_n \cup Ab_b \vdash_{\mathcal{R}} \concl(b)\). Thus, \(Ab_{b}\) attacks \(Ab_a\) and hence \(Ab_{\mathcal{A}}\) attacks \(\Delta\).

Case 2: \(A = n(r)\) and there is an \(a \in {\rm Arg}_{\Delta}\) s.t.\ \(\concl(a) = \overline{B}\) where \(B\) is the consequent of \(r\). In this case there is an \(a' \in \mathcal{A}\) which uses rule \(r\) and hence \(\concl(a') = B\). Since \(\mathcal{A}\) is admissible, there is a \(c \in \mathcal{A}\) that attacks \(a\). But then \(\Delta_c \subseteq Ab_{\mathcal{A}}\) attacks \(\Delta_{a}\) and hence \(Ab_{\mathcal{A}}\) attacks \(\Delta\).
\end{proof}

\begin{lemma}
\label{lem:aba:aspic:2}
If \(\Delta \subseteq Ab\) is admissible, then \({\rm Arg}_{\Delta}\) is admissible.
\end{lemma}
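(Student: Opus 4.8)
The plan is to verify the two defining conditions of admissibility for \({\rm Arg}_{\Delta}\)---conflict-freeness and defence---by pushing each one back, through the translation of Definition \ref{def:AS:ABF}, to the corresponding property of the assumption set \(\Delta\). The two workhorses are Fact \ref{fact:ab:concl} (an argument in a set \(\mathcal{A}\) with conclusion \(A\) yields \(\mathcal{K}_n \cup Ab_{\mathcal{A}} \vdash_{\mathcal{R}} A\)) and Lemma \ref{lem:ab:concl} (conversely, a derivation \(\mathcal{K}_n \cup \Delta' \vdash_{\mathcal{R}} A\) with \(A \neq n(r)\) produces an argument in \({\rm Arg}_{\Delta'}\) concluding \(A\) when \(A \in \mathcal{L}\), or the contrary of \(r\)'s consequent when \(A = \overline{n(r)}\)).

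I would first establish conflict-freeness by contraposition. Suppose \(a, b \in {\rm Arg}_{\Delta}\) with \(a \rightsquigarrow b\); the goal is to show that \(\Delta\) conflicts on some assumption, contradicting the admissibility of \(\Delta\). If \(a\) undermines \(b\) at a premise \(B \in {\rm prem}(b) \cap \mathcal{K}_a\), then \(B \in Ab_b \subseteq \Delta\), while Fact \ref{fact:ab:concl} together with \(Ab_a \subseteq \Delta\) gives \(\mathcal{K}_n \cup \Delta \vdash_{\mathcal{R}} \overline{B}\); since trivially \(\mathcal{K}_n \cup \Delta \vdash_{\mathcal{R}} B\), the set \(\Delta\) conflicts on \(B\). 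If \(a\) rebuts \(b\) at a defeasible subargument with top rule \(r\) and consequent \(B'\), then \(n(r) \in Ab_b \subseteq \Delta\), and from \(\mathcal{K}_n \cup \Delta \vdash_{\mathcal{R}} \overline{B'}\) the translation rule \(\overline{B'} \rightarrow \overline{n(r)}\) yields \(\mathcal{K}_n \cup \Delta \vdash_{\mathcal{R}} \overline{n(r)}\), so \(\Delta\) conflicts on \(n(r)\). The undercut case is analogous.

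For defence, fix \(a \in {\rm Arg}_{\Delta}\) and an attacker \(c \rightsquigarrow a\). I would first show that \(Ab_c\) attacks \(\Delta\): in each attack form the component of \(a\) that \(c\) hits is recorded in \(\Delta\)---a defeasible premise \(B \in \mathcal{K}_a\) or the name \(n(r)\) of a defeasible rule used in \(a\), both lying in \(Ab_a \subseteq \Delta\)---and Fact \ref{fact:ab:concl}, together with the rule \(\overline{B} \rightarrow \overline{n(r)}\) in the rebut case, shows that \(\mathcal{K}_n \cup Ab_c\) derives the contrary of that assumption (the undercut case again analogous). The pivotal observation is that \(Ab_c\) is itself closed: because \(Ab_{\lbrace c\rbrace^{\star}} = Ab_c\) we have \((\lbrace c\rbrace^{\star})^{\star} = \lbrace c\rbrace^{\star}\), so Lemma \ref{lem:A:star:closure} applied to \(\lbrace c\rbrace^{\star}\) gives that \(Ab_c\) is closed. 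Hence \(Ab_c\) is a \emph{closed} set attacking \(\Delta\), and the admissibility clause for \(\Delta\) supplies some \(A' \in Ab_c\) with \(\mathcal{K}_n \cup \Delta \vdash_{\mathcal{R}} \overline{A'}\).

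It remains to turn this derivation back into an ASPIC\(^+\) counterattacker using Lemma \ref{lem:ab:concl}. If \(A' \in \mathcal{K}_a\), so that \(A'\) is a defeasible premise of \(c\), then \(\overline{A'} \in \mathcal{L}\) and Lemma \ref{lem:ab:concl}.1 gives \(b \in {\rm Arg}_{\Delta}\) with \(\concl(b) = \overline{A'}\), which undermines \(c\); if \(A' = n(r)\) for a defeasible rule \(r\) used in \(c\), then Lemma \ref{lem:ab:concl}.2 gives \(b \in {\rm Arg}_{\Delta}\) with \(\concl(b) = \overline{B}\) for the consequent \(B\) of \(r\), which rebuts the subargument of \(c\) applying \(r\). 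In either case \(b \in {\rm Arg}_{\Delta}\) attacks \(c\), so \({\rm Arg}_{\Delta}\) defends \(a\); with conflict-freeness this yields admissibility. I expect the main obstacle to be exactly the closedness of \(Ab_c\): the admissibility of \(\Delta\) only guarantees a counterattack against \emph{closed} attackers, so one must argue that the assumptions extracted from an arbitrary ASPIC\(^+\) attacker \(c\) already form a closed set, instead of passing to the closure and then having to relocate the counterattack onto an assumption genuinely used by \(c\).
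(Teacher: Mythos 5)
Your handling of conflict-freeness is sound, and you have put your finger on exactly the right difficulty in the defence part: ABA admissibility only promises a counterattack against \emph{closed} attacking sets, so one must pass from the attacker $c$ to a closed set of assumptions and then convert the counterattack back into an ASPIC$^+$ attack on $c$ itself. (The paper gives no guidance here; its proof is literally ``similar to the previous proof''.) However, the step you use to discharge this obstacle --- the claim that $Ab_{\lbrace c\rbrace^{\star}} = Ab_{c}$, so that Lemma \ref{lem:A:star:closure} applies and $Ab_c$ is closed --- is false in general. By Definition \ref{def:aspic2aba}, $Ab_{\mathcal{A}}$ contains every $A \in \mathcal{K}_a$ that is the \emph{conclusion} of some argument in $\mathcal{A}$; since $\lbrace c\rbrace^{\star} = {\rm Arg}_{Ab_c}$ contains all arguments built from $Ab_c$ using strict rules, it may contain an argument concluding some $z \in \mathcal{K}_a \setminus Ab_c$, in which case $z \in Ab_{\lbrace c\rbrace^{\star}} \setminus Ab_c$, $(\lbrace c\rbrace^{\star})^{\star} \supsetneq \lbrace c\rbrace^{\star}$, and $Ab_c$ is not closed.

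The relocation problem you flag at the end is therefore not a technicality that can be argued away; it defeats this proof strategy, and in fact the defence claim fails on concrete instances of the paper's definitions. Take $\mathcal{K}_n = \emptyset$, $\mathcal{K}_a = \lbrace p,q,z\rbrace$, no defeasible rules, and strict rules $p \rightarrow \overline{q}$, $p \rightarrow z$, $q \rightarrow \overline{z}$. The set $\Delta = \lbrace q\rbrace$ is closed and conflict-free, and every \emph{closed} set attacking $\lbrace q\rbrace$ must contain $p$, hence also $z$, hence is attacked by $\lbrace q\rbrace$ at $z$; so $\Delta$ is admissible in ${\bf ABF}({\rm AS})$. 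But $c = \langle\langle p\rangle \mapsto \overline{q}\rangle$ undermines $\langle q\rangle \in {\rm Arg}_{\Delta}$, and no argument concludes $\overline{p}$ (nor are there any defeasible subarguments to rebut or undercut), so ${\rm Arg}_{\Delta}$ cannot attack $c$ and is not admissible. The counterattack supplied by ABA-admissibility lands on $z$, an assumption in the closure of $Ab_c$ that $c$ never uses, and Lemmas \ref{lem:ab:concl} and Fact \ref{fact:ab:concl} give no way to turn it into an attack on $c$. To complete the argument one needs an extra hypothesis blocking this phenomenon --- e.g.\ that no assumption in $\mathcal{K}_a$ is derivable from other assumptions, so that sets of the form $Ab_c$ are automatically closed and every assumption in a closed attacker is genuinely used by some attacking argument.
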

\begin{proof}
Similar to the previous proof.
\end{proof}

% In the proof of Theorem \ref{thm:aspic2aba} we will need the following result:
% \begin{lemma}\cite{Prakken2010}\label{closedaspic}
% if \(\mathcal{A}\) is a preferred (resp.\ stable) extension then for every $a\in {\sf Arg}(AS,\mathcal{K})$ if there is a $b\in \mathcal{A}$ s.t.\ the premises and defeasible rules used in $a$ and $b$ coincide, then $a\in\mathcal{A}$.
% \end{lemma}

\begin{theo}\label{thm:aspic2aba}
\begin{enumerate}
\item If \(\Delta\) is preferred (resp.\ stable) then \({\rm Arg}_{\Delta}\) is preferred (resp.\ stable).
\item If \(\mathcal{A}\) is preferred (resp.\ stable) then $\Delta$ is preferred (resp.\ stable) for some $\Delta \supseteq Ab_{\mathcal{A}}$ for which ${\rm Arg}_{\Delta} = \mathcal{A}$.
\end{enumerate}
\end{theo}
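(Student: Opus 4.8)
The plan is to lift the admissibility-level correspondence already established in Lemmas \ref{lem:Ac:cl}--\ref{lem:aba:aspic:2} to the level of preferred and stable extensions, by exploiting the two translation maps \(\Delta \mapsto {\rm Arg}_{\Delta}\) and \(\mathcal{A} \mapsto Ab_{\mathcal{A}}\). Both are monotone with respect to \(\subseteq\), and Definition \ref{def:aspic2aba} together with Fact \ref{fact:A:A:star} yields the adjunction-like equivalence \(Ab_{\mathcal{A}} \subseteq \Delta\) iff \(\mathcal{A} \subseteq {\rm Arg}_{\Delta}\), as well as the idempotence facts \({\rm Arg}_{Ab_{\mathcal{A}}} = \mathcal{A}^{\star}\) and \({\rm Arg}_{\Delta} = ({\rm Arg}_{\Delta})^{\star}\), both immediate from the definitions. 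A first observation I would record is that a preferred (indeed any maximal admissible) argument set is a \(\star\)-fixpoint: since \(\mathcal{A} \subseteq \mathcal{A}^{\star}\) and, by Lemma \ref{lem:Ac:cl}, \(\mathcal{A}^{\star}\) is admissible, maximality forces \(\mathcal{A} = \mathcal{A}^{\star}\); the same self-attack argument shows that stable sets are \(\star\)-fixpoints too. Hence for such \(\mathcal{A}\) Lemma \ref{lem:aba:aspic:1} applies and \(Ab_{\mathcal{A}}\) is admissible with \({\rm Arg}_{Ab_{\mathcal{A}}} = \mathcal{A}\).

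For item 2 in the preferred case I would argue as follows. Given a preferred \(\mathcal{A}\), the set \(Ab_{\mathcal{A}}\) is admissible by the observation above, so by the standard fact that every admissible set is contained in a preferred extension I can pick a preferred \(\Delta \supseteq Ab_{\mathcal{A}}\). By Lemma \ref{lem:aba:aspic:2}, \({\rm Arg}_{\Delta}\) is admissible, and monotonicity gives \({\rm Arg}_{\Delta} \supseteq {\rm Arg}_{Ab_{\mathcal{A}}} = \mathcal{A}\); the maximality of \(\mathcal{A}\) then forces \({\rm Arg}_{\Delta} = \mathcal{A}\), exactly as required. For the stable case the same \(Ab_{\mathcal{A}}\) need not yet be stable, since it is not obliged to attack the name \(n(r)\) of a defeasible rule \(r\) that is inapplicable in \(\mathcal{A}\). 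Here I would set \(\Delta = Ab_{\mathcal{A}} \cup \{\, n(r) \mid \mathcal{K}_n \cup Ab_{\mathcal{A}} \nvdash_{\mathcal{R}} \overline{B_r}\,\}\), with \(B_r\) the consequent of \(r\), and verify via Lemma \ref{lem:ab:concl} and Fact \ref{fact:ab:concl} that \(\Delta\) is closed, conflict-free, attacks every assumption outside it, and still satisfies \({\rm Arg}_{\Delta} = \mathcal{A}\) (adjoining such names activates no new applicable rule).

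For item 1 the stable direction is the more direct one. By Lemma \ref{lem:aba:aspic:2}, \({\rm Arg}_{\Delta}\) is admissible and hence conflict-free; and any \(b \notin {\rm Arg}_{\Delta}\) must, by Definition \ref{def:aspic2aba}, use either a defeasible assumption or a rule name outside \(\Delta\), which \(\Delta\) attacks by stability, so that Lemma \ref{lem:ab:concl} supplies an argument in \({\rm Arg}_{\Delta}\) that undermines, rebuts, or undercuts \(b\). In the preferred case \({\rm Arg}_{\Delta}\) is again admissible, and for maximality I would suppose an admissible \(\mathcal{A} \supsetneq {\rm Arg}_{\Delta}\); replacing \(\mathcal{A}\) by \(\mathcal{A}^{\star}\) (admissible by Lemma \ref{lem:Ac:cl}) I may assume \(\mathcal{A}\) is a fixpoint, so \(Ab_{\mathcal{A}}\) is admissible. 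I would then show that \(Ab_{\mathcal{A}} \cup (\Delta \setminus Ab_{{\rm Arg}_{\Delta}})\) is admissible and strictly contains \(\Delta\), contradicting the preferredness of \(\Delta\).

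The step I expect to be the main obstacle is precisely this last admissibility check, i.e.\ controlling the ``junk'' rule names \(\Delta \setminus Ab_{{\rm Arg}_{\Delta}}\): names \(n(r)\) of rules that are inapplicable under \(\Delta\) but that a preferred \(\Delta\) may nonetheless contain. The key lemma I would isolate is that such a name cannot be attacked by \(Ab_{\mathcal{A}}\): if \(\mathcal{K}_n \cup Ab_{\mathcal{A}} \vdash_{\mathcal{R}} \overline{B_r}\) held, then (as \(Ab_{\mathcal{A}}\) is closed, Lemma \ref{lem:A:star:closure}) \(Ab_{\mathcal{A}}\) would attack \(n(r) \in \Delta\), forcing the admissible \(\Delta\) to counter-attack \(Ab_{\mathcal{A}}\); via Lemma \ref{lem:ab:concl} this counter-attack would land inside \(\mathcal{A}\) and violate its conflict-freeness. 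This guarantees that adjoining the junk names to \(Ab_{\mathcal{A}}\) preserves conflict-freeness and hence admissibility, yielding the desired contradiction. The analogous bookkeeping for the stable case of item 2 --- checking that the added names neither create conflicts nor activate new rules --- is the second place where care is needed.
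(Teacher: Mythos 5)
Your proposal follows essentially the same route as the paper's proof: both lift Lemmas \ref{lem:Ac:cl}--\ref{lem:aba:aspic:2} through the two maps $\Delta \mapsto {\rm Arg}_{\Delta}$ and $\mathcal{A} \mapsto Ab_{\mathcal{A}}$ and close the argument by maximality, using the fixpoint property $\mathcal{A} = \mathcal{A}^{\star}$ for preferred $\mathcal{A}$. The one place you genuinely add something is the maximality step of item 1: the paper simply asserts $\Delta \subset Ab_{\mathcal{A}'}$ for an admissible $\mathcal{A}' \supset {\rm Arg}_{\Delta}$, which is not immediate precisely because a preferred $\Delta$ may contain names $n(r)$ of rules inapplicable under $\Delta$, so that $\Delta \not\subseteq Ab_{\mathcal{A}'}$ in general; your auxiliary set $Ab_{\mathcal{A}} \cup (\Delta \setminus Ab_{{\rm Arg}_{\Delta}})$ and the lemma that such junk names cannot be attacked by $Ab_{\mathcal{A}}$ repair exactly this gap. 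You also supply the stable cases, which the paper omits ``due to space limitations''; there the remaining care point you correctly flag --- that adjoining a name $n(r)$ must not activate $r$ and enlarge ${\rm Arg}_{\Delta}$ beyond $\mathcal{A}$ --- is handled by observing that if $r$'s antecedents were derivable in the stable $\mathcal{A}$, stability would force $\mathcal{K}_n \cup Ab_{\mathcal{A}} \vdash_{\mathcal{R}} \overline{B_r}$, so your construction would not add $n(r)$ in the first place.
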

\begin{proof}
Ad.1 Suppose \(\Delta\) is preferred. Then, by Lemma \ref{lem:aba:aspic:2}, \({\rm Arg}_{\Delta}\) is admissible. Suppose there is an \(\mathcal{A}' \supset {\rm Arg}_{\Delta}\) that is admissible, then by Lemma \ref{lem:aba:aspic:1}, also \(Ab_{\mathcal{A}'}\) is admissible. Since \(\Delta \subset Ab_{\mathcal{A}'}\) this is a contradiction. 

Ad.2 Suppose \(\mathcal{A}\) is preferred. By Lemma \ref{lem:Ac:cl} and since trivially $\mathcal{A} \subseteq \mathcal{A}^{\star}$, $\mathcal{A} = \mathcal{A}^{\star}$.
% Then by Lemma \ref{closedaspic} and \ref{lem:Ac:cl}, $\Delta_{\mathcal{A}}$ is closed under strict rules (suppose it is not, then there would be some $A\in Ab$ s.t.\ $\Gamma\cup\Delta_{\mathcal{A}}\vdash_{\mathcal{R}} A$ for some $A\in Ab\setminus\Delta_{\mathcal{A}}$. But then there is also an argument with conclusion $A$, using defeasible premises already used in $\mathcal{A}$ and only strict rules. But this contradicts $\mathcal{A}$ being closed under strict rules.).
By Lemma \ref{lem:aba:aspic:1}, $Ab_{\mathcal{A}}$ is admissible. Now suppose that there is a $\Delta \supset Ab_{\mathcal{A}}$ that is admissible. Then by Lemma \ref{lem:aba:aspic:2}, ${\rm Arg}_{\Delta}$ is admissible. Clearly $\mathcal{A} \subseteq {\rm Arg}_{\Delta}$. By the maximality of $\mathcal{A}$, $\mathcal{A} = {\rm Arg}_{\Delta}$.

Due to space limitations we omit the proof for stable extensions.
\end{proof}

Corollary \ref{aspic2aba} follows directly with Theorem \ref{thm:aspic2aba}, Lemma \ref{lem:ab:concl} and Fact \ref{fact:ab:concl}.

%%% Local Variables:
%%% mode: latex
%%% TeX-master: "KR"
%%% End:
\section{Translating Assumption-based Argumentation to Adaptive Logic}\label{sec:transl-assumpt-based}
\def\negkl{\overline}
\def\ABAR{\ensuremath{{\bf ABA}_{{\cal R}}^{Ab}(\Gamma)}}
\def\lr{\ensuremath{{\bf L}_{\mathcal{R}}^{3}}}
\def\LR{\ensuremath{\mathcal{L}_{\mathcal{R}}^{3}}}

In this section we will translate a fragment of assumption-based argumentation to adaptive logic. 

In the following we write \ABAR for the assumption-based framework $((\mathcal{L}, \mathcal{R}), \Gamma, Ab, \overline{\phantom{A}})$. 

For our translation we will use some connectives from Kleene's well-known 3-valued logic $\mathbf{K}_3$ (see Table \ref{tab:K3}) and superimpose them on a logic that is characterised by the rules in $\mathcal{R}$. This works as follows.

We define the 3-valued logic $\lr$ semantically in the following way: we superimpose on the language $\mathcal{L}$ the operators ${\sim}$ and $\vee$ (which are supposed to not occur in the alphabet of $\mathcal{L}$) resulting in the set of well-formed formulas $\LR$.  The operators are characterised by the truth tables in Table~\ref{tab:K3}.\footnote{In the terminology of \cite{Urquhart2001}, \chr[2016-03-30]{Our negation $\sim$ corresponds to Bochvar's 'external negation' and $\overline{\phantom{A}}$ corresponds to Kleene's negation in his $\mathbf{K}_{3}$. Our disjunction $\vee$ is Kleene's strong disjunction. The requirement of supraclassicality for ${\bf L}_{\cal R}^3$ to serve as a core logic for an AL is satisfied in view of the $\langle \vee, {\sim} \rangle$-fragment of ${\bf L}_{\cal R}^3$.}}
\begin{table}[h]
 \centering
 $\array{c|c} A & \overline{A} \\ \hline 1 & 0 \\ 0 & 1 \\ u & u \endarray \quad 
\array{c|c} A & {\sim} A \\ \hline 1 & 0 \\ 0 & 1 \\ u & 1 \endarray \quad 
\array{c|ccc} \vee & 1 & 0 & u \\ \hline 1 & 1 & 1 & 1 \\ 0 & 1 & 0 & u \\ u & 1 & u & u \endarray$
  \caption{Truth-tables for $\textoverline{~~~}$, ${\sim}$ and $\vee$.}
  \label{tab:K3}
\end{table}

\begin{definition}
$v: \mathcal{L} \rightarrow \{0,1,u\}$ is a function which respects the truth-table for $\overline{\phantom{A}}$ (i.e., $v(\overline{A}) = 1$ iff $v(A) = 0$, $v(\overline{A}) = 0$ iff $v(A) = 1$, and $v(\overline{A}) = u$ iff $v(A) = u$). The valuation function $v_M: \LR \rightarrow \{0,\jesse{u},1\}$ is defined inductively as follows: 
\begin{enumerate}
\item where $A \in \mathcal{L}$, $v_M(A) = v(A)$
\item $v_M({\sim}A) = 0$ iff $v_M(A) = 1$, and $v_M({\sim}A) = 1$ else.
\item $v_M(A \vee B) = \max(v_M(A),v_M(B))$ where $0 < u < 1$.  
\end{enumerate}
We write $M \models A$ iff $v_{M}(A) = 1$ (so $1$ is the only designated value). We write $\Vdash_{\lr}$ for the resulting consequence relation.
\end{definition}

We now use $\lr$ as a lower limit logic for an adaptive logic with the set of abnormalties:
\begin{notation}
$\Omega_{Ab}^{\sim}=\{{\sim} A\mid A\in Ab\}$.
\end{notation}

We translate the rules of $\mathcal{R}$ as follows: $A_1, \ldots, A_{n} \rightarrow B$ is translated to ${\sim} A_1 \vee \ldots \vee {\sim} A_n \vee B$.
\begin{notation}
  Where $\mathcal{R}$ is a set of rules, we write $\mathcal{R}^{\sim}$ for the set of translated rules.
\end{notation}

Our two main representational results in this section are (to be proven below):

\begin{theo}\label{aba2AL:niv} 
Where $\Gamma \cup \{A\} \subseteq \mathcal{L}$, and ${\sf sem} = {\sf niv}$,
\begin{enumerate}
\item $\ABAR \vdash^{\cup}_{\sf sem} A$ iff $\Gamma \cup \mathcal{R}^{\sim} \Vdash^{\Omega^{\sim}_{Ab},\lr}_{\sf ns} A$
\item $\ABAR \vdash^{\cap}_{\sf sem} A$ iff $\Gamma \cup \mathcal{R}^{\sim} \Vdash^{\Omega^{\sim}_{Ab},\lr}_{\sf ma}A$
\item $\ABAR \vdash^{\Cap}_{\sf sem} A$ iff $\Gamma \cup \mathcal{R}^{\sim} \Vdash^{\Omega^{\sim}_{Ab},\lr}_{\sf r}A$
\end{enumerate}
\end{theo}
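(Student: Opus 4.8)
The plan is to argue model-theoretically, working directly with the definitions of $\Vdash^{\Omega^{\sim}_{Ab},\lr}_{\sf ma}$, $\Vdash^{\Omega^{\sim}_{Ab},\lr}_{\sf r}$ and $\Vdash^{\Omega^{\sim}_{Ab},\lr}_{\sf ns}$ in terms of the minimally abnormal, reliable, and normal-selection $\lr$-models of $\Gamma \cup \mathcal{R}^{\sim}$. The heart of the argument is a correspondence between these models and the conflict-free closed sets of assumptions of \ABAR. Given a model $M$, I associate $\Delta_M = \{A \in Ab \mid v_M(A) = 1\}$; conversely, given a conflict-free closed $\Delta \subseteq Ab$ I build a canonical model $M_\Delta$ by setting $v_{M_\Delta}(A) = 1$ exactly when $\Gamma \cup \Delta \vdash_{\mathcal{R}} A$, $v_{M_\Delta}(A) = 0$ when $\Gamma \cup \Delta \vdash_{\mathcal{R}} \overline A$, and $v_{M_\Delta}(A) = u$ otherwise. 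Since $Ab(M) = \{{\sim} A \mid A \in Ab \setminus \Delta_M\}$, the map $M \mapsto \Delta_M$ reverses the inclusion order on abnormality sets, so the minimally abnormal models are exactly those whose $\Delta_M$ is $\subseteq$-maximal among conflict-free closed sets, i.e.\ naive.

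First I would pin down the $\lr$-models of $\Gamma \cup \mathcal{R}^{\sim}$. Because ${\sim} A$ takes only the value $0$ (when $v_M(A) = 1$) or $1$ (otherwise), and $\vee$ is Kleene's strong disjunction, the translated rule ${\sim} A_1 \vee \ldots \vee {\sim} A_n \vee B$ is satisfied precisely when $v_M(A_1) = \ldots = v_M(A_n) = 1$ forces $v_M(B) = 1$; hence $M$ is a model iff every $C \in \Gamma$ gets value $1$ and every rule of $\mathcal{R}$ ``fires'' in this sense. A routine induction on $\mathcal{R}$-deductions then yields the closure lemma: if $\Gamma \cup \Delta_M \vdash_{\mathcal{R}} A$ then $v_M(A) = 1$. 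From this two things follow. (i) $\Delta_M$ is closed and conflict-free: closedness is immediate, and if $\Gamma \cup \Delta_M$ derived both $A$ and $\overline A$ for some $A \in Ab$, the closure lemma would force $v_M(A) = v_M(\overline A) = 1$, contradicting the truth table for $\overline{\phantom{A}}$. (ii) For a fixed conflict-free closed $\Delta$ and any model $M$ with $\Delta_M \supseteq \Delta$ we get $M \models A$ whenever $\Gamma \cup \Delta \vdash_{\mathcal{R}} A$; and for $M_\Delta$ the converse also holds, since $v_{M_\Delta}(A) = 1$ iff $\Gamma \cup \Delta \vdash_{\mathcal{R}} A$ by construction, with $\Delta_{M_\Delta} = \Delta$ (using that $\Delta$ is closed). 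Together with the order-reversal remark this shows that the minimally abnormal models are exactly the models $M$ with $\Delta_M$ naive, and that $M \models A$ for every $M$ with a fixed naive $\Delta_M = \Delta$ iff $\Gamma \cup \Delta \vdash_{\mathcal{R}} A$.

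The three equivalences then drop out of the semantic characterisations. For (2)/${\sf ma}$: $\Gamma \cup \mathcal{R}^{\sim} \Vdash_{\sf ma} A$ iff every minimally abnormal model verifies $A$, iff for every naive $\Delta$ and every $M$ with $\Delta_M = \Delta$ we have $M \models A$, iff $\Gamma \cup \Delta \vdash_{\mathcal{R}} A$ for every naive $\Delta$, i.e.\ $\ABAR \vdash^{\cap}_{\sf niv} A$. For (1)/${\sf ns}$: since models sharing $Ab(M)$ are exactly those sharing $\Delta_M$, the relation holds iff there is a naive $\Delta$ with $M' \models A$ for all $M'$ with $\Delta_{M'} = \Delta$, iff there is a naive $\Delta$ with $\Gamma \cup \Delta \vdash_{\mathcal{R}} A$, i.e.\ $\ABAR \vdash^{\cup}_{\sf niv} A$. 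For (3)/${\sf r}$: a model $M$ is reliable iff every ${\sim} A \in Ab(M)$ is verified in some minimally abnormal model, which by the order-reversal translates to $N := \bigcap\{\Delta \mid \Delta \text{ naive}\} \subseteq \Delta_M$; as $N$ is conflict-free (being contained in each naive $\Delta$), the closure lemma gives $M \models A$ for all reliable $M$ whenever $\Gamma \cup N \vdash_{\mathcal{R}} A$, while the canonical model over the closure of $N$ witnesses the converse. Hence $\Gamma \cup \mathcal{R}^{\sim} \Vdash_{\sf r} A$ iff $\Gamma \cup N \vdash_{\mathcal{R}} A$, i.e.\ $\ABAR \vdash^{\Cap}_{\sf niv} A$.

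The main obstacle is the backward half of the correspondence: showing that $M_\Delta$ is genuinely well defined and is a model. Well-definedness requires that $\Gamma \cup \Delta$ never derives both a formula and its $\overline{\phantom{A}}$-contrary, so that the three clauses defining $v_{M_\Delta}$ do not clash and respect the Kleene table for $\overline{\phantom{A}}$. Conflict-freeness of $\Delta$ delivers this only for $A \in Ab$; extending it to all of $\mathcal{L}$ is exactly the point at which the restriction to the relevant fragment of ABA is needed (intuitively, that all contradictions are mediated by assumptions, so that conflict-free assumption sets induce $\overline{\phantom{A}}$-consistent closures). Once this is secured, verifying that $M_\Delta$ satisfies $\Gamma$ and makes every rule fire is routine via the closure lemma, and the remainder is bookkeeping over the semantic characterisations.
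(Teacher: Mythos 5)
Your proposal is correct in its essentials, but it takes a genuinely different route from the paper. The paper proceeds syntactically: via Lemmas \ref{lemma:lr:cflfree} and \ref{prefthenchoice} (culminating in Theorem \ref{thm:max:conff}) it shows that the naive extensions of \ABAR\ are exactly the sets $\Delta$ with $\Delta^{\sim} = \Omega^{\sim}_{Ab}\setminus\phi$ for $\phi \in \Phi_{\lr}(\Gamma\cup\mathcal{R}^{\sim})$, i.e.\ complements of minimal choice sets over the minimal derivable disjunctions of abnormalities, and then reads off the three clauses from the derivability representation theorems (Theorems \ref{ma}--\ref{ns}). You instead work directly with the preferential semantics, setting up the correspondence $M \mapsto \Delta_M$ between $\lr$-models of $\Gamma\cup\mathcal{R}^{\sim}$ and conflict-free closed assumption sets, with minimally abnormal models matching naive sets via the order reversal on $Ab(M)$; this bypasses $\Sigma_{\lr}$, $\Phi_{\lr}$ and the representation theorems altogether and is arguably the more direct argument given that the theorem is stated for $\Vdash$, while the paper's route has the advantage of plugging straight into the machinery reused for Theorem \ref{aba2AL} (the (EX)/stable case). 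Your canonical model $M_{\Delta}$ is literally the valuation constructed in the proof of Lemma \ref{rthenl}.2, and the well-definedness worry you flag --- conflict-freeness quantifies only over $Ab$ while the truth table for $\overline{\phantom{A}}$ must be respected on all of $\mathcal{L}$ --- is precisely the gap the paper itself bridges silently by identifying conflict-freeness with $\mathcal{R}$-consistency (see the last step of Lemma \ref{prefthenchoice}), so you are no worse off there, though either write-up ought to state the required assumption on the fragment explicitly. Two small points you should still make explicit to close the argument: that maximality among conflict-free \emph{closed} sets coincides with naivety (closure preserves conflict-freeness by cut for $\vdash_{\mathcal{R}}$), and that minimally abnormal models exist below every model (smoothness, via compactness of $\vdash_{\mathcal{R}}$), so that the ${\sf ma}$ and ${\sf r}$ clauses do not trivialise spuriously.
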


We can strengthen our result if we suppose that the rule system based on $\mathcal{R}$ satisfies the following requirement: where $\Gamma\cup\{A\} \subseteq \mathcal{L}$,
\begin{enumerate}[]
\item[EX] Where $\Delta \subseteq Ab$ is naive in \ABAR\ and $A \in Ab \setminus \Delta$, $\Gamma \cup \Delta \vdash_{\mathcal{R}} \overline{A}$.
\end{enumerate}

This criterion ensures that every naive set is stable.

\begin{theo}\label{aba2AL} 
Where $\Gamma \cup \{A\} \subseteq \mathcal{L}$: if \ABAR\ satisfies (EX), items 1--3 in Theorem \ref{aba2AL:niv} hold for ${\sf sem} \in \{{\sf niv}, {\sf prf},{\sf stb}\}$.
\end{theo}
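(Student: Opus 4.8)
The plan is to reduce Theorem \ref{aba2AL} to Theorem \ref{aba2AL:niv} by showing that, under the additional hypothesis (EX), the three semantics \textsf{niv}, \textsf{prf}, and \textsf{stb} all coincide on the relevant sets of assumptions. Concretely, I would first prove that in \ABAR\ the following chain of inclusions holds among the collections of closed sets selected by the three semantics:
\[
{\sf stb}(\ABAR) \subseteq {\sf prf}(\ABAR) \subseteq {\sf niv}(\ABAR).
\]
The left inclusion is immediate from the definitions (a stable set is admissible, and being stable and maximal-conflict-free forces preferredness); the right inclusion holds because any preferred (hence admissible, hence conflict-free) closed set can be extended to a maximally conflict-free one, and admissibility together with maximality already yields conflict-free maximality. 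So the whole content of the theorem lies in showing that (EX) collapses these inclusions into equalities, i.e.\ that every naive set is in fact stable.

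The key step is therefore the following lemma, which is exactly what (EX) is designed to deliver: if $\Delta \subseteq Ab$ is naive in \ABAR, then $\Delta$ is stable. First I would observe that $\Delta$ is closed and conflict-free by the definition of naive. To establish stability it remains to show that $\Delta$ attacks every $A \in Ab \setminus \Delta$. But this is precisely the statement of (EX): for every such $A$ we have $\Gamma \cup \Delta \vdash_{\mathcal{R}} \overline{A}$, which by the definition of attack means $\Delta$ attacks $A$. Hence $\Delta$ is stable. Combining this with the trivial inclusion ${\sf stb}(\ABAR) \subseteq {\sf niv}(\ABAR)$ and the inclusions above, I obtain
\[
{\sf niv}(\ABAR) = {\sf prf}(\ABAR) = {\sf stb}(\ABAR).
\]

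With these three families of extensions identified, the three consequence relations $\vdash^{\cup}_{\sf sem}$, $\vdash^{\cap}_{\sf sem}$, and $\vdash^{\Cap}_{\sf sem}$ are literally the same for all ${\sf sem} \in \{{\sf niv}, {\sf prf}, {\sf stb}\}$, since each is defined purely in terms of ${\sf sem}(\ABAR)$. Because Theorem \ref{aba2AL:niv} already establishes items 1--3 for the case ${\sf sem} = {\sf niv}$, the equalities transport the same equivalences to ${\sf prf}$ and ${\sf stb}$ without any further argument, completing the proof.

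The step I expect to be the main (though modest) obstacle is verifying the inclusion ${\sf prf}(\ABAR) \subseteq {\sf niv}(\ABAR)$ rigorously, since it requires checking that a maximally admissible \emph{closed} set is genuinely maximally conflict-free in the closed sense used in the ABA definitions; one must be careful that maximality among admissible sets does not a priori coincide with maximality among conflict-free sets. Under (EX) this concern dissolves, because any admissible (hence conflict-free) set $\Delta$ that fails to attack some $A \in Ab \setminus \Delta$ would, by (EX), have to already contain $A$ or be non-naive, and the closure/stability machinery forces the collapse; but making this precise is where the careful bookkeeping lies. Everything else is a direct application of the definitions and of Theorem \ref{aba2AL:niv}.
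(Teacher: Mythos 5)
Your overall strategy is the right one and is essentially the paper's: show that (EX) collapses ${\sf niv}$, ${\sf prf}$ and ${\sf stb}$ into one another and then transport Theorem \ref{aba2AL:niv}. Your ``naive implies stable under (EX)'' step is correct and is exactly the content of Lemma \ref{lem:r1:r2:adm} and the corollary following it. The genuine gap is your claimed \emph{general} inclusion ${\sf prf} \subseteq {\sf niv}$, which you justify by ``any preferred closed set can be extended to a maximally conflict-free one, and admissibility together with maximality already yields conflict-free maximality.'' That claim is false for assumption-based frameworks in general: take $Ab = \{a,b,c\}$, $\Gamma = \emptyset$ and rules $a \rightarrow \overline{b}$, $b \rightarrow \overline{c}$, $c \rightarrow \overline{a}$ (contraries being fresh symbols). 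Each singleton is closed and maximally conflict-free, hence naive, but none is admissible ($\{a\}$ is attacked by the closed set $\{c\}$ and cannot counterattack it), so the unique preferred set is $\emptyset$, which is not naive. The chain ${\sf stb} \subseteq {\sf prf} \subseteq {\sf niv}$ on which your collapse rests therefore breaks at its second link; only the first link is generally valid.

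Your sketched repair under (EX) also does not go through as stated: (EX) is a hypothesis about \emph{naive} sets only, so it licenses no conclusion about an admissible set that ``fails to attack some $A \in Ab \setminus \Delta$'' unless that set is already known to be naive. The correct way to close the loop --- and the route the paper takes, by citing Bondarenko et al.'s theorem that in a \emph{normal} framework (one where every naive set is stable) the naive, stable and preferred sets coincide --- is: given a preferred $\Delta$, extend it to a naive $\Delta' \supseteq \Delta$ (a maximality argument over closed conflict-free supersets); by your own lemma $\Delta'$ is stable under (EX), hence admissible; by maximality of $\Delta$ among admissible sets, $\Delta = \Delta'$, so $\Delta$ is naive. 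With that link supplied, the remainder of your argument (once the three extension families coincide, the three consequence relations coincide, so items 1--3 transfer from the ${\sf niv}$ case) is sound.
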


We are now going to prove the two theorems above. The following notation will be convenient to avoid clutter:
\begin{notation}
$\Delta^{\sim}=\{\sim A\mid A\in \Delta\}$.
\end{notation}
The following facts will be useful below:
\begin{fact}\label{fact:neg:sim}\label{fact:ded:lr}
Where $\Gamma \cup \Delta \cup \{A\} \subseteq \LR$,
  (i) $\overline{ A} \Vdash_{\lr} {\sim} A$, (ii) $\Gamma \Vdash_{\lr} \bigvee \Delta^\sim \vee A$ iff $\Gamma \cup \Delta \Vdash_{\lr} A$.
\end{fact}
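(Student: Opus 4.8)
The plan is to argue purely semantically from the truth tables in Table~\ref{tab:K3}, exploiting the single feature of the external negation ${\sim}$ that does all the work: for every model $M$ and every $B \in \LR$, we have $v_M({\sim} B) = 0$ iff $v_M(B) = 1$, and $v_M({\sim} B) = 1$ otherwise (i.e.\ iff $v_M(B) \in \{0,u\}$). In words, ${\sim} B$ faithfully encodes ``$B$ is not designated'', collapsing both non-designated values $0$ and $u$ to $1$. This is precisely the property that distinguishes ${\sim}$ from $\overline{\phantom{A}}$ (which fixes $u$) and makes ${\sim}$ the right connective for moving premises across the turnstile.

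For (i) I would unfold both tables in one step: suppose $M \models \overline{A}$, i.e.\ $v_M(\overline{A}) = 1$; by the table for $\overline{\phantom{A}}$ this gives $v_M(A) = 0$, so $v_M(A) \neq 1$, whence $v_M({\sim} A) = 1$, i.e.\ $M \models {\sim} A$. As $M$ is arbitrary, $\overline{A} \Vdash_{\lr} {\sim} A$. For (ii) I treat $\Delta$ as finite so that $\bigvee \Delta^{\sim}$ is a well-formed formula of $\LR$ (the case $\Delta = \emptyset$ being trivial, since both sides then reduce to $\Gamma \Vdash_{\lr} A$), and prove the two directions separately, in each fixing an arbitrary model $M$ that verifies the relevant antecedent premises. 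Left to right: assume $\Gamma \Vdash_{\lr} \bigvee \Delta^{\sim} \vee A$ and let $M$ verify $\Gamma \cup \Delta$; since $v_M(B) = 1$ for each $B \in \Delta$, the boxed property gives $v_M({\sim} B) = 0$ for each such $B$, so by the $\max$-semantics of $\vee$ we get $v_M(\bigvee \Delta^{\sim}) = 0$; as $M$ also verifies $\Gamma$, the hypothesis forces $1 = v_M(\bigvee \Delta^{\sim} \vee A) = \max(0, v_M(A))$, hence $v_M(A) = 1$.

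For the converse I would assume $\Gamma \cup \Delta \Vdash_{\lr} A$, let $M$ verify $\Gamma$, and split on whether $M$ verifies all of $\Delta$: if it does, the hypothesis yields $v_M(A) = 1$ and so $v_M(\bigvee \Delta^{\sim} \vee A) = 1$; if some $B \in \Delta$ has $v_M(B) \neq 1$, then $v_M({\sim} B) = 1$, so already $v_M(\bigvee \Delta^{\sim}) = 1$ and hence $v_M(\bigvee \Delta^{\sim} \vee A) = 1$. Either way $M \models \bigvee \Delta^{\sim} \vee A$. I do not anticipate a genuine obstacle: this is a deduction-theorem-style equivalence and the argument is a direct computation on the three-valued tables. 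The only points meriting explicit care are the finiteness of $\Delta$ (so that $\bigvee \Delta^{\sim}$ is a formula) together with the trivial empty-$\Delta$ edge case, and the observation that it is the behaviour of ${\sim}$ on the value $u$ — not that of $\overline{\phantom{A}}$ — that is doing the work.
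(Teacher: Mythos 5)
Your proof is correct; the paper in fact states this as a \emph{Fact} with no proof at all, treating it as immediate from the truth tables, and your direct semantic computation (in particular the observation that $v_M({\sim}B)=1$ exactly when $v_M(B)\in\{0,u\}$, which is what lets premises cross the turnstile) is precisely the intended justification. The only caveats worth keeping are the ones you already flag: $\Delta$ must be finite for $\bigvee\Delta^{\sim}$ to be a formula (the paper invokes compactness at the points of use), and the empty-$\Delta$ case is handled by convention.
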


$\lr$ is obviously a compact Tarski logic.

We say that $\Gamma \subseteq \mathcal{L}$ is $\mathcal{R}$-consistent iff there is no $A$ such that $\Gamma \vdash_{\mathcal{R}} A, \overline{A}$.

\begin{lemma}\label{rthenl}
Where $\Gamma \cup \{A\} \subseteq \mathcal{L}$,
\begin{enumerate}
\item $\Gamma\vdash_{\mathcal{R}}A$ implies $\Gamma \cup \mathcal{R}^{\sim} \Vdash_{\lr}A$
\item if $\Gamma$ is $\mathcal{R}$-consistent, $\Gamma \cup \mathcal{R}^{\sim} \Vdash_{\lr}A$ implies $\Gamma\vdash_{\mathcal{R}}A$.
\end{enumerate}
\end{lemma}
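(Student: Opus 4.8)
The plan is to prove both directions by relating $\mathcal{R}$-deductions to $\lr$-derivations via the translation $\mathcal{R} \mapsto \mathcal{R}^{\sim}$, using Fact~\ref{fact:ded:lr}(ii) as the main bridge. The key observation driving everything is that a rule $A_1, \ldots, A_n \rightarrow B$ in $\mathcal{R}$ becomes the formula ${\sim} A_1 \vee \ldots \vee {\sim} A_n \vee B$ in $\mathcal{R}^{\sim}$, and by Fact~\ref{fact:ded:lr}(ii) this formula is $\lr$-equivalent to the statement that from $A_1, \ldots, A_n$ one may infer $B$; that is, $\mathcal{R}^{\sim} \ni ({\sim} A_1 \vee \ldots \vee {\sim} A_n \vee B)$ together with $A_1, \ldots, A_n$ yields $B$ in $\lr$. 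So each application of a rule in an $\mathcal{R}$-deduction can be simulated by one use of the corresponding formula in $\mathcal{R}^{\sim}$.

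For item~1, I would argue by induction on the length of an $\mathcal{R}$-deduction $B_1, \ldots, B_m$ from $\Gamma$ witnessing $\Gamma \vdash_{\mathcal{R}} A$. At each step, either $B_i \in \Gamma$ (in which case $\Gamma \cup \mathcal{R}^{\sim} \Vdash_{\lr} B_i$ trivially), or $B_i$ follows from earlier $B_j$'s by a rule $A_1, \ldots, A_n \rightarrow B_i \in \mathcal{R}$. In the latter case the induction hypothesis gives $\Gamma \cup \mathcal{R}^{\sim} \Vdash_{\lr} A_k$ for each antecedent $A_k$, and since the translated rule ${\sim} A_1 \vee \ldots \vee {\sim} A_n \vee B_i$ is a member of $\mathcal{R}^{\sim}$, repeated use of Fact~\ref{fact:ded:lr}(ii) (peeling off one ${\sim} A_k$ at a time against the corresponding $A_k$) yields $\Gamma \cup \mathcal{R}^{\sim} \Vdash_{\lr} B_i$. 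Here I would lean on $\lr$ being a compact Tarski logic so that the finitely many established premises can be combined.

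For item~2, the converse under the $\mathcal{R}$-consistency hypothesis, the natural approach is to reason semantically via the $\lr$-models. I expect this to be the main obstacle, since the reverse simulation is where the three-valued structure really matters: one must show that $\mathcal{R}$-consistency of $\Gamma$ rules out the degenerate cases. The plan is to take an $\mathcal{R}$-consistent $\Gamma$ and construct a canonical $\lr$-model $M$ of $\Gamma \cup \mathcal{R}^{\sim}$ in which exactly the $\mathcal{R}$-consequences of $\Gamma$ get value $1$: concretely, set $v_M(A) = 1$ for $A \in \mathcal{L}$ with $\Gamma \vdash_{\mathcal{R}} A$, assign $0$ where $\Gamma \vdash_{\mathcal{R}} \overline{A}$, and $u$ otherwise, checking this is well-defined precisely because $\Gamma$ is $\mathcal{R}$-consistent (so no $A$ gets both $\Gamma \vdash_{\mathcal{R}} A$ and $\Gamma \vdash_{\mathcal{R}} \overline{A}$). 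I would then verify that $M$ respects the truth table for $\overline{\phantom{A}}$ and that $M \models \mathcal{R}^{\sim}$: for a translated rule ${\sim} A_1 \vee \ldots \vee {\sim} A_n \vee B$, if every $A_k$ has value $1$ then $\Gamma \vdash_{\mathcal{R}} A_k$ for each $k$, whence $\Gamma \vdash_{\mathcal{R}} B$ by the rule and so $v_M(B) = 1$, making the disjunction true; otherwise some $A_k$ has value $0$ or $u$, forcing ${\sim} A_k = 1$ by the truth table for ${\sim}$ and again making the disjunction true.

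Finally, given such a model $M$ with $M \models \Gamma \cup \mathcal{R}^{\sim}$, the assumption $\Gamma \cup \mathcal{R}^{\sim} \Vdash_{\lr} A$ forces $M \models A$, i.e.\ $v_M(A) = 1$, which by construction of $M$ means exactly $\Gamma \vdash_{\mathcal{R}} A$, as required. The delicate points to watch are that $A \in \mathcal{L}$ (so the clause $v_M(A) = v(A)$ applies and the three-valued machinery for ${\sim}$ and $\vee$ does not interfere) and that the construction genuinely produces a legitimate valuation function in the sense of the Definition preceding Fact~\ref{fact:neg:sim}; the $\mathcal{R}$-consistency hypothesis is exactly what guarantees the $\overline{\phantom{A}}$-clause is consistently satisfiable, which is why item~2 (unlike item~1) needs it.
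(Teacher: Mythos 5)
Your proposal is correct and follows essentially the same route as the paper: item~1 by induction on the length of the $\mathcal{R}$-deduction (the paper verifies the rule step directly against the truth tables in an arbitrary model, whereas you repackage the same observation via Fact~\ref{fact:ded:lr}(ii), which is equivalent), and item~2 by the same canonical three-valued counter-model assigning $1$, $0$, or $u$ according to whether $\Gamma \vdash_{\mathcal{R}} B$, $\Gamma \vdash_{\mathcal{R}} \overline{B}$, or neither, with $\mathcal{R}$-consistency guaranteeing well-definedness and the $\overline{\phantom{A}}$-clause. No substantive differences.
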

\begin{proof}
\textit{Ad 1.} Simple induction on the number of proof steps. We show the induction step. Let $M$ be a model of $\Gamma \cup \mathcal{R}^{\sim}$. Suppose $A$ follows by means of the application of a rule $A_1, \ldots, A_n \rightarrow B$. By the induction hypothesis, $M \models A_1, \ldots, A_n$. Also, $M \models {\sim} A_1 \vee \ldots \vee {\sim}A_n \vee B$. Hence, with the truth-tables for ${\sim}$ and $\vee$, $M \models B$.

\textit{Ad 2.} Suppose $\Gamma \nvdash_{\mathcal{R}} A$. We now construct a counter-model $M$ of $\Gamma \cup \mathcal{R}^{\sim}$ for $A$ as follows. Let 
\[ v : B \mapsto \left\{ \array{cc} 1 & \Gamma \vdash_{\mathcal{R}} B \\ 0 & \Gamma \vdash_{\mathcal{R}} \overline{B} \\ u & \mbox{else} \endarray \right.
\]
\chr[2016-03-30]{Note that $v(A) \in \{u, 0\}$ and hence $M \not\models A$.} We have to show that $M$ is a model of $\Gamma \cup \mathcal{R}^{\sim}$. Since $\Gamma$ is $\mathcal{R}$-consistent, the definition warrants that the truth-table for $\overline{\phantom{A}}$ is respected by $v$. We thus only have to check whether $M$ verifies all formulas in $\Gamma \cup \mathcal{R}^{\sim}$. As for $\Gamma$ this holds trivially since every $B \in \Gamma$ is such that $\Gamma \vdash_{\mathcal{R}} B$ and thus $v(B) = 1$. Let now $A_1, \ldots, A_n \rightarrow B \in \mathcal{R}$. We have to check whether $M \models \bigl(\bigvee_{i=1}^n{\sim}A_i \bigr) \vee B$. Assume the opposite. Thus $v_M(A_i) = 1$ ($1 \le i \le n$) and $v_M(B)\in \{0,u\}$. But then $\Gamma \vdash_{\mathcal{R}} A_{i}$ ($1 \le i \le n$) and thus $\Gamma \vdash_{\mathcal{R}} B$. Hence, $v_{M}(B) = 1$, a contradiction.
\end{proof}

We say that a $\Gamma$ is $\lr$-consistent, if there is a $A \in \LR$ for which $\Gamma \not\Vdash_{\lr} A$.\footnote{Or equivalently and analogous to the $\mathcal{R}$-consistency: if there is no $A\in \mathcal{L}_{\bf L}$ s.t.\ $\Gamma\not \Vdash_{\lr} A,\lnot A$.}

\begin{lemma}\label{lem:cons}
 Where $\Gamma \subseteq \mathcal{L}$, if $\Gamma$ is $\mathcal{R}$-consistent then $\Gamma \cup \mathcal{R}^{\sim}$ is $\lr$-consistent.
\end{lemma}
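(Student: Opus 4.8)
The plan is to prove the contrapositive: I would show that if $\Gamma \cup \mathcal{R}^{\sim}$ is $\lr$-\emph{inconsistent}, then $\Gamma$ is $\mathcal{R}$-inconsistent. Recall that by the (stated equivalent) definition, $\Gamma$ is $\mathcal{R}$-consistent iff there is no $A \in \mathcal{L}$ with $\Gamma \vdash_{\mathcal{R}} A, \overline{A}$, and $\Gamma \cup \mathcal{R}^{\sim}$ is $\lr$-consistent iff it does not entail every formula. Since $\lr$ is a (3-valued) logic whose only designated value is $1$, the natural route is to establish inconsistency via the existence versus non-existence of a model: $\Gamma \cup \mathcal{R}^{\sim}$ is $\lr$-consistent iff it has a model $M$ (i.e.\ $v_M$ assigns $1$ to every member of $\Gamma \cup \mathcal{R}^{\sim}$). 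Indeed, if there is a model, then no formula that is false in it is entailed, so the set is consistent; conversely if there is no model, everything is vacuously entailed.

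The key step, then, is to exhibit an $\lr$-model of $\Gamma \cup \mathcal{R}^{\sim}$ under the assumption that $\Gamma$ is $\mathcal{R}$-consistent. Here I would reuse \emph{exactly} the canonical valuation already constructed in the proof of Lemma \ref{rthenl}, part 2: set
\[
v : B \mapsto \left\{ \array{cc} 1 & \Gamma \vdash_{\mathcal{R}} B \\ 0 & \Gamma \vdash_{\mathcal{R}} \overline{B} \\ u & \mbox{else} \endarray \right.
\]
The $\mathcal{R}$-consistency of $\Gamma$ guarantees that the two clauses do not clash (there is no $B$ with $\Gamma \vdash_{\mathcal{R}} B$ \emph{and} $\Gamma \vdash_{\mathcal{R}} \overline{B}$), so $v$ is well defined and respects the truth-table for $\overline{\phantom{A}}$, i.e.\ it induces a legitimate model $M$. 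The verification that $M$ satisfies every formula in $\Gamma \cup \mathcal{R}^{\sim}$ is precisely the computation carried out in Lemma \ref{rthenl}.2: every $B \in \Gamma$ gets value $1$ since $\Gamma \vdash_{\mathcal{R}} B$, and for each translated rule ${\sim} A_1 \vee \ldots \vee {\sim} A_n \vee B$, if all antecedents were true then $\Gamma \vdash_{\mathcal{R}} A_i$ for each $i$, forcing $\Gamma \vdash_{\mathcal{R}} B$ and hence $v_M(B) = 1$, so the disjunction is satisfied.

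Having produced a model $M$ of $\Gamma \cup \mathcal{R}^{\sim}$, I would conclude consistency: pick any formula false in $M$ — for instance, choose some $B$ with $v(B) = 1$ and note $v_M(\overline{B}) = 0$, so $\overline{B}$ is not $\lr$-entailed (if $\Gamma$ has a theorem at all; otherwise any atom witnesses non-triviality). Thus there is an $A \in \LR$ with $\Gamma \cup \mathcal{R}^{\sim} \not\Vdash_{\lr} A$, which is exactly $\lr$-consistency.

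The main obstacle, and the only place requiring care, is the soundness of the canonical valuation, namely checking that $v$ genuinely verifies the translated rules $\mathcal{R}^{\sim}$ and respects the Kleene truth-table for $\overline{\phantom{A}}$ on the three-valued range $\{0,u,1\}$. This is where $\mathcal{R}$-consistency is essential, and it is exactly the argument already given in Lemma \ref{rthenl}.2, so the present lemma follows with essentially no new work — the proof reduces to reobserving that the counter-model construction there is in fact a bona fide model of $\Gamma \cup \mathcal{R}^{\sim}$ whenever $\Gamma$ is $\mathcal{R}$-consistent, and that the existence of any model witnesses $\lr$-consistency.
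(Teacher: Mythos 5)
Your proposal is correct and matches the paper's argument in substance: the paper proves this lemma in one line by invoking Lemma \ref{rthenl}.2 (whose proof is exactly the canonical-valuation construction you re-run), concluding that $\Gamma \cup \mathcal{R}^{\sim} \nVdash_{\lr} A, \overline{A}$ for any $A$ whenever $\Gamma \nvdash_{\mathcal{R}} A, \overline{A}$. You simply unfold that cited lemma into its underlying model-existence argument, so the mathematical content is the same.
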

\begin{proof}
  Suppose $\Gamma$ is $\mathcal{R}$-consistent. Then $\Gamma \nvdash_{\mathcal{R}} A, \overline{A}$ for any $A \in \mathcal{L}$. By Lemma \ref{rthenl}, also $\Gamma \cup \mathcal{R}^{\sim} \nVdash_{\lr} A, \overline{A}$ for any $A \in \mathcal{L}$.  
\end{proof}

% \begin{corol}
% If $\Gamma\vdash_{\lr}\bigvee \Theta^{\lnot}$ then $\Gamma\cup \Theta\setminus \{A\}\vdash_{\mathcal{R}}\lnot A$ for every $A\in \Theta$.
% \end{corol}

\begin{lemma}\label{lemma:lr:cflfree}
Where $\Gamma \subseteq \mathcal{L}$: $\Delta^{\sim} \subseteq \Omega_{Ab}^{\sim} \setminus \phi$ for some $\phi \in \Phi_{\lr}(\Gamma\cup \mathcal{R}^{\sim})$, if $\Delta$ is conflict-free \crem[2016-03-30]{and closed} in \ABAR.
\end{lemma}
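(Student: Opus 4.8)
The plan is to prove the statement by contraposition, after reformulating the desired conclusion in terms of choice sets. By Fact~\ref{choicesetfundamental}.1, there is a $\phi\in\Phi_{\lr}(\Gamma\cup\mathcal{R}^{\sim})$ with $\Delta^{\sim}\subseteq\Omega_{Ab}^{\sim}\setminus\phi$ (i.e.\ $\phi\cap\Delta^{\sim}=\emptyset$) if and only if $\Omega_{Ab}^{\sim}\setminus\Delta^{\sim}$ is a choice set over $\Sigma_{\lr}(\Gamma\cup\mathcal{R}^{\sim})$, which in turn holds iff no $\Theta\in\Sigma_{\lr}(\Gamma\cup\mathcal{R}^{\sim})$ satisfies $\Theta\subseteq\Delta^{\sim}$. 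So it suffices to show: if some $\Theta\in\Sigma_{\lr}(\Gamma\cup\mathcal{R}^{\sim})$ has $\Theta\subseteq\Delta^{\sim}$, then $\Delta$ is not conflict-free in \ABAR. This mirrors Lemma~\ref{lem:AL:ABA:sem:eq:2}, but now run across the $\lr$-translation.

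So first I would fix such a $\Theta$. Since $\Theta\subseteq\Delta^{\sim}\subseteq\Omega_{Ab}^{\sim}$ and $\Theta\neq\emptyset$, write $\Theta=(\Theta')^{\sim}$ for some nonempty $\Theta'\subseteq\Delta\subseteq Ab$. From $\Theta\in\Sigma_{\lr}(\Gamma\cup\mathcal{R}^{\sim})$ we have $\Gamma\cup\mathcal{R}^{\sim}\Vdash_{\lr}\bigvee(\Theta')^{\sim}$. The key step is to show that $\Gamma\cup\Theta'$ is $\mathcal{R}$-inconsistent. I would do this by contradiction: if $\Gamma\cup\Theta'$ were $\mathcal{R}$-consistent, then by Lemma~\ref{lem:cons} the set $\Gamma\cup\Theta'\cup\mathcal{R}^{\sim}$ would have an $\lr$-model $M$. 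But $M\models\Theta'$ forces $v_M(A)=1$, hence $v_M({\sim}A)=0$, for every $A\in\Theta'$, so $v_M\bigl(\bigvee(\Theta')^{\sim}\bigr)=0$; this contradicts $\Gamma\cup\mathcal{R}^{\sim}\Vdash_{\lr}\bigvee(\Theta')^{\sim}$. (Equivalently, one may argue via Fact~\ref{fact:ded:lr}(ii) that $\Gamma\cup\mathcal{R}^{\sim}\cup\Theta'\Vdash_{\lr}{\sim}B$ for any $B\in\Theta'$, while $B$ is a premise, and $B$ together with ${\sim}B$ is $\lr$-inconsistent.) Thus $\Gamma\cup\Theta'$ is $\mathcal{R}$-inconsistent, and since $\Theta'\subseteq\Delta$, monotonicity of $\vdash_{\mathcal{R}}$ gives that $\Gamma\cup\Delta$ is $\mathcal{R}$-inconsistent as well; i.e.\ there is an $A\in Ab$ with $\Gamma\cup\Delta\vdash_{\mathcal{R}}A$ and $\Gamma\cup\Delta\vdash_{\mathcal{R}}\overline{A}$, so $\Delta$ is not conflict-free.

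I expect the main obstacle to be conceptual rather than computational: it lies in the mismatch between the external negation ${\sim}$ used to form the abnormalities $\Omega_{Ab}^{\sim}$ and the ABA contrariness $\overline{\phantom{A}}$ that governs attacks and conflict. Because of the third value $u$, a derivation of ${\sim}B$ is strictly weaker than one of $\overline{B}$, so one cannot hope to read off a specific ABA-attack $\Gamma\cup\Delta\vdash_{\mathcal{R}}\overline{B}$ directly from $\Gamma\cup\mathcal{R}^{\sim}\Vdash_{\lr}\bigvee(\Theta')^{\sim}$. The way around this is to never isolate a single contrary but to route the whole argument through (in)consistency: $\lr$-inconsistency of $\Gamma\cup\Theta'\cup\mathcal{R}^{\sim}$ transfers to $\mathcal{R}$-inconsistency of $\Gamma\cup\Theta'$ via Lemma~\ref{lem:cons}, and $\mathcal{R}$-inconsistency of $\Gamma\cup\Delta$ coincides with non-conflict-freeness of $\Delta$ precisely because $\overline{\phantom{A}}$ is defined exactly on $Ab$. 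I would want to double-check that this last coincidence is genuinely what is being used, and that minimality of $\Theta$ (beyond $\Theta\in\Sigma_{\lr}$ merely guaranteeing $\Gamma\cup\mathcal{R}^{\sim}\Vdash_{\lr}\bigvee\Theta$) is not in fact needed for this direction.
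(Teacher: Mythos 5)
Your proposal is correct and follows essentially the same route as the paper's proof: contraposition via Fact~\ref{choicesetfundamental} to extract a $\Theta^{\sim}\in\Sigma_{\lr}(\Gamma\cup\mathcal{R}^{\sim})$ contained in $\Delta^{\sim}$, deriving $\lr$-inconsistency from $\bigvee\Theta^{\sim}$ together with the members of $\Theta$ as premises, and transferring this to $\mathcal{R}$-inconsistency (hence non-conflict-freeness) via Lemma~\ref{lem:cons}. The paper states the inconsistency step proof-theoretically using Fact~\ref{fact:ded:lr}(ii) — exactly the alternative you note parenthetically — and, as you suspected, it likewise uses only $\Gamma\cup\mathcal{R}^{\sim}\Vdash_{\lr}\bigvee\Theta^{\sim}$ and not the minimality of $\Theta$.
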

\begin{proof}
  Suppose $\Delta^{\sim} \not\subseteq \Omega_{Ab}^{\sim} \setminus \phi$ for all $\phi \in \Phi_{\lr}(\Gamma \cup \mathcal{R}^{\sim})$ and $\Delta \subseteq {\rm Ab}$.\crem[2016-03-30]{ is closed in \ABAR} By Fact \ref{choicesetfundamental}, $\Omega_{Ab}^{\sim} \setminus \Delta^{\sim}$ is not a choice set of $\Sigma_{\lr}(\Gamma \cup \mathcal{R}^{\sim})$ which means that there is a $\Theta^{\sim} \in \Sigma_{\lr}(\Gamma \cup \mathcal{R}^{\sim})$ such that $\Theta \subseteq \Delta$. Since $\Gamma \cup \mathcal{R}^{\sim} \Vdash_{\lr} \bigvee \Theta^{\sim}$, by Fact \ref{fact:ded:lr} also $\Gamma \cup (\Theta \setminus \{A\}) \cup \mathcal{R}^{\sim} \Vdash_{\lr} {\sim} A$ for any $A \in \Theta$. %
Hence, $\Gamma \cup \Delta \cup \mathcal{R}^{\sim}$ is not $\lr$-consistent since $\Gamma \cup \Delta \cup \mathcal{R}^{\sim} \Vdash_{\lr} A, {\sim} A$. Thus by Lemma \ref{lem:cons}, $\Gamma \cup \Delta$ is not $\mathcal{R}$-consistent \chr[2016-03-30]{and thus} \crem[2016-03-30]{Since $\Delta$ is closed in \ABAR,} $\Delta$ is not conflict-free.
\end{proof}

\begin{lemma}\label{prefthenchoice}
Where $\Gamma \subseteq \mathcal{L}$, $\Delta^{\sim} = \Omega_{Ab}^{\sim} \setminus \phi$ for some $\phi\in \Phi_{\lr}(\Gamma \cup \mathcal{R}^{\sim})$, $\Delta$ is naive in \ABAR. %  and $\Delta$ is closed
\end{lemma}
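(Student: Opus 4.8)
The plan is to prove the converse direction of Lemma~\ref{lemma:lr:cflfree}: given $\phi \in \Phi_{\lr}(\Gamma \cup \mathcal{R}^{\sim})$, I set $\Delta^{\sim} = \Omega_{Ab}^{\sim} \setminus \phi$ and show that the corresponding $\Delta \subseteq Ab$ is naive in \ABAR, i.e.\ maximally conflict-free. This splits naturally into two tasks: first, that $\Delta$ is conflict-free; second, that $\Delta$ is maximal. The key technical bridge throughout is Fact~\ref{fact:ded:lr}(ii), which converts between $\lr$-derivability of a disjunction of negated abnormalities and $\lr$-derivability from the corresponding assumptions, together with Lemma~\ref{rthenl} and Lemma~\ref{lem:cons}, which transfer between $\mathcal{R}$-derivability/consistency and $\lr$-derivability/consistency.

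For conflict-freeness, I would argue by contraposition and essentially mirror the proof of Lemma~\ref{ABAconflictfree}. Suppose $\Delta$ is \emph{not} conflict-free in \ABAR; then $\Gamma \cup \Delta \vdash_{\mathcal{R}} B, \overline{B}$ for some $B \in Ab$, so $\Gamma \cup \Delta$ is not $\mathcal{R}$-consistent. By Lemma~\ref{rthenl}(1) and Fact~\ref{fact:neg:sim}(i), $\Gamma \cup \Delta \cup \mathcal{R}^{\sim} \Vdash_{\lr} {\sim} B$ as well as ${\sim}{\sim} B$ (or directly that it is $\lr$-inconsistent). Using Fact~\ref{fact:ded:lr}(ii) to pull the assumptions in $\Delta$ out as a disjunction, together with compactness of $\lr$, I obtain that $\Gamma \cup \mathcal{R}^{\sim} \Vdash_{\lr} \bigvee \Theta^{\sim}$ for some finite $\Theta \subseteq \Delta$; taking $\Theta$ $\subset$-minimal gives $\Theta^{\sim} \in \Sigma_{\lr}(\Gamma \cup \mathcal{R}^{\sim})$. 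But $\phi$ is a choice set over $\Sigma_{\lr}(\Gamma \cup \mathcal{R}^{\sim})$, so $\phi \cap \Theta^{\sim} \neq \emptyset$, contradicting $\Theta^{\sim} \subseteq \Delta^{\sim} = \Omega_{Ab}^{\sim} \setminus \phi$. Hence $\Delta$ is conflict-free.

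For maximality, I would take any $A \in Ab \setminus \Delta$ and show that $\Delta \cup \{A\}$ is not conflict-free, which establishes that $\Delta$ is $\subseteq$-maximal among conflict-free sets. Since ${\sim}A \in \phi$, I invoke Fact~\ref{choicesetfundamental}.2 to obtain some $\Theta^{\sim} \in \Sigma_{\lr}(\Gamma \cup \mathcal{R}^{\sim})$ with $\{{\sim}A\} = \phi \cap \Theta^{\sim}$, so $\Theta^{\sim} \setminus \{{\sim}A\} \subseteq \Omega_{Ab}^{\sim} \setminus \phi = \Delta^{\sim}$. Because $\Gamma \cup \mathcal{R}^{\sim} \Vdash_{\lr} \bigvee \Theta^{\sim}$, Fact~\ref{fact:ded:lr}(ii) gives $\Gamma \cup (\Theta \setminus \{A\}) \cup \mathcal{R}^{\sim} \Vdash_{\lr} {\sim} A$, and by monotonicity $\Gamma \cup \Delta \cup \mathcal{R}^{\sim} \Vdash_{\lr} {\sim} A$. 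Now I need to transport this back to $\mathcal{R}$-derivability of $\overline{A}$ so as to exhibit a conflict witnessing that $\Delta$ already attacks $A$. This is the delicate point, and I expect it to be the main obstacle: Lemma~\ref{rthenl}(2) only transfers $\lr$-consequences back to $\mathcal{R}$ when the premise set is $\mathcal{R}$-consistent, and it is phrased for conclusions in $\mathcal{L}$ rather than for the superimposed formula ${\sim} A$. I would therefore exploit the relationship $\overline{A} \Vdash_{\lr} {\sim} A$ (Fact~\ref{fact:neg:sim}(i)) and the three-valued truth tables to convert the derivation of ${\sim}A$ into a derivation of $\overline{A}$ on the $\mathcal{R}$-side, using that $\Gamma \cup \Delta$ is $\mathcal{R}$-consistent (from the conflict-freeness just proved) so that Lemma~\ref{rthenl}(2) applies and yields $\Gamma \cup \Delta \vdash_{\mathcal{R}} \overline{A}$. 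This shows $\Delta \cup \{A\} \cup \Gamma \vdash_{\mathcal{R}} A, \overline{A}$, so $\Delta \cup \{A\}$ is not conflict-free, completing maximality. Care must also be taken that $\Delta$ is \emph{closed}, since naivety in ABA is defined for closed sets; I would verify closure separately, noting that any $A \in Ab$ with $\Gamma \cup \Delta \vdash_{\mathcal{R}} A$ cannot lie outside $\Delta$ without contradicting conflict-freeness via the attack just constructed.
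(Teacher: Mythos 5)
Your first half (conflict-freeness) is exactly the paper's argument: transfer the clash to $\lr$ via Lemma \ref{rthenl}.1, extract a minimal $\Theta^{\sim} \in \Sigma_{\lr}(\Gamma\cup\mathcal{R}^{\sim})$ with $\Theta^{\sim}\subseteq\Delta^{\sim}$ by Fact \ref{fact:ded:lr} and compactness, and contradict the fact that $\phi$ is a choice set. No issues there.

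The second half has a genuine gap at exactly the point you flag as ``delicate'': from $\Gamma\cup\Delta\cup\mathcal{R}^{\sim}\Vdash_{\lr}{\sim}A$ you cannot recover $\Gamma\cup\Delta\vdash_{\mathcal{R}}\overline{A}$. In the truth tables, ${\sim}A$ is designated whenever $v(A)\in\{0,u\}$ while $\overline{A}$ is designated only when $v(A)=0$, so ${\sim}A\not\Vdash_{\lr}\overline{A}$; Fact \ref{fact:neg:sim}(i) only gives the converse direction. Concretely, applying Lemma \ref{rthenl}.2's canonical valuation (which sets $v(B)=u$ when neither $B$ nor $\overline{B}$ is $\mathcal{R}$-derivable) to the conclusion ${\sim}A$ only yields $\Gamma\cup\Delta\nvdash_{\mathcal{R}}A$, not $\Gamma\cup\Delta\vdash_{\mathcal{R}}\overline{A}$. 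That $\Delta$ attacks every $A\in Ab\setminus\Delta$ is precisely the property that fails in general and that the paper buys later by imposing condition (EX) (Lemma \ref{lem:r1:r2:adm}); your argument would silently prove stability of $\Delta$ without (EX), which is too strong. Since your closure check is also routed ``via the attack just constructed,'' it inherits the same gap.

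Both halves can be repaired without the attack. For closure (which is what the paper actually proves here): if $\Gamma\cup\Delta\vdash_{\mathcal{R}}A$ for some $A\in Ab\setminus\Delta$, then ${\sim}A\in\phi$, and Fact \ref{choicesetfundamental}.2 plus Fact \ref{fact:ded:lr} give $\Gamma\cup\Delta\cup\mathcal{R}^{\sim}\Vdash_{\lr}{\sim}A$ as you say; but by Lemma \ref{rthenl}.1 also $\Gamma\cup\Delta\cup\mathcal{R}^{\sim}\Vdash_{\lr}A$, so the set is already $\lr$-inconsistent, hence by Lemma \ref{lem:cons} $\Gamma\cup\Delta$ is $\mathcal{R}$-inconsistent, contradicting the conflict-freeness just established --- no conversion to $\overline{A}$ is needed. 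For maximality, do not argue elementwise at all: if $\Delta\subsetneq\Delta'$ with $\Delta'$ conflict-free, Lemma \ref{lemma:lr:cflfree} yields $\Delta'^{\sim}\subseteq\Omega_{Ab}^{\sim}\setminus\phi'$ for some $\phi'\in\Phi_{\lr}(\Gamma\cup\mathcal{R}^{\sim})$, whence $\phi'\subsetneq\phi$ or $\phi'=\phi$; the $\subset$-minimality of the members of $\Phi_{\lr}(\Gamma\cup\mathcal{R}^{\sim})$ forces $\phi'=\phi$ and hence $\Delta'=\Delta$. This is how the paper gets Theorem \ref{thm:max:conff} ``immediately'' from the two lemmas together.
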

\begin{proof}
  Suppose $\Delta^{\sim} = \Omega_{Ab}^{\sim} \setminus \phi$ for some $\phi\in \Phi_{\lr}(\Gamma \cup \mathcal{R}^{\sim})$. %
  % We first prove that $\Delta$ is closed. Suppose for a contradiction that $\Gamma\cup\Delta\vdash_{\mathcal{R}} A$ for some $A\in (Ab\setminus \Delta)=\phi^{\sim}$. Take $\Theta\in \Sigma(\Gamma\cup\mathcal{R}^{\sim})$ s.t.\ $\{\sim A\}=\phi\cap\Theta$ (exists by Lemma \ref{choicesetfundamental}). This means that for every model $M$ of $\Gamma\cup\mathcal{R}^{\sim}$, $v_M(B)\in\{0,u\}$ for at least one $B\in \Theta^{\sim}$. But in every model $M'$ of $\Gamma\cup\mathcal{R}^{\sim}\cup\Delta$, $v_{M'}(C)=v_{M'}(A)=1$ for every $C\in \Theta\setminus\{A\}$ (since $\Delta\supseteq \Theta^{\sim}\setminus \phi^{\sim}$), contradiction.
  
We first prove that $\Delta$ is conflict-free. Suppose for a contradiction, there is a $B \in Ab$ such that $\Gamma \cup \Delta \vdash_{\mathcal{R}} B, \overline{B}$. By Lemma \ref{rthenl}, $\Gamma \cup \Delta \cup \mathcal{R}^{\sim} \Vdash_{\lr} B, \overline{B}$. Hence, $\Gamma \cup \Delta \cup \mathcal{R}^{\sim}$ is $\lr$-inconsistent and by Fact \ref{fact:ded:lr} and compactness, $\Gamma \cup \mathcal{R}^{\sim} \Vdash_{\lr} \bigvee \Theta^{\sim}$ for some finite $\Theta \subseteq \Delta$. Let $\Theta$ be $\subset$-minimal with this property, so that $\Theta^{\sim} \in \Sigma_{\lr}(\Gamma \cup \mathcal{R}^{\sim})$. Since $\phi \cap \Theta^{\sim} = \emptyset$ this is a contradiction to the fact that $\phi \in \Phi_{\lr}(\Gamma \cup \mathcal{R}^{\sim})$.

% Suppose there are some $\Delta',\{B\} \subseteq \Delta$ such that $\Gamma \cup \Delta' \vdash_{\mathcal{R}} \overline{B}$. By Lemma \ref{rthenl}, also $\Gamma \cup \mathcal{R}^{\sim} \cup \Delta' \Vdash_{\lr} \overline{B}$. By Fact \ref{fact:ded:lr}, we know that  $\Gamma \cup \mathcal{R}^{\sim} \Vdash_{\lr} \bigvee \Delta'^{\sim} \vee \overline{B}$ and hence $\Gamma \cup \mathcal{R}^{\sim} \Vdash_{\lr} \bigvee \Delta'^{\sim} \vee {\sim} B$. Now take a $\subset$-minimal $\Theta \subseteq \Delta' \cup \{B\}$ s.t.\ $\Gamma \cup \mathcal{R}^{\sim} \Vdash_{\lr} \bigvee \Theta^{{\sim}}$. Since $\Theta^{\sim} \in \Sigma_{\lr}(\Gamma \cup \mathcal{R}^{\sim})$ this contradicts $\Delta^{\sim} \cap \phi = \emptyset$. Hence, $\Delta$ is conflict-free.
We now show that $\Delta$ is closed. Assume for a contradiction that $\Gamma \cup \Delta \vdash_{\mathcal{R}} A$ for some $A \in Ab \setminus \Delta$. By Lemma \ref{rthenl}, $\Gamma \cup \Delta \cup \mathcal{R}^{\sim} \Vdash_{\lr} A$. Note that ${\sim}A \in \phi$. By \chr[2016-03-30]{Fact} \crem[2016-03-30]{Lemma} \ref{choicesetfundamental}, there is a $\Theta^{\sim} \in \Sigma_{\lr}(\Gamma \cup \mathcal{R}^{\sim})$ for which $\{{\sim}A\} = \phi \cap \Theta^{\sim}$. Since $\Gamma \cup \mathcal{R}^{\sim} \Vdash_{\lr} \bigvee \Theta^{\sim}$, by Fact \ref{fact:ded:lr}, $\Gamma \cup \mathcal{R}^{\sim} \cup \Theta \setminus \{A\} \Vdash_{\lr} {\sim}A$. By the monotonicity of $\lr$, $\Gamma \cup \mathcal{R}^{\sim} \cup \Delta \Vdash_{\lr} {\sim}A$. Thus, $\Gamma \cup \Delta \cup \mathcal{R}^{\sim}$ is not $\lr$-consistent which implies by Lemma \ref{lem:cons} that $\Gamma \cup \Delta$ is not $\mathcal{R}$-consistent. This contradicts the fact that $\Delta$ is conflict-free.
\end{proof}

The following theorem follows immediately in view of Lemma  \ref{lemma:lr:cflfree} and  Lemma \ref{prefthenchoice}:
\begin{theo}\label{thm:max:conff}
Where $\Gamma \subseteq \mathcal{L}$, $\Delta$ is a naive extension of \ABAR\ iff $\Delta^{\sim} = \Omega_{Ab}^{\sim} \setminus \phi$ for some $\phi \in \Phi_{\lr}(\Gamma \cup \mathcal{R}^{\sim})$.
\end{theo}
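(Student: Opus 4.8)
The plan is to prove the biconditional by splitting it into its two directions and noting that one direction is already fully supplied by a previous lemma, while the other needs one short maximality argument on top of it. For the right-to-left direction, I would simply observe that the hypothesis ``$\Delta^{\sim} = \Omega_{Ab}^{\sim} \setminus \phi$ for some $\phi \in \Phi_{\lr}(\Gamma \cup \mathcal{R}^{\sim})$'' is literally the antecedent of Lemma \ref{prefthenchoice}, whose conclusion is that $\Delta$ is naive in \ABAR. So this direction requires no further work.

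For the left-to-right direction, suppose $\Delta$ is naive. Since naivety entails conflict-freeness, Lemma \ref{lemma:lr:cflfree} applies and yields a $\phi \in \Phi_{\lr}(\Gamma \cup \mathcal{R}^{\sim})$ with $\Delta^{\sim} \subseteq \Omega_{Ab}^{\sim} \setminus \phi$. The remaining task is to upgrade this inclusion to an equality. To do so I would let $\Delta'$ be the assumption set with $(\Delta')^{\sim} = \Omega_{Ab}^{\sim} \setminus \phi$; by Lemma \ref{prefthenchoice}, $\Delta'$ is naive and in particular conflict-free. Because the map $A \mapsto {\sim}A$ is injective, the inclusion $\Delta^{\sim} \subseteq (\Delta')^{\sim}$ transfers to $\Delta \subseteq \Delta'$. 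Since $\Delta$ is maximally conflict-free and $\Delta'$ is conflict-free, maximality forces $\Delta = \Delta'$, and hence $\Delta^{\sim} = \Omega_{Ab}^{\sim} \setminus \phi$, as required.

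I do not expect any genuine obstacle here — the two supporting lemmas do all the heavy lifting, which is why the statement is billed as following ``immediately''. The only point deserving a line of care is the bookkeeping that bridges the two lemmas: Lemma \ref{lemma:lr:cflfree} delivers merely an \emph{inclusion} from conflict-freeness, whereas the theorem asserts an \emph{equality}, so the argument must invoke the maximality of the naive $\Delta$ (together with the conflict-freeness of the naive witness $\Delta'$ produced by Lemma \ref{prefthenchoice}) to close the gap. Ensuring that the translation $\Delta \mapsto \Delta^{\sim}$ is injective, so that set inclusions pass faithfully between the ABA side and the abnormality side, is the one small detail I would make explicit.
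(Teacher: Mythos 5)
Your proof is correct and takes the same route as the paper, which derives the theorem ``immediately'' from Lemmas \ref{lemma:lr:cflfree} and \ref{prefthenchoice}; your explicit maximality step (upgrading the inclusion supplied by Lemma \ref{lemma:lr:cflfree} to an equality via the conflict-free naive witness from Lemma \ref{prefthenchoice}) is precisely the bookkeeping the paper leaves implicit.
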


If we suppose requirement (EX), we can also prove Theorem \ref{aba2AL}.

\begin{lemma}\label{lem:r1:r2:adm}
Where $\Gamma \subseteq \mathcal{L}$, $\mathcal{R}$ satisfies (EX), and $\Delta^{\sim} = \Omega_{Ab}^{\sim}\setminus \phi$ for some $\phi\in \Phi_{\lr}(\Gamma \cup \mathcal{R}^{\sim})$, $\Delta$ is stable in \ABAR. %  and $\Delta$ is closed
\end{lemma}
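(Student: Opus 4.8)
The plan is to obtain stability essentially for free from the preceding lemma together with the standing assumption (EX). The key observation is that Lemma \ref{prefthenchoice} already establishes that any $\Delta$ with $\Delta^{\sim} = \Omega_{Ab}^{\sim}\setminus \phi$ for some $\phi \in \Phi_{\lr}(\Gamma \cup \mathcal{R}^{\sim})$ is naive in \ABAR. By definition a naive set is closed and (maximally) conflict-free, so two of the three conditions in the definition of stability---closedness and conflict-freeness---come directly from naivety, and need no further work.

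It remains only to verify the attack condition, namely that $\Delta$ attacks every $A \in Ab \setminus \Delta$. This is precisely where (EX) enters: since $\Delta$ is naive and $A \in Ab \setminus \Delta$, the hypothesis (EX) guarantees $\Gamma \cup \Delta \vdash_{\mathcal{R}} \overline{A}$, which is by definition the statement that $\Delta$ attacks $A$. Combining closedness, conflict-freeness, and this attack condition yields that $\Delta$ is stable in \ABAR.

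I expect no genuine obstacle here: the lemma is a short corollary of Lemma \ref{prefthenchoice} and the hypothesis (EX), whose entire purpose (as remarked in the text immediately preceding its statement) is exactly to ensure that every naive set is in fact stable. The only point that warrants a line of care is matching the formal definition of stability clause-by-clause against what naivety and (EX) respectively supply, so that all three requirements---closed, conflict-free, and attacking all outside assumptions---are explicitly discharged.
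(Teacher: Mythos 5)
Your proof is correct and follows exactly the paper's own argument: invoke Lemma \ref{prefthenchoice} to get naivety (hence closedness and conflict-freeness), then apply (EX) to obtain the attack on every $B \in Ab \setminus \Delta$. No differences worth noting.
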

\begin{proof}
  Suppose $\Delta^{\sim} = \Omega_{Ab}^{\sim} \setminus \phi$ for some $\phi\in \Phi_{\lr}(\Gamma \cup \mathcal{R}^{\sim})$. In view of Lemma \ref{prefthenchoice} we only need to show that $\Delta$ attacks all $B \in Ab \setminus \Delta$. 
Let thus $B \in Ab \setminus \Delta$. \crem[2016-03-30]{Hence, ${\sim}B \in \phi$. By Fact \ref{choicesetfundamental}, there is a $\Theta^{\sim} \in \Sigma_{\lr}(\Gamma \cup \mathcal{R}^{\sim})$ such that $\{{\sim} B\} = \phi \cap \Theta^{\sim}$. By Lemma \ref{lemma:lr:cflfree}, $\Delta \cup \{B\}$ is not conflict-free.} By (EX), $\Gamma \cup \Delta \vdash_{\mathcal{R}} \overline{B}$. Thus, $\Delta$ attacks $B$. 
% OLD PROOF: \\
% Suppose some $\Delta'\subseteq Ab$ s.t. $\Gamma \cup \Delta' \vdash_{\mathcal{R}} \overline A$ for some $A \in \Delta$. By Lemma \ref{rthenl}, $\Gamma \cup \mathcal{R}^{\sim} \cup \Delta' \Vdash_{\lr} \overline A$ and by Fact \ref{fact:ded:lr}, $\Gamma \cup \mathcal{R}^{\sim} \Vdash_{\lr} \bigvee\Delta'^{\sim} \vee \overline A$. By Fact \ref{fact:neg:sim}, $\Gamma \cup \mathcal{R}^{\sim} \Vdash_{\lr} \bigvee\Delta'^{\sim} \vee {\sim} A$. Hence, there is a $\Theta \subseteq \Delta' \cup \{A\}$ such that $\Gamma \cup \mathcal{R}^{\sim} \Vdash_{\lr} \bigvee \Theta^{\sim}$. Hence, there is a ${\sim} B \in \phi \cap \Theta^{\sim}$ and hence $B \in \Delta' \setminus \Delta$. By Lemma \ref{lemma:lr:cflfree}, $\Delta \cup \{B\}$ is not conflict-free. Hence, $\Gamma \cup \Delta \cup \{B\} \vdash_{\mathcal{R}} \overline C$ for some $C \in \Delta \cup \{B\}$. First case, $C \in \Delta$, then since $\vdash_{\mathcal{R}}$ satisfies (R1), $\Gamma \cup \Delta \vdash_{\mathcal{R}} \overline B$. Second case, $C = B$. Then due to property (R2), $\Gamma \cup \Delta \vdash_{\mathcal{R}} \overline B$. Thus, $\Delta$ attacks $\Delta'$.
\end{proof}

The following Corollary follows immediately in view of Theorem \ref{thm:max:conff} and Lemma \ref{lem:r1:r2:adm}.

\begin{corol}
  Where $\mathcal{R}$ satisfies (EX), each naive set is stable in \ABAR.
\end{corol}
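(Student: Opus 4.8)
The plan is to obtain the corollary by directly chaining the two results that immediately precede it, so the work consists entirely in lining up their hypotheses and conclusions. I would begin by fixing an arbitrary naive set $\Delta$ in \ABAR\ and seek a representation of $\Delta$ on which the stability argument can act. The natural tool is Theorem \ref{thm:max:conff}: its left-to-right direction tells us that any naive $\Delta$ must have the form $\Delta^{\sim} = \Omega_{Ab}^{\sim} \setminus \phi$ for some minimal choice set $\phi \in \Phi_{\lr}(\Gamma \cup \mathcal{R}^{\sim})$. This converts the ABA-side notion of naivety into the AL-side representation in terms of minimal choice sets over the minimal disjunctions of abnormalities derivable in $\lr$.

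With this representation in hand, the second step is to invoke Lemma \ref{lem:r1:r2:adm}. That lemma takes exactly the hypothesis $\Delta^{\sim} = \Omega_{Ab}^{\sim} \setminus \phi$ with $\phi \in \Phi_{\lr}(\Gamma \cup \mathcal{R}^{\sim})$ and, under the assumption that $\mathcal{R}$ satisfies (EX), concludes that $\Delta$ is stable in \ABAR. Since the corollary assumes $\mathcal{R}$ satisfies (EX), the hypotheses of the lemma are met, and stability of $\Delta$ follows at once. Because $\Delta$ was an arbitrary naive set, this establishes that every naive set is stable.

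There is essentially no obstacle to overcome here, since both the bridge from naivety to the choice-set form and the passage from the choice-set form to stability under (EX) are already available. The only points to watch are that I invoke the correct direction of Theorem \ref{thm:max:conff} (the one extracting a choice-set representation from a given naive set) and that the use of (EX) is flagged explicitly, as it is precisely the extra strength over the general Theorem \ref{aba2AL:niv} that upgrades naive sets to stable ones. It is worth noting, as motivation, that stability always implies naivety in \ABAR\ (a stable set is closed and conflict-free, and attacking every outside assumption forbids any conflict-free closed proper extension); together with the corollary this yields that, under (EX), the naive and stable sets coincide, which is the coincidence of extension types underlying Theorem \ref{aba2AL}.
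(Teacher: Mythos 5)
Your proof is correct and follows exactly the paper's route: the paper derives this corollary immediately by combining Theorem \ref{thm:max:conff} (which gives the choice-set representation of naive sets) with Lemma \ref{lem:r1:r2:adm} (which upgrades that representation to stability under (EX)). Your additional remark that stability implies naivety is a harmless bonus not needed for the statement itself.
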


In \cite{Bondarenko1997}, the following was defined resp.\ proven:
\begin{definition}
An assumption-based framework is normal iff every naive set of assumptions is stable.
\end{definition}

\begin{theo}
For any normal assumption-based framework, for any set of assumptions $\Delta\subseteq Ab$, 
 $\Delta$ is naive iff 
 $\Delta$ is stable iff
 $\Delta$ is preferred.
\end{theo}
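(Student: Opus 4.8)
The plan is to prove the three-way equivalence among naive, stable, and preferred sets for any normal assumption-based framework by establishing a cycle of implications. First I would recall from the definitions that every stable set is closed and conflict-free, and moreover maximally conflict-free (since it attacks everything outside it, no proper superset can remain conflict-free); hence every stable set is naive. This gives \emph{stable $\Rightarrow$ naive} for free, without using normality. The normality hypothesis supplies the converse \emph{naive $\Rightarrow$ stable} directly, since normality is precisely the stipulation that every naive set is stable. Thus naive and stable coincide, and the bulk of the work reduces to relating these to preferred sets.

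Next I would connect stability to admissibility and preferredness. The key observation is that every stable set is admissible: it is conflict-free by definition, and for any closed $\Delta'$ attacking $\Delta$, some assumption of $\Delta'$ lies outside $\Delta$ (otherwise $\Delta'\subseteq\Delta$ would make $\Delta$ attack itself, contradicting conflict-freeness), and $\Delta$ attacks that assumption because $\Delta$ attacks everything in $Ab\setminus\Delta$. So \emph{stable $\Rightarrow$ admissible}. Combining this with the fact that a stable set is maximally conflict-free, I would argue it is in fact maximally admissible, hence preferred: any admissible superset would be conflict-free and strictly larger, contradicting maximality among conflict-free sets. This yields \emph{stable $\Rightarrow$ preferred}.

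For the remaining direction, \emph{preferred $\Rightarrow$ stable} (equivalently \emph{preferred $\Rightarrow$ naive}), I would show that every preferred set is naive and then invoke normality. A preferred set $\Delta$ is admissible, hence conflict-free; I would show it is maximally conflict-free. Suppose not, so there is a conflict-free $\Delta'\supsetneq\Delta$; the natural route is to show $\Delta'$ (or its closure) is admissible, contradicting the maximality of $\Delta$ among admissible sets. This is the step I expect to be the main obstacle, since moving from ``conflict-free and strictly larger'' to ``admissible'' is not automatic: one must verify that the larger set can still defend itself against all attackers. Here I would exploit normality in the form established earlier, namely that naive sets are stable and therefore attack everything outside them; by taking a naive set $\Delta''\supseteq\Delta'$ (which exists by a Zorn-style maximality argument on conflict-free sets), $\Delta''$ is stable, hence admissible, hence $\Delta\subseteq\Delta'\subseteq\Delta''$ with $\Delta''$ admissible and strictly larger than $\Delta$, contradicting that $\Delta$ is preferred. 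This closes the cycle \emph{naive $\Leftrightarrow$ stable $\Rightarrow$ preferred $\Rightarrow$ naive}, giving the full equivalence. The only subtlety to handle carefully is the closedness condition in the definition of admissibility and the existence of a naive superset, both of which follow from standard maximality arguments over the conflict-free sets.
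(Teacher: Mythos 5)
Your argument is correct. Note, however, that the paper does not prove this theorem at all: it is stated as a result imported from Bondarenko et al.\ (1997) (the text reads ``In [Bondarenko et al.\ 1997], the following was defined resp.\ proven''), so there is no in-paper proof to compare against. As a standalone reconstruction your cycle of implications is sound: stable sets are maximally conflict-free by monotonicity of $\vdash_{\mathcal{R}}$ (hence naive), normality gives the converse, stable sets are admissible and, being maximally conflict-free, maximally admissible (hence preferred), and a preferred set extends to a naive---hence, by normality, stable and admissible---superset, which must equal it by maximality. The two points you flag as subtleties are exactly the ones that need a sentence each: the union of a chain of conflict-free sets is conflict-free because $\mathcal{R}$-deductions are finite, and a maximally conflict-free set is automatically closed because adjoining a derivable assumption does not change the set of $\mathcal{R}$-consequences (a cut argument), so Zorn does deliver a \emph{naive} superset as required.
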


\begin{corol}\label{thm:prefthenchoice}
If an assumption-based framework satisfies (EX), $\Gamma \subseteq \mathcal{L}$, $\Delta$ is a preferred, stable and naive extension of \ABAR\ iff $\Delta^{\sim} = \Omega_{Ab}^{\sim}\setminus \phi$ for some $\phi \in \Phi_{\lr}(\Gamma \cup \mathcal{R}^{\sim})$.
\end{corol}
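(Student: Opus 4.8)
The plan is to recognize this corollary as a direct composition of results already in hand, so that essentially no new argument is needed beyond invoking them in the right order. First I would note that the hypothesis (EX) is exactly what triggers the (unnamed) corollary stated just above, according to which every naive set in \ABAR\ is stable. By the definition of a normal framework, this is precisely the statement that \ABAR\ is normal.

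Next I would apply the theorem of \cite{Bondarenko1997} recalled immediately before the statement, which asserts that in any normal assumption-based framework the three classes coincide: $\Delta$ is naive iff $\Delta$ is stable iff $\Delta$ is preferred. Consequently the conjunctive condition ``$\Delta$ is a preferred, stable and naive extension'' collapses to the single condition ``$\Delta$ is naive''. Indeed, if $\Delta$ is naive then normality already forces it to be stable and preferred as well, while conversely being all three trivially entails being naive. Thus naivety is the strongest of the three notions here, and the threefold conjunction contributes nothing beyond plain naivety.

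Finally I would invoke Theorem \ref{thm:max:conff}, which already characterizes the naive extensions of \ABAR\ as exactly those $\Delta$ with $\Delta^{\sim} = \Omega_{Ab}^{\sim} \setminus \phi$ for some $\phi \in \Phi_{\lr}(\Gamma \cup \mathcal{R}^{\sim})$. Chaining the two equivalences — first (preferred, stable and naive) iff naive, and then naive iff the choice-set condition — yields the claim. There is no substantive obstacle in this proof; the only point deserving a moment's care is the collapse of the conjunction, which relies precisely on the fact that under (EX)-induced normality naivety entails both stability and preferredness, so that no extra extensions are admitted or excluded by adding the other two requirements.
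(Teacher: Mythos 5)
Your proof is correct and follows exactly the chain the paper intends: (EX) yields normality via the preceding corollary, the cited theorem of Bondarenko et al.\ collapses naive, stable and preferred into one notion, and Theorem \ref{thm:max:conff} supplies the characterization of naive sets. This is the same route the paper takes (it states the corollary without explicit proof precisely because it follows by this composition), so nothing further is needed.
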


We are now in a position to prove our two main theorems in this section. \medskip

\begin{proof}[Proof of Theorems \ref{aba2AL:niv} and \ref{aba2AL}]
% [Ad.1, $\Rightarrow$] Suppose that $\ABAR \vdash^{\cup}_{\sf sem} A$, i.e.\ there is a $\Delta \in {\sf sem}(\ABAR)$ s.t.\ $\Gamma\cup\Delta \vdash_{\mathcal{R}} A$. By Corollary \ref{thm:prefthenchoice} we know that $\Delta = Ab \setminus \phi^{\sim}$ for some $\phi \in \Phi_{\lr}(\Gamma \cup \mathcal{R}^{\sim})$. But then by Theorem \ref{ns}, we have that $\Gamma \cup \mathcal{R}^{\sim} \vdash^{\Omega^{\sim}_{Ab},\lr}_{ns} A$.
[Theorem \ref{aba2AL}.1, $\Leftarrow$]: Suppose that $\Gamma \cup \mathcal{R}^{\sim} \Vdash^{\Omega^{\sim}_{Ab},\lr}_{\sf ns} A$. By Theorem \ref{ns}, there is a $\Delta^{\sim} \subseteq \Omega_{Ab}^{\sim}\setminus \phi$ for some $\phi\in \Phi_{\lr}(\Gamma \cup \mathcal{R}^{\sim})$ s.t.\ $\Gamma \cup \mathcal{R}^{\sim} \Vdash_{\lr} A\lor \bigvee \Delta^{\sim}$. By the monotonicity of ${\lr}$ and Fact \ref{fact:ded:lr}, $\Theta \cup \Gamma \cup \mathcal{R}^{\sim} \Vdash_{\lr} A$ where $\Theta^{\sim} = \Omega_{Ab}^{\sim} \setminus \phi$. By Lemma \ref{lem:r1:r2:adm}, $\Theta$ is stable. Thus, $\Gamma \cup \Theta$ is $\mathcal{R}$-consistent. By Lemma \ref{rthenl}, $\ABAR \vdash^{\cup}_{\sf sem} A$.

The other direction and the other cases are shown analogously.
\end{proof}

\section{Conclusion}
\label{sec:conclusion}

In this paper we provided translations between several prominent systems in nonmonotonic logic (see Fig.~\ref{fig:roadmap} for an overview). In this conclusion we discuss some benefits. 

In view of the translation of ALs into ABA we know that ALs can be understood as forms of formal argumentation. In view of the fact that ALs are equi-expressive with the syntactically characterised preferential semantics in Sec.~\ref{sec:KLM:Mak} and Makinson's default assumptions, the same can be said about the latter two frameworks. Since a broad variety of defeasible reasoning forms in a wide range of application contexts have been explicated within the ALs family (see Sec.~\ref{sec:adaptive-logics}), all these reasoning forms are now available in the domain of formal argumentation. This may lead to further refinements. For instance, once embedded in ASPIC$^+$ we gain rich resources to express preferences and priorities.

In view of the other direction, from a subclass of ABA to ALs, we know that this class can be understood in terms of the model-theoretic tools provided by KLM-style preferential semantics or, alternatively, as consistency management in terms of maximal consistent subsets as provided by default assumptions. This also means that meta-theoretic insights from, for instance, ALs become available for this subclass of ABA. For example, the computational complexity of ALs is well-understood \cite{verdee2009adaptive,odintsov2013computability}. Moreover, properties of the AL consequence relations apply to this class of ABA. For instance, we know that adaptive consequence relations are cumulative (in the notation of Section \ref{sec:adaptive-logics}, where ${\sf AL}$ is an adaptive logic, $\Gamma,\Delta, \{A\} \subseteq \mathcal{L}$, and $\Gamma \vdash_{\sf AL} B$ for all $B \in \Delta$, $\Gamma \vdash_{\sf AL} A$ iff $\Gamma \cup \Delta \vdash_{\sf AL}A$). For a study of meta-theoretic properties of ALs see \cite{universal,strasser:ALDR}. Finally, besides the available dialogue-based methods to model argumentative reasoning processes (e.g.\ \cite{dung2006dialectic}), now the dynamic proof theory of adaptive logics can \jesse{also} be used for this purpose. 

Finally, we complete the circle between ABA and ASPIC$^{+}$ (without priorities/preferences) by providing a translation from the latter to the former, whereas the other direction has been presented in \cite{Prakken2010}. As a side-product this provides a way to phrase the defeasible rules of ASPIC$^{+}$ as strict rules. This shows that the strict fragment of ASPIC$^{+}$ (without strict rules and thus without rebuttals and undercuts) is equi-expressive with full ASPIC$^{+}$. Such insights are conceptually interesting and may simplify future meta-theoretic investigations into ASPIC$^{+}$.

In future work we intend to generalise our investigations to approaches with priorities and preferences as provided in ASPIC$^{+}$ and some generalisations of ALs. An interesting question will be, for instance, whether full \jesse{ASPIC$^{+}$} is translatable into lexicographic ALs \cite[ch.5]{strasser:ALDR} or whether the latter can be translated to ABA or ASPIC$^{+}$.

\bibliographystyle{abbrv}     
{\fontsize{9.0pt}{10.0pt} \selectfont          
}

\end{document}